\pdfoutput=1
\documentclass[letterpaper]{article}
\usepackage[preprint]{aaai2027}
\usepackage[hyphens]{url}
\usepackage{graphicx}
\usepackage{fontawesome5}
\usepackage{natbib}
\usepackage{caption}
\usepackage{algorithm}
\usepackage{algorithmic}
\DeclareCaptionStyle{ruled}{labelfont=normalfont,labelsep=colon,strut=off}
\usepackage{booktabs}
\usepackage{amsmath}
\usepackage{amssymb}
\usepackage{float}

\usepackage{xcolor}
\definecolor{stdgray}{gray}{0.50}
\definecolor{forgegreen}{RGB}{0,105,0}

\newcommand{\forge}{FORGE}
\newcommand{\dY}{\Delta\mathbf{Y}}
\newcommand{\pnum}[1]{#1}

\title{FORGE: Fused On-Register Gradient Elimination\\for Memory-Efficient LLM Training}
\author{
    Dikshant Kukreja\textsuperscript{\rm 1,\rm 5},
    Kritarth Prasad\textsuperscript{\rm 2},
    Avinash Anand\textsuperscript{\rm 2},
    Zhengkui Wang\textsuperscript{\rm 2},
    Erik Cambria\textsuperscript{\rm 3},\\
    Timothy Liu\textsuperscript{\rm 4},
    Aik Beng Ng\textsuperscript{\rm 4},
    Simon See\textsuperscript{\rm 4},
    Bapi Chatterjee\textsuperscript{\rm 5}
}
\affiliations{
    \textsuperscript{\rm 1}Puch AI \quad
    \textsuperscript{\rm 2}Singapore Institute of Technology \quad
    \textsuperscript{\rm 3}Nanyang Technological University \quad
    \textsuperscript{\rm 4}NVIDIA \quad
    \textsuperscript{\rm 5}IIIT-Delhi\\
    dikshant@puch.ai\\
    {\faGithub}~Code: \url{https://github.com/dk4248/FORGE}
    \vspace{34pt}
}
\usepackage{longtable}
\usepackage{amsthm}
\theoremstyle{plain}
\newtheorem{proposition}{Proposition}
\theoremstyle{definition}
\newtheorem{definition}{Definition}
\newtheorem{remark}{Remark}
\renewcommand{\S}{\ensuremath{\mathsection}}
\begin{document}

\maketitle

\begin{abstract}
Reverse-mode differentiation computes every weight gradient, writes it to memory, and only then lets the optimizer read it back. Each gradient is used once and discarded, yet at peak the gradients together cost as many bytes as the weights themselves. That pool is an artifact of scheduling rather than something learning requires. We introduce \forge{}, which applies the optimizer to each weight-gradient tile in the fp32 registers that produced it, writing back only the new weight and moments, so the gradient is never stored. The fused step is provably exact whenever only the optimizer's state update reads the gradient, and the same condition settles which parallelism strategies inherit that exactness and which must reduce across ranks first. Nothing in the kernel depends on the surrounding architecture, so one implementation trains transformers, a state-space model, and a MLP mixer. The change is to when the gradient is consumed, not what any method computes, so it composes with the other remedies rather than competing: quantized states, low-rank projections and factored moments each keep their own saving and shed the gradient pool on top, 16–33\% of peak. Because the gradient never rounds to bf16 on its way to the optimizer, \forge{} also removes a truncation bias the standard path carries undamped. On Llama-3.1-8B, peak memory falls from 62.0~GB under vanilla AdamW to 48.4~GB at matched state precision, which also runs $1.5\times$ faster, and to 35.3~GB with int8 moments. \forge{} trains a 32B model with Muon on a H200 where standard Muon does not fit, and on an 8-GPU node it reaches the lowest per-rank memory of any method that trains it.
\end{abstract}

\section{Introduction}

Four pools share a GPU's memory during training: the weights, the optimizer states, the saved activations, and the gradients. At the moment a step peaks,
\begin{equation}
\mathcal{M}_{\mathrm{peak}} \;=\; \underbrace{2P}_{\text{weights}} \;+\; \underbrace{kP}_{\text{opt.\ states}} \;+\; \underbrace{2P}_{\text{gradients}} \;+\; \underbrace{\textstyle\sum_{\ell}|a_\ell|}_{\text{activations}},
\label{eq:peak}
\end{equation}
where $P$ is the parameter count, $k$ is the byte cost of optimizer state per parameter (8 for fp32 AdamW moments, 4 for bf16, 2 for int8), and $|a_\ell|$ is the byte size of the activations saved for layer $\ell$. Three of the four terms have well-developed remedies. States can be quantized \citep{bnb8bit,coat} or projected to low rank \citep{galore,apollo}, activations can be recomputed \citep{chen2016}, and weights can be stored in FP8 or four-bit formats \citep{coat,quartet}. None of the three reaches the gradient term, by construction: shrinking $k$, the weight bytes, or the activation sum leaves the $2P$ of gradients exactly where it was. The remedies that do reach it coarsen the granularity of the update rather than removing the pool. For Llama-3.1-8B \citep{llama31} that is about 15~GB, alive at the backward-to-optimizer boundary, the moment that decides whether a run fits.

\begin{figure}[t]
\centering
\includegraphics[height=3.35cm]{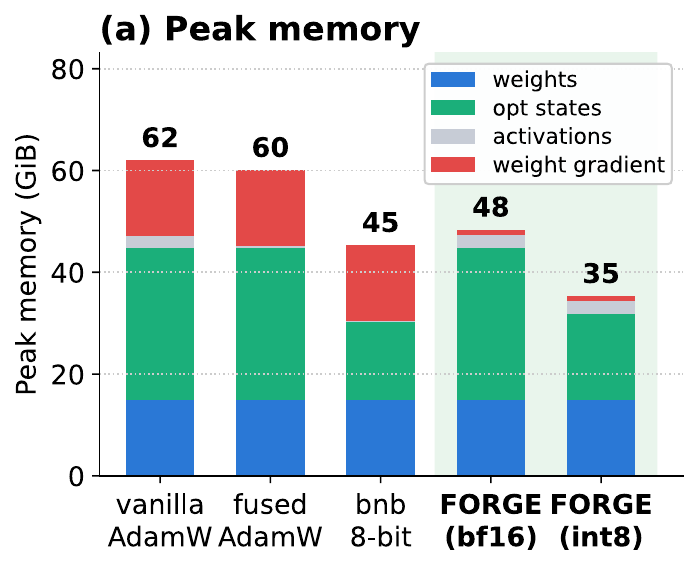}\hfill
\includegraphics[height=3.35cm]{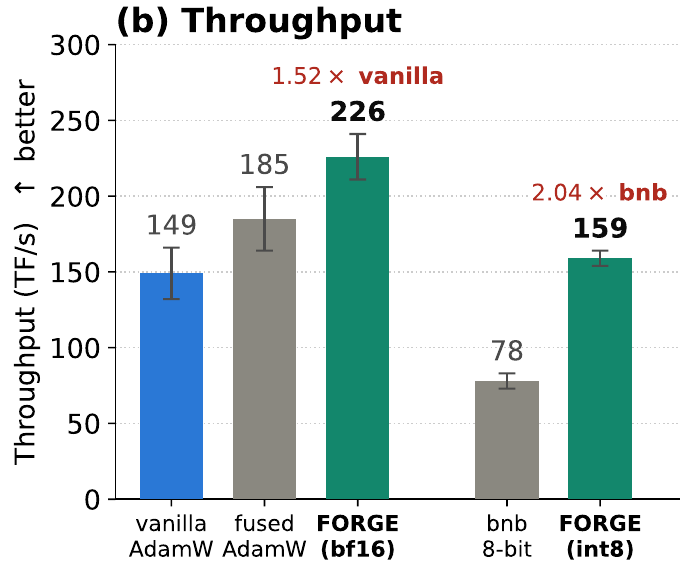}
\caption{\forge{} on Llama-3.1-8B (H200, BT=512). (a)~Peak memory: the weight gradient (red) collapses under \forge{}; its two arms differ only in moment precision. (b)~Achieved throughput, grouped by optimizer-state class; bars are three-run means, whiskers one sd (Table~\ref{tab:headline}).}
\label{fig:teaser}
\end{figure}

That pool is a consequence of how the backward pass is scheduled. Backpropagation proceeds in two phases: it computes every gradient, and only then allows the optimizer to read them back. Between the two phases every layer's gradient is resident at once, which is the plateau measured in Figure~\ref{fig:trace}. The gradient itself, however, is produced inside a matrix multiply, in registers, already in fp32, and the optimizer applies roughly a dozen arithmetic operations to it before discarding it (Algorithm~\ref{alg:forge}). Standard training nonetheless writes that gradient to HBM and immediately reads it back. The round trip is imposed by the schedule rather than required by the update rule, and it dominates the cost of the step: the optimizer moves far more bytes than it performs arithmetic on, so the step is limited by memory bandwidth rather than by compute \citep{micikevicius2018}.

A line of work targets this floor by updating parameters \emph{during} the backward pass, always at a granularity coarser than the gradient's own. \citet{pudipeddi2020} introduced the schedule; LOMO \citep{lomo} brought it to LLM training for plain SGD, and AdaLomo \citep{adalomo} restored adaptivity through a factored second moment \citep{adafactor}, trading exactness for memory. PyTorch exposes it as a gradient hook \citep{paszke2019}, which optimi \citep{optimi} and FlashOptim \citep{flashoptim} build into per-parameter AdamW; both wait for a whole parameter's gradient to materialize, 1~GB for Llama-3.1-8B's embedding, and pay one launch per parameter. Adam Accumulation \citep{adama} is the closest prior point, accumulating partial gradients in HBM a layer at a time; GradLite \citep{gradlite2025} shrinks the gradient with a low-rank Jacobian approximation. Each approximates the optimizer or stops above where the gradient is computed; none reaches the tile with an exact update. Appendix~J places them on a common axis of granularity, exactness, and state cost.

Fusing a consumer into the kernel that produced its operand is routine at operator scope. Attention kernels never materialize the score matrix \citep{flashattn}; \citet{jiang2021} named the framework-level reordering optimizer fusion; Megatron-LM folds gradient accumulation into the weight-gradient epilogue \citep{megatron}; Liger and Cut Cross-Entropy fuse the weight-gradient computation for the LM head alone \citep{liger,cce}. Concurrent work pushes fusion deeper into the block \citep{coda2026,chronicals2026}, yet none of it touches the optimizer. The one fusion none of this literature takes is the optimizer's own update, folded into the weight-gradient epilogue of every linear layer. \forge{} (Fused On-Register Gradient Elimination) is that fusion.

\begin{figure}[t]
\centering
\includegraphics[width=\linewidth]{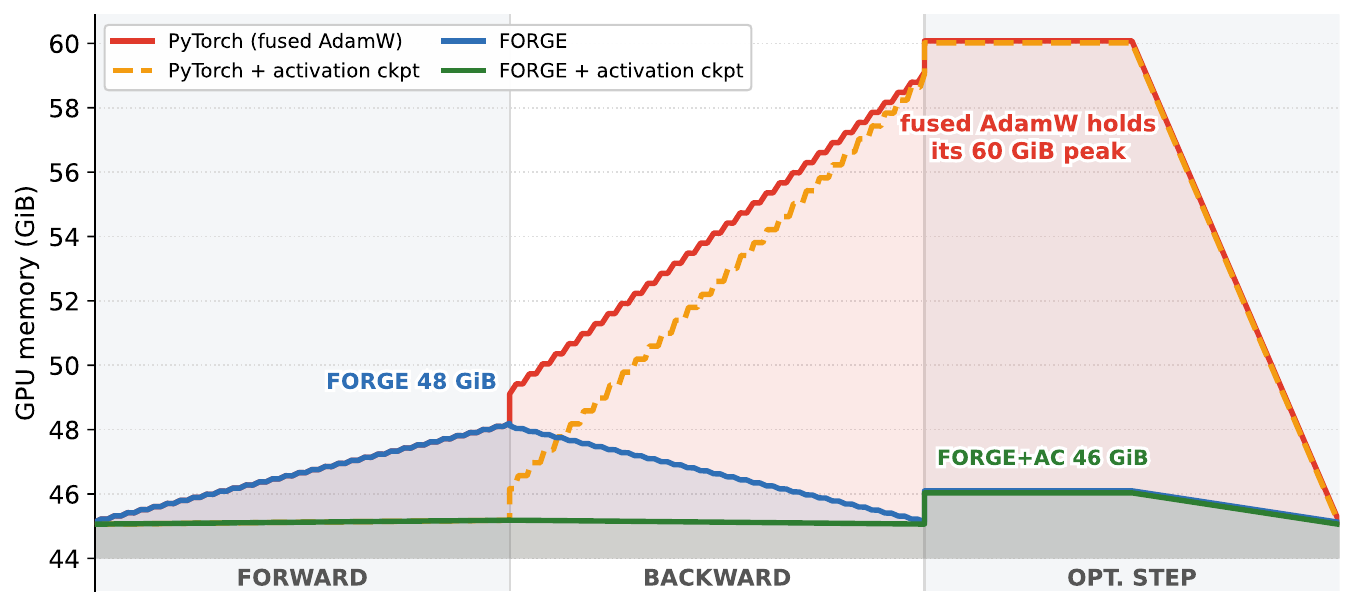}
\caption{Measured GPU memory across one training step (Llama-3.1-8B, sequence~512; per-layer trace). Fused AdamW accumulates the full gradient through the backward pass and holds that 60~GB peak through the optimizer step; \forge{} consumes each gradient tile as it is produced, so its curve falls back during backward and the peak never forms.}
\label{fig:trace}
\end{figure}

The kernel accumulates one tile of the weight gradient in fp32 registers,
runs the optimizer arithmetic on that tile, writes the updated weight and
moment tiles back, and discards the gradient. Section~\ref{sec:method}
states the condition that makes this legal. A step splits in two: a
\emph{state update} that reads the gradient, and a \emph{weight update} that
reads only the state it produced; the schedule constrains the first and
leaves the second free. AdamW \citep{adamw}, SGD with momentum
\citep{robbins1951,polyak1964}, Lion \citep{lion}, RMSprop \citep{rmsprop}, and Adagrad \citep{adagrad} enter state one coordinate at a time and
fuse entirely in the register epilogue. The condition admits more than
that. Muon \citep{muon} and Scion \citep{scion} enter state affinely, so
the weight-gradient multiply accumulates in place into the momentum buffer and the orthogonalization that follows reads state that is already resident, never a gradient; LAMB \citep{lamb} takes a layer-global norm,
but of the update its moments produce rather than of the gradient, so it
fuses on the same terms.

What this buys can be read off Equation~\ref{eq:peak}. The gradient term collapses from $2P$ to a single tile's working set, independent of model size, and the saving is the same under every precision recipe, because no recipe touches that term. Everything else stays the baseline's --- same weights, moments, precisions. Speed is bounded by what fusion removes, the separate optimizer phase, so the gain can be no larger than the share of the step it occupied. Measured across models and batch sizes it tracks exactly that share (Section~\ref{sec:experiments}): up to $1.5\times$ when the optimizer dominates the step, parity when activations do, with the memory saving in both cases.

The same argument decides what survives distribution. Where a rank's local gradient is already complete the fusion carries over unchanged, which covers tensor, sequence, and expert parallelism \citep{megatron,seqpar,switchtransformer}. Data and context parallelism replicate the weight, and an adaptive update cannot precede the cross-rank sum \citep{ddp,ulysses}, so \forge{} reduces bucket by bucket and steps each reduced bucket, replacing the $O(P)$ pool with a one-bucket transient (Section~\ref{sec:parallelism}). The removed bytes convert directly to capability: with fp8 moments a 32B Qwen3 model \citep{qwen3} trains on a single H200 where fused AdamW runs out of memory, and on an 8-GPU node the fully-sharded int8 arm trains that model at 70.6~GB per rank where ZeRO-3 does not fit.

\section{Method}
\label{sec:method}

\subsection{Standard Backward and the Gradient Floor}
\label{sec:floor}

Consider a linear layer with weight $\mathbf{W} \in \mathbb{R}^{D_{\mathrm{out}} \times D_{\mathrm{in}}}$ and activation batch $\mathbf{X} \in \mathbb{R}^{BT \times D_{\mathrm{in}}}$, $BT$ the rows of $\mathbf{X}$ (batch flattened over sequence or spatial positions); the forward pass computes
\begin{equation}
\mathbf{Y} \;=\; \mathbf{X}\mathbf{W}^{\!\top} \;\in\; \mathbb{R}^{BT \times D_{\mathrm{out}}},
\label{eq:fwd}
\end{equation}
and the weight gradient of the training loss $\mathcal{L}$ is a single dense matrix multiply,
\begin{equation}
\nabla_{\mathbf{W}}\mathcal{L} \;=\; \Big(\tfrac{\partial \mathcal{L}}{\partial \mathbf{Y}}\Big)^{\!\top} \mathbf{X} \;\in\; \mathbb{R}^{D_{\mathrm{out}} \times D_{\mathrm{in}}},
\label{eq:wgrad}
\end{equation}
with upstream gradient $\dY := \partial\mathcal{L}/\partial\mathbf{Y} \in \mathbb{R}^{BT \times D_{\mathrm{out}}}$. Standard training stores $\nabla_{\mathbf{W}}\mathcal{L}$ in HBM; the optimizer later reads the gradient and both moments and writes back the moments and updated weights. Peak memory therefore follows Equation~\ref{eq:peak}, whose $2P$ gradient term no other remedy touches. \forge{} removes it by folding the optimizer's state update inside the kernel that produces it.

\begin{figure*}[t]
\centering
\includegraphics[width=0.82\textwidth]{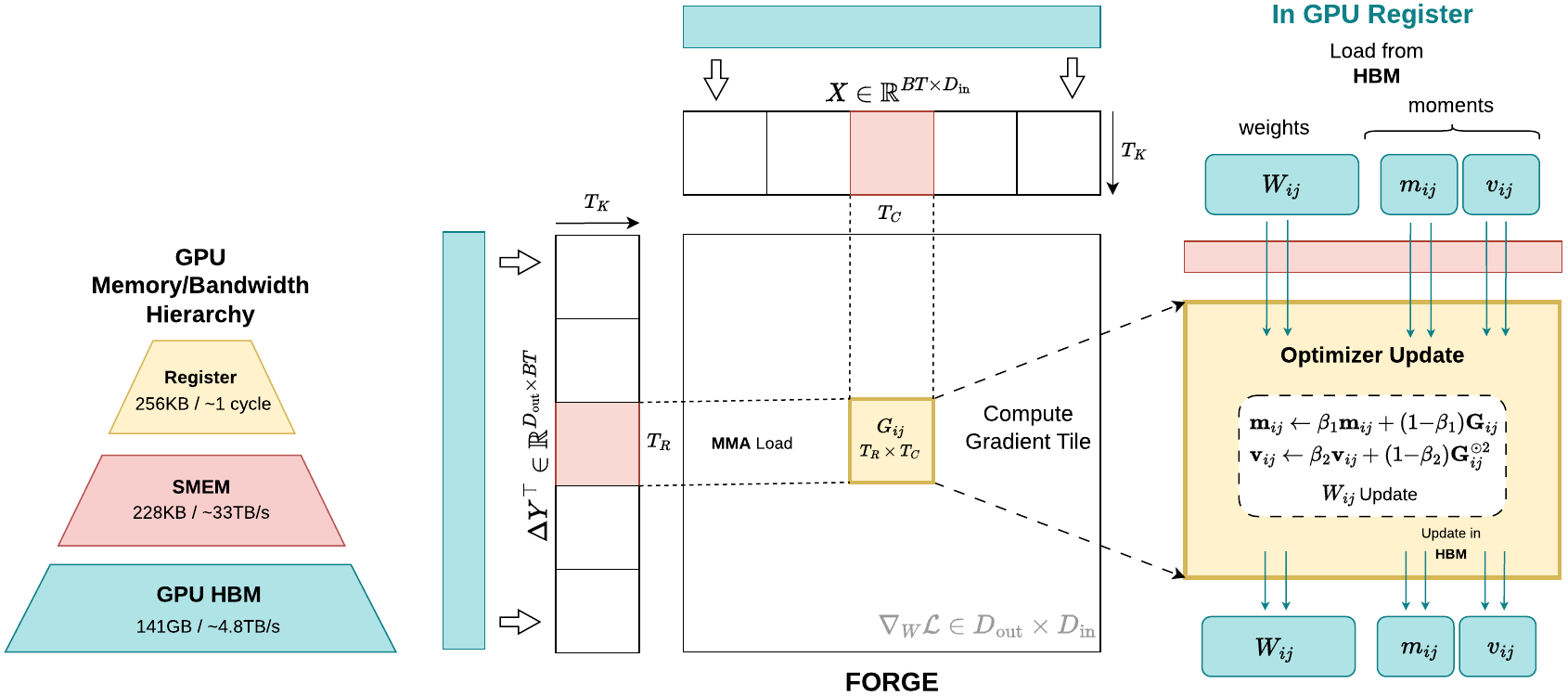}
\caption{The fused step. Each thread block forms one gradient tile in registers, applies the optimizer there, and writes back only the updated weight and moment tiles; the gradient never descends the memory hierarchy.}
\label{fig:mechanism}
\end{figure*}

\subsection{Per-Tile Fused Backward and Optimizer}
\label{sec:fused}

We partition $\mathbf{W}$ into $T_R{\times}T_C$ tiles; the tile geometry, reduction chunk $T_K$, and pipeline depth are chosen per layer shape by an autotuner (Appendix~H). Each thread block owns one tile $(i,j)$ and loops over the $BT$ axis in chunks, accumulating
\begin{equation}
\mathbf{G}_{ij} \;=\; \sum_{\kappa=0}^{\lceil BT/T_K\rceil - 1} \big(\dY_{\kappa,i}\big)^{\!\top} \mathbf{X}_{\kappa,j}
\label{eq:tile}
\end{equation}
in fp32 registers, with $\kappa$ indexing $T_K$-chunks of the $BT$ axis and $i,j$ the tile blocks (Algorithm~\ref{alg:forge}). When the loop finishes, $\mathbf{G}_{ij}$ is the exact $(i,j)$ block of $\nabla_{\mathbf{W}}\mathcal{L}$. The same registers then feed the optimizer: the moment tiles are loaded, updated, and stored, the weight tile is updated in place, and $\mathbf{G}_{ij}$ is discarded. No gradient byte reaches global memory, no separate optimizer kernel launches, and cross-layer execution order is unchanged. One ordering constraint follows from updating the weight in place. The layer's input gradient $\Delta\mathbf{X} = \dY\,\mathbf{W}$ must be read from the pre-update weight, so the kernel issues that GEMM first and applies the optimizer afterwards; $\Delta\mathbf{X}$ is otherwise computed as usual, and \forge{} changes only the weight-gradient path.

\begin{algorithm}[t]
\caption{\forge{}: fused backward and optimizer for weight tile $(i,j)$. AdamW shown: $\mathbf{m},\mathbf{v}$ the moments, $\eta$ learning rate, $\beta_1,\beta_2$ their decays, $\epsilon$ stabilizer, $\lambda$ weight decay.}
\label{alg:forge}
\begin{algorithmic}[1]
\REQUIRE $\dY,\ \mathbf{X}$; tiles $\mathbf{W}_{ij}, \mathbf{m}_{ij}, \mathbf{v}_{ij}$; step $t$; $(\eta,\beta_1,\beta_2,\epsilon,\lambda)$
\STATE $\Delta\mathbf{X} \leftarrow \dY\,\mathbf{W}$ \hfill $\triangleright$ layer-level; before the tiled loop
\STATE $\mathbf{G}_{ij} \leftarrow \mathbf{0}$ \hfill $\triangleright$ fp32 registers
\FOR{$\kappa = 0, \dots, \lceil BT/T_K \rceil - 1$}
    \STATE $\mathbf{G}_{ij} \leftarrow \mathbf{G}_{ij} + \dY_{\kappa,i}^{\!\top}\,\mathbf{X}_{\kappa,j}$
\ENDFOR
\STATE load $\mathbf{W}_{ij}, \mathbf{m}_{ij}, \mathbf{v}_{ij}$ from HBM \hfill $\triangleright$ $\mathbf{G}_{ij}$ stays in registers
\STATE $\mathbf{m}_{ij} \leftarrow \beta_1\mathbf{m}_{ij} + (1{-}\beta_1)\,\mathbf{G}_{ij}$ \hfill $\triangleright$ lines 7--10 are
\STATE $\mathbf{v}_{ij} \leftarrow \beta_2\mathbf{v}_{ij} + (1{-}\beta_2)\,\mathbf{G}_{ij}^{\odot 2}$ \hfill $\triangleright$ only optimizer-
\STATE $\hat{\mathbf{m}} \leftarrow \mathbf{m}_{ij}/(1{-}\beta_1^{t})$; \quad $\hat{\mathbf{v}} \leftarrow \mathbf{v}_{ij}/(1{-}\beta_2^{t})$ \hfill $\triangleright$ specific part
\STATE $\mathbf{W}_{ij} \leftarrow (1{-}\eta\lambda)\mathbf{W}_{ij} - \eta\,\hat{\mathbf{m}}/(\sqrt{\hat{\mathbf{v}}}+\epsilon)$
\STATE store $\mathbf{W}_{ij}, \mathbf{m}_{ij}, \mathbf{v}_{ij}$ \hfill $\triangleright$ $\mathbf{G}_{ij}$ discarded, never stored
\end{algorithmic}
\end{algorithm}

\paragraph{Memory and bandwidth.}
The peak of Equation~\ref{eq:peak} loses its gradient term:
\begin{equation}
\mathcal{M}_{\forge} = \underbrace{2P}_{\text{weights}} + \underbrace{kP}_{\text{states}} + \underbrace{0}_{\text{grads}} + \sum_{\ell}|a_\ell| + O(T_R T_C),
\label{eq:mforge}
\end{equation}
where the last term is the register working set of the in-flight tiles, independent of $P$. Nothing else changes: weights, moments, and precisions are exactly the baseline's, so every comparison in Section~\ref{sec:experiments} stays within a single recipe. The deletion is also a bandwidth cut. A standard bf16 step streams sixteen bytes per parameter through HBM, the gradient, both moments, and the weight each written once and read once, of which the gradient's four bytes are a pure round trip: written only to be read again. \forge{} forms the gradient in registers, so the step moves twelve bytes instead of sixteen. For a memory-bound step that bounds the gain at $16/12 = 1.33\times$ when both arms stream at the same efficiency, which is what Section~\ref{sec:experiments} measures against fused AdamW ($1.22\times$). Against vanilla AdamW the gain is larger, $1.52\times$, because that path reaches only $24\%$ of the HBM ceiling where \forge{} reaches $74\%$ : \forge{} moves fewer bytes and moves them closer to peak bandwidth. With the gradient floor gone, quantizing states or weights converts into peak savings byte for byte (int8 moments and FP8 weights in Section~\ref{sec:experiments}).

What bounds $T_R$ and $T_C$ is the register file rather than HBM. At the $128{\times}128$ tile the autotuner selects for most shapes, the fp32 accumulator holds $16{,}384$ values across eight warps, or 64 registers per thread against the 255 a thread may address, leaving room for the moment and weight tiles the epilogue streams through; wider tiles spill (Appendix~H).

\begin{figure}[H]
\centering
\includegraphics[width=0.88\linewidth]{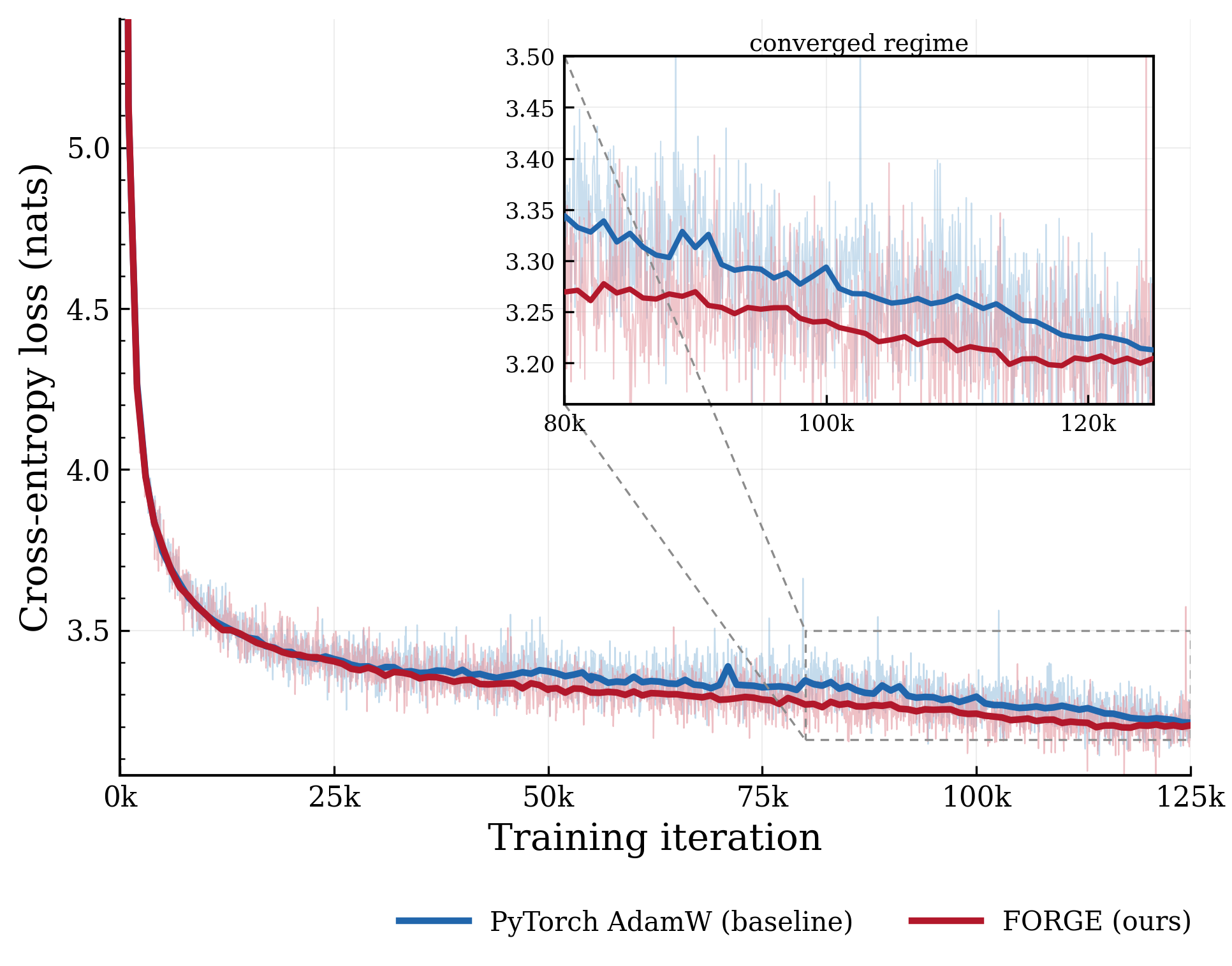}
\caption{GPT-2 124M pretrained from scratch on FineWeb-Edu \citep{fineweb}, held-out loss: \forge{} tracks fused AdamW throughout (3.20 vs 3.22 nats).}
\label{fig:gpt2}
\end{figure}

\paragraph{Exactness.}
Write a step as a state update $S_t = A(S_{t-1}, \nabla_{\mathbf{W}}\mathcal{L})$
then a weight update $W_t = U(W_{t-1}, S_t)$. Only $A$ is constrained: $U$ reads
persistent state, never the gradient, so it may read a row, a matrix, or a whole
layer. When $A$ is \emph{tile-local} --- a coordinate's new state depending only
on its own gradient, state and weight --- Equation~\ref{eq:tile} adds every $BT$
chunk exactly once, so stepping tile by tile equals forming the full gradient and
stepping once (Appendix~A). AdamW and seven other families have $A$ and $U$ both
tile-local and fuse entirely in the epilogue. Muon \citep{muon} does not:
$\mathbf{B} \leftarrow \mu\mathbf{B} + \mathbf{G}$ is tile-local, but
Newton--Schulz reads the whole matrix --- of $\mathbf{B}$, not of the gradient,
so $A$ fuses and $U$ runs afterwards over resident state. LAMB \citep{lamb} is
the same shape: elementwise moments, then a layer-global norm of the update they
produce. What does not fuse is a state update needing a statistic of the
gradient --- Adafactor and SM3 \citep{sm3}, row and column reductions of
$\mathbf{G}^{\odot 2}$, and LARS \citep{lars}, which reads $\|\mathbf{G}\|$.
These run deferred, each layer's gradient formed, consumed and freed inside its
own backward, and only
cross-layer statistics fall outside the schedule entirely
(Section~\ref{sec:limitations}). Appendix~I sorts all thirteen.

\section{Experiments}
\label{sec:experiments}
\forge{} is implemented as a single Triton kernel replacing the
weight-gradient GEMM of every linear layer, with no architecture-specific code
across GPT-2 \citep{gpt2}, Llama-3.1-8B, five Qwen3 sizes, vision transformers
to 25B, Mamba-2 to 20B, and MLP-Mixers to 4.9B. Single-GPU measurements use one
NVIDIA H200 (141~GB), BF16 everywhere, batch~1 and sequence~512 unless a
caption says otherwise; Appendix~G repeats the grids on H100 and B200. Every
comparison is drawn within one precision recipe and state class, every arm
passes a 20-step fp32-parity gate before any cell is recorded, and the full
protocol, library versions, seeds and timing method are in Appendix~C.
Standalone the update reaches 74\% of the measured 4{,}252~GB/s HBM ceiling,
against 61\% for fused AdamW and 24\% for vanilla.

\begin{table}[H]
\centering
\footnotesize
\setlength{\tabcolsep}{6pt}
\begin{tabular}{@{}l r r r@{}}
\toprule
Method & Peak (GB) & Step (ms) & TF/s \\
\midrule
vanilla AdamW           & 62.04 & $167.1 \pm 18.0$ & $149 \pm 17$ \\
fused AdamW             & 60.08 & $134.3 \pm 14.1$ & $185 \pm 21$ \\
\midrule
FlashOptim              & 42.27 & $148.0 \pm 4.2$  & $167 \pm 5$ \\
GaLore $r{=}128$        & 37.24 & $149.2 \pm 1.9$  & $165 \pm 2$ \\
APOLLO $r{=}256$        & 38.39 & $184.3 \pm 4.1$  & $134 \pm 3$ \\
optimi grad-release     & 64.55 & $188.7 \pm 7.6$  & $131 \pm 5$ \\
GaLore $r{=}1024$       & 44.77 & $189.9 \pm 2.3$  & $130 \pm 2$ \\
APOLLO-Mini $r{=}1$     & 36.15 & $195.4 \pm 14.9$ & $127 \pm 10$ \\
bitsandbytes 8-bit      & 45.36 & $316.3 \pm 20.6$ & $78 \pm 5$ \\
AdaLomo                 & 23.60 & $632.9 \pm 13.0$ & $39 \pm 1$ \\
\midrule
\textbf{\forge{}}        & \textbf{48.36} & $\mathbf{110.2 \pm 8.7}$ & $\mathbf{226 \pm 15}$ \\
\textbf{\forge{} (int8)} & \textbf{35.32} & $155.0 \pm 4.4$ & $159 \pm 5$ \\
\bottomrule
\end{tabular}
\caption{Single-GPU comparison on Llama-3.1-8B : peak memory, step time and achieved throughput (mean $\pm$ sd over three runs, each the median of 20 steps after warmup).}
\label{tab:headline}
\end{table}

\subsection{Precision and Convergence}
\label{sec:precisionconv}
The bf16 pipeline rounds every weight gradient to bf16 when the backward GEMM
writes it to HBM, a relative error of about $2^{-8}$, and the optimizer reads
that rounded value into both moments. The error does not average away: each
gradient enters the moving average scaled by $(1{-}\beta_1)$ but persists for
about $1/(1{-}\beta_1)$ steps, so the two factors cancel and the truncation
passes through undamped. \forge{} reads the fp32 register accumulator instead,
at no cost in bytes or time; prior comparisons never observed the difference
because both arms rounded the gradient the same way. Over twenty steps against
an fp32 reference \forge{}'s rms weight error sits below the standard path's
(5.83e-4 vs 6.09e-4), while the worst-case element runs the other way, the
tile-order reduction having reassociated the token-axis sum (Appendix~E).
Every implemented family passes the parity gate and trains end to end
(Appendix~I). In full runs GPT-2 124M pretrained from random initialization
tracks fused AdamW throughout (Figure~\ref{fig:gpt2}; 3.20 vs 3.22 nats), and
continued pretraining on OpenMathInstruct-2 \citep{openmathinstruct2} across
Llama-3.1-8B and five Qwen3 sizes stays within 0.001 nats on average
(Appendix~F).

\subsection{Headline Comparison}
\label{sec:headline}

Table~\ref{tab:headline} places \forge{} against standard memory-efficient training methods at the default setup. \forge{} improves all three axes at once: 110.2~ms per step against 134.3 for fused AdamW and 167.1 for vanilla AdamW, 48.4~GB peak against 60.1 at matched bf16 states and 35.3~GB with int8 moments, and 226 against 185~TF/s. The pattern behind the numbers is the granularity ladder of Section~1. Per-parameter release (optimi) frees each gradient as it is produced, but its per-parameter accumulator pool leaves the peak above the monolithic allocation it replaces (64.6~GB), and one optimizer launch per parameter costs it 188.7~ms. bitsandbytes pays 8-bit unpacking on every step (316.3~ms), and the low-rank methods (GaLore, APOLLO) change the update itself, buying memory at a different point on the fidelity axis. \forge{} pays neither the launches nor the materialization, so the memory saving and the speedup arrive together.

\subsection{Composability}
\label{sec:compose}

\begin{figure}[H]
\centering
\includegraphics[width=\linewidth]{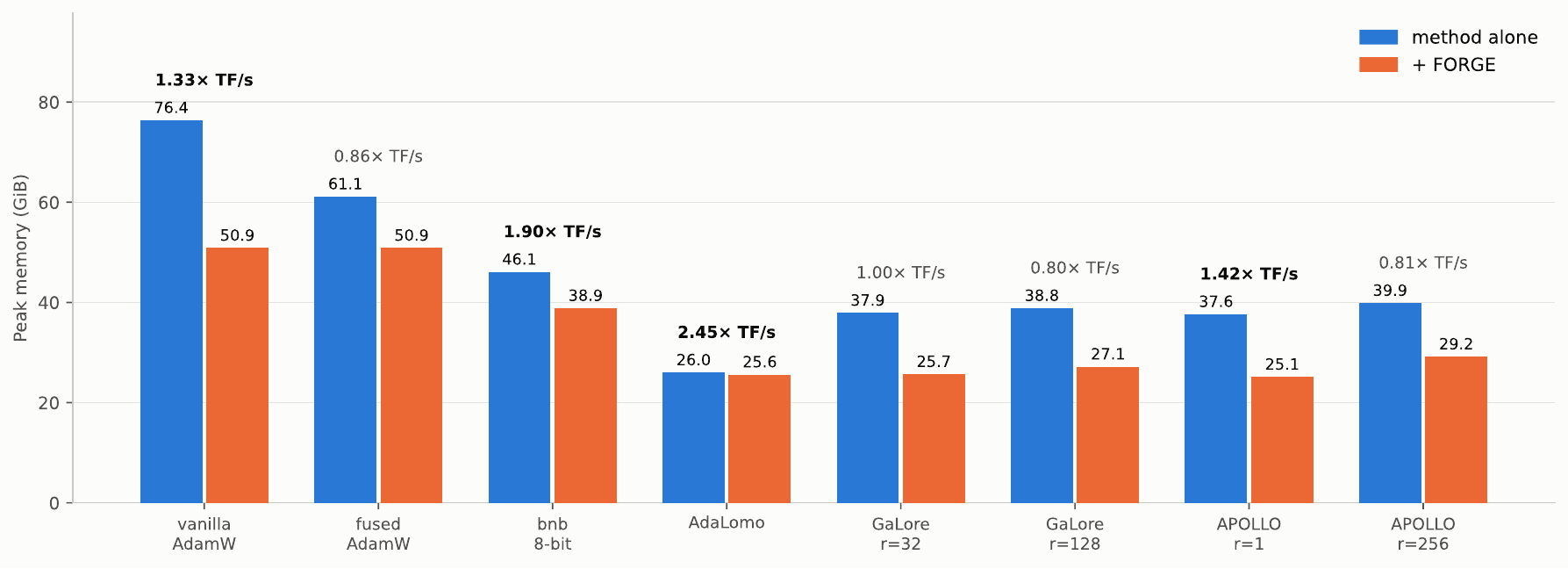}
\caption{Peak memory alone and composed with \forge{} (Qwen3-8B, one H200,
batch~1, sequence~512, bf16 states). Gradient elimination is orthogonal
to state quantization, low-rank projection and factored moments: each keeps its
own saving and sheds the pool on top. }
\label{fig:compose}
\end{figure}

\begin{figure}[H]
\centering
\includegraphics[width=\linewidth]{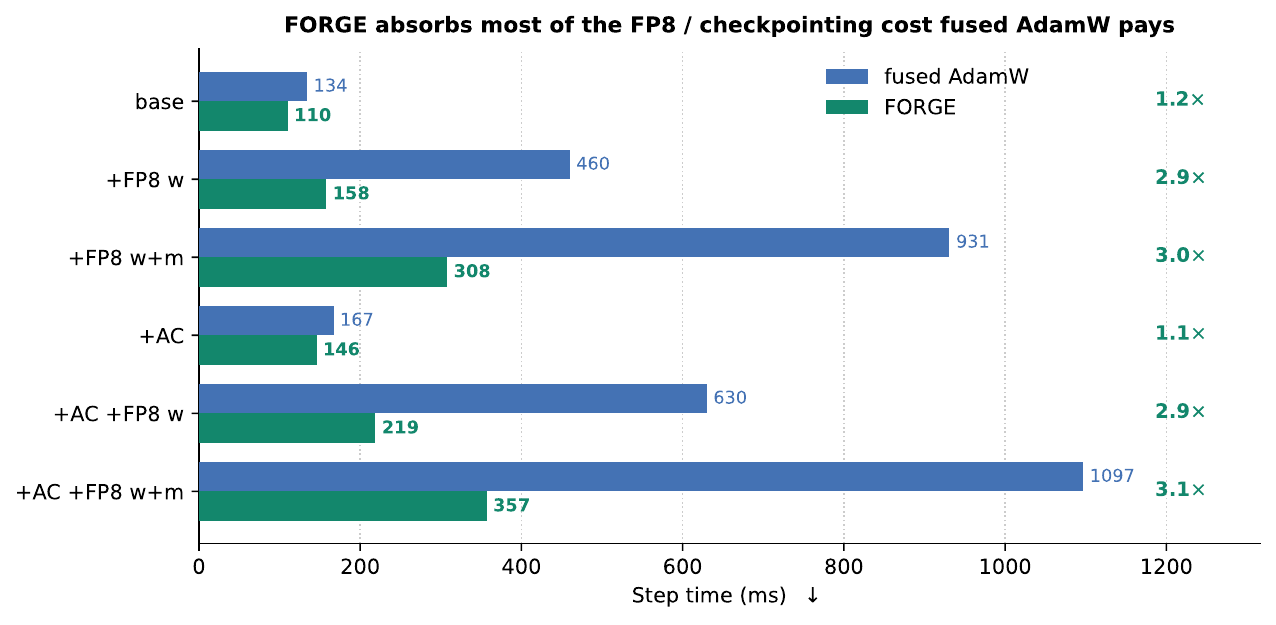}
\caption{Step time as FP8 weight (w) and weight+moment (w+m) quantization and
activation checkpointing (AC) are layered on (Llama-3.1-8B). \forge{} folds
dequantization into the per-tile epilogue; fused AdamW runs it as separate
kernels. FP8-moment cells use \texttt{AdamWFp8} from a reference run, agreeing
with the measured cells to within 1\%.}
\label{fig:variants}
\end{figure}

\forge{}'s gradient elimination is orthogonal to the other remedies: the
memory result is unconditional, the speed result is the bound
(Section~\ref{sec:method}). Each method in Table~\ref{tab:headline} shrinks a
different term of Equation~\ref{eq:peak}, and composing sheds the gradient
pool on top of each in turn (Figure~\ref{fig:compose}), $16$--$33\%$ of peak
against those that shrink state and leave the gradient alone, and
$2.45\times$ in step time against AdaLomo, already near the memory floor. Two
methods do not compose by construction: optimi and FlashOptim occupy the same
scheduling slot, and a weight cannot be stepped by both.

Quantizing what remains composes with removing the gradient
(Figure~\ref{fig:variants}). As FP8 weight and weight+moment quantization are
layered onto fused AdamW, its step climbs from 134 to 460 to 931~ms, because
dequantization runs as separate kernels on every forward, backward, and
update. \forge{} folds dequantization into the same per-tile epilogue, so its
step rises far less, from 110 to 158 to 308~ms, a $3.0\times$ gap at FP8 w+m.
Activation checkpointing composes the same way (146 vs 167~ms at base, and
$3.1\times$ at the full stack). The comparison isolates the dequantization tax
of the standard PyTorch path, which \forge{} largely absorbs; it claims no
advantage over FP8-native pipelines such as TransformerEngine.

\subsection{Operating Regime}
\label{sec:regime}

\begin{table}[H]
\centering
\small
\setlength{\tabcolsep}{4pt}
\begin{tabular}{@{}l ccc@{}}
\toprule
SEQ & BS=1 & BS=2 & BS=4 \\
\midrule
512  & 48.4 ($1.52\times$) & 51.8 ($1.60\times$) & 58.7 ($1.69\times$) \\
1024 & 51.8 ($1.58\times$) & 58.7 ($1.66\times$) & 72.4 ($1.52\times$) \\
2048 & 58.7 ($1.61\times$) & 72.4 ($1.50\times$) & 99.8 ($1.41\times$) \\
4096 & 72.4 ($1.47\times$) & 99.8 ($1.37\times$) & OOM \\
\bottomrule
\end{tabular}
\caption{\forge{} (bf16 states) across the batch$\times$sequence grid (Llama-3.1-8B, H200): peak memory in GB, with the step-time speedup over vanilla AdamW in the same cell.}
\label{tab:regime}
\end{table}

What governs the trade-off is the token count $BT$, not batch or sequence separately. Peak memory is constant along the constant-$BT$ anti-diagonals of Table~\ref{tab:regime}, exactly so in every group we measured, because the removed term scales with parameters and the surviving one with tokens. That asymmetry also bounds the benefit: \forge{} deletes a fixed ${\approx}15$~GB, so as activations inflate the peak the same absolute saving becomes a smaller share of it. At matched bf16 states the reduction falls from 22\% at $BT{=}512$ to nothing at $BT{\geq}4096$, where \forge{}'s peak meets the baseline's because activations, not the gradient, set the watermark. \forge{} is therefore a small-$BT$ method, which is the regime that dominates fine-tuning and continued pretraining; beyond it, activation checkpointing rather than gradient elimination is what moves the peak (full grids in Appendix~G).

\subsection{Model Scale and Capability}
\label{sec:capability}

\begin{figure}[H]
\centering
\includegraphics[width=\linewidth]{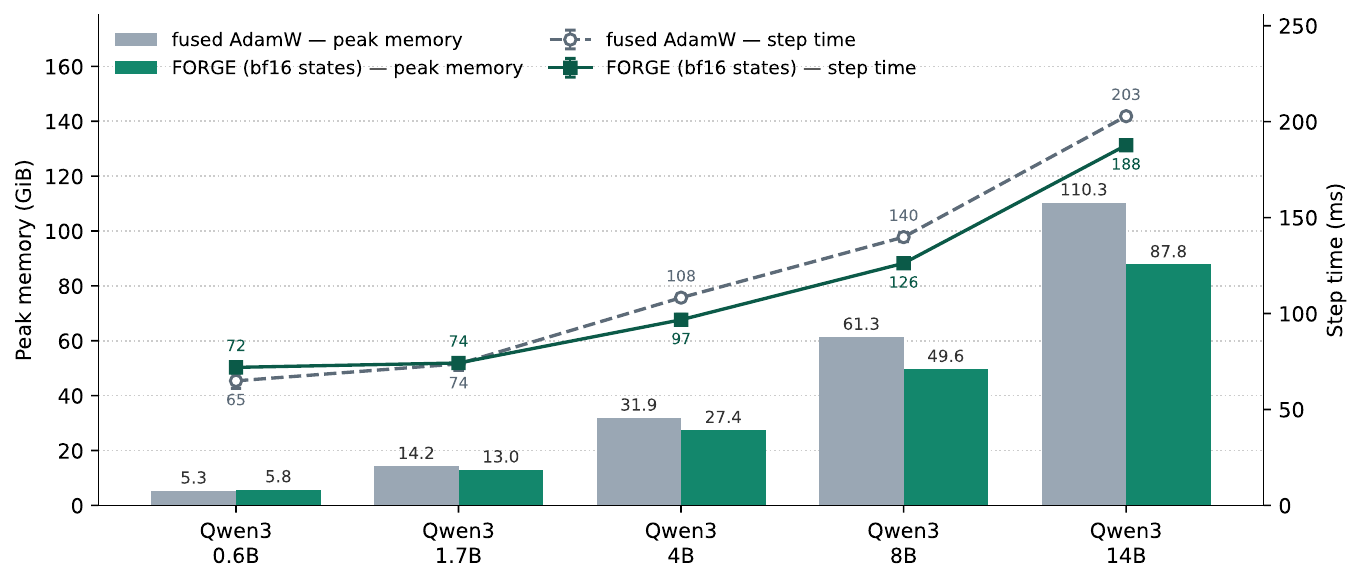}
\caption{Peak memory (bars, left) and step time (lines, right; 3-run mean $\pm$ sd) across the Qwen3 series (sequence~512, quiet H200).}
\label{fig:scaling}
\end{figure}

Figure~\ref{fig:scaling} scales the model axis: the memory ratio improves with parameter count, because optimizer state and the gradient transient grow with $P$ while activation memory does not at fixed tokens, from $1.08\times$ at Qwen3-0.6B to $0.80\times$ at Qwen3-14B. Step time follows the same amortization: $0.90\times$ at 0.6B, $1.08$--$1.12\times$ from 4B up. At Qwen3-14B fused AdamW peaks at a sequence-invariant states-plus-gradients watermark of 110.3~GB while \forge{} holds 87.8~GB at matched states and 62.1~GB with fp8 moments, all fitting the H200. The same structure decides what fits at all: at Qwen3-32B the standard path needs roughly $8$~bytes per parameter and \forge{} at matched bf16 states roughly $6$, both beyond the card; even int8 moments miss, reaching 137.0 of 139.8~GB before failing. With fp8 moments \forge{} trains it at 134.4~GB, where every other arm is out of memory. That is the point at which eliminating the gradient stops being an optimization and becomes the difference between training and not (six-platform capability in Appendix~O).

\subsection{Beyond the Transformer}
\label{sec:archgen}

\begin{figure}[H]
\centering
\includegraphics[width=0.96\linewidth]{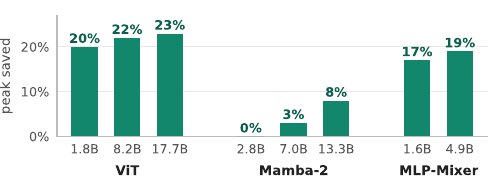}
\caption{Peak-memory saving vs.\ fused AdamW at matched bf16 states (H200), three architecture families, one kernel. Mamba-2's flat bars are interface coverage ($65\%$ of its weights reach the kernel), not architecture. Absolute footprints and int8 states: Appendix~P.}
\label{fig:archgen}
\end{figure}

Section~\ref{sec:method} makes architecture-independence a prediction rather than a hope: the fused step reads only the identity $\nabla_{\mathbf{W}}\mathcal{L}=\dY^{\!\top}\mathbf{X}$, so any weight read by a single backward path is the same case whatever surrounds it. We test the prediction at its extremes: a pure vision transformer \citep{vit}, a Mamba-2 selective-scan state-space model \citep{mamba2} with no attention, and an all-MLP Mixer \citep{mixer} that is almost nothing but linear layers. The identical kernel that trains the language models trains all three, with no architecture-specific code (Figure~\ref{fig:archgen}). The saving follows Equation~\ref{eq:mforge}: it climbs with scale toward the $25\%$ bf16 ceiling as the activation floor washes out, and with int8 states it reaches $46\%$ and trains a 25B vision model and a 20B state-space model on one H200, sizes fused AdamW cannot hold at any batch size. What varies across architectures is interface coverage rather than mathematics: $99.9\%$ of the Mixer's parameters reach the kernel, $99\%$ of the vision transformer's, and $65\%$ of Mamba-2's, whose optimized scan bypasses the module interface. That is an integration seam, not a research obstacle (Appendix~P).

\section{Distributed FORGE}
\label{sec:parallelism}

The fused step is exact when the rank that steps a weight block holds that block's complete gradient (Appendix~A), and each parallelism strategy fixes the finest unit in which that is true. \forge{} materializes the gradient in that unit and nothing larger: a register tile under tensor, sequence, and expert sharding, where a shard's owner already sees every token; one reduction bucket under data and context parallelism, where the local gradient is a partial sum over tokens and an update nonlinear in $\mathbf{G}$ cannot be applied to the pieces; one reduce-scattered shard under full sharding; one accumulator under a pipeline, where the incompleteness is in time rather than across ranks. No exact scheme materializes less, so the $O(P)$ pool the baseline holds above that floor is what \forge{} deletes. Exactness is gated rather than assumed (Appendix~L), every comparison is drawn within one recipe $\times$ sharding class (Appendix~K), and two implementation costs are charged to \forge{} throughout: the coordinator issues one optimizer launch per parameter per bucket rather than a fused multi-tensor call, and \forge{}-FSDP all-gathers synchronously where FSDP2 prefetches, so distributed step times are an upper bound on the mechanism's cost.

\begin{figure}[t]
\centering
\includegraphics[width=0.85\linewidth]{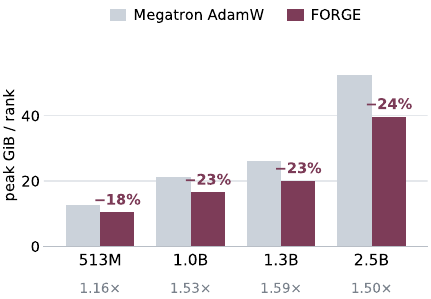}
\caption{Tensor parallelism (megatron.core, fp32-master both arms, quiet node): activation-set peaks move $0.0\%$, parameter-set peaks drop $18$--$24\%$ at $1.16$--$1.59\times$.}
\label{fig:tp}
\end{figure}

\paragraph{Tensor and sequence parallelism (Figure~\ref{fig:tp}).}
A shard's gradient is complete, so the epilogue fuses with no distributed machinery and the production kernel runs on every rank. Across TP~2--8 on 8$\times$RTX PRO 6000 the low-variance quiet-node A/B cells run at $1.00$--$1.03\times$ at matched bf16 states: fusion costs nothing. What it buys follows the single-GPU law of Section~3.4. On Megatron's own fp32-master recipe, cells where the forward crest sets the peak move $0.0\%$ and cells where parameter state sets it drop $18$--$24\%$ at $1.16$--$1.59\times$.

\begin{figure}[H]
\centering
\includegraphics[width=\linewidth]{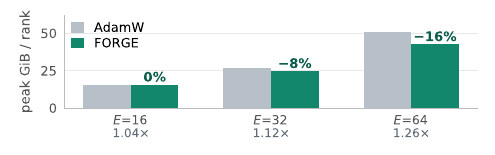}
\caption{Pure expert parallelism (EP4): parity while activations set the peak, $-16\%$ at $1.26\times$ once experts pack the rank at $E{=}64$.}
\label{fig:ep}
\end{figure}

\paragraph{Expert parallelism (Figure~\ref{fig:ep}).}
Each expert owns its complete gradient, so the fused update is exact with no reduction in either arm, the cleanest timed comparison in the paper. On a mixture-of-experts model of 12 layers and width 2048, EP8 at 16 experts is at memory parity (117 against 118~ms); as experts pack the rank the deleted gradient emerges, $-2.2$~GB at $E{=}32$ and $-8.2$~GB or $-16\%$ at $E{=}64$, at $1.12\times$ and $1.26\times$ the baseline's speed. Adding data parallelism moves the granule down one rung, at $+0.37$~GB (Appendix~N).

\paragraph{Data parallelism (Figure~\ref{fig:dp}).}
Weight-gradient tiles stream into a rotating bucket; once all-reduced it holds a complete gradient for its parameters, feeds one fused pass overlapped with the ongoing backward, and is recycled. On the 96~GB PCIe node that pool is exactly what stops the baseline: at Qwen3-8B, micro-batch 2, sequence 2048, fused AdamW under DDP is out of memory while \forge{} trains at 84.0~GB and, with its int8 sibling, gives the two fastest arms on the node (1537 and 1520~ms against 1886 for the best sharded baseline). Deleting the pool keeps the cheap replicated schedule viable where sharding pays PCIe collectives. The allocator decomposition pins the cause: the only segment that differs is the pool, 15.28~GB and growing under DDP against a flat 0.39~GB bucket (Appendix~N). The transient is $O(\text{bucket})$ and the replica axis shows it: 83.97~GB at $N{=}1,2,4,8$ alike, while DDP is out of memory at every $N$ including $N{=}1$. Against production bitsandbytes \cite{bnb8bit} at matched int8 moments the arm is smaller and faster at every size, $3\%$ to $20\%$ from 0.6B to 8B, and trains the 14B it cannot; the stochastic-rounding store \citep{gupta2015} tracks the fp32-master trajectory twice as closely as plain bf16 at identical memory (Appendix~N).

\begin{figure}[tb]
\centering
\includegraphics[width=0.85\linewidth]{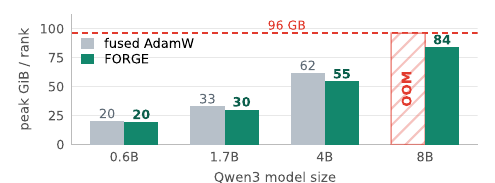}\\[3pt]
\includegraphics[width=0.85\linewidth]{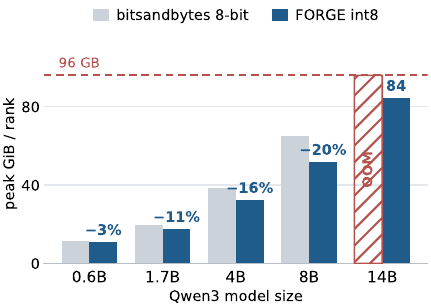}
\caption{Data parallelism on 8$\times$RTX PRO 6000 (96~GB). Top: matched bf16 states (DP8), fused AdamW out of memory at 8B while \forge{} trains at 84~GB. Bottom: matched int8 moments ($N{=}2$, micro-batch 1, sequence 2048), where the boundary moves in that class too.}
\label{fig:dp}
\end{figure}

\paragraph{Context parallelism.}
The same schedule at a different shape marks the boundary. At CP8 on a 2.5B model the live gradient collapses from 4.68 to 0.28~GB at the end of backward, but from 16k tokens the peak sits at the forward crest, where no gradient exists in either arm, so memory is parity plus the $0.37$~GB pool, flat to 131k tokens with and without activation checkpointing, at $6$--$12\%$ faster steps. One mechanism: 14.9~GB saved where parameters set the peak, 0.37~GB paid where activations do.

\paragraph{Fully sharded.}
The reduce-scatter hands each rank its complete shard and the fused pass runs on arrival. At the 8$\times$H200 anchor on a matched fp32-master recipe \forge{} runs at 53.69~GB per rank against 67.51 for DeepSpeed's ZeRO-3 \cite{zero}, both arms holding the master (Appendices~M--N); at 32B every replicated arm and ZeRO-3 are out of memory while \forge{} trains three recipes, down to 70.55~GB with int8 states. Against FSDP2 \cite{fsdp} at matched bf16 states the arms sit at 46.28 and 46.64~GB and \forge{} costs step time. That is predicted, not excepted: FSDP2 already frees each gradient shard as it reduces it, so its granule is the shard and nothing is left above the floor. The residual gap is \forge{}'s replicated embeddings, which makes our fully sharded savings conservative (Appendix~N).

\paragraph{Pipelining and composition (Appendix~N).}
Under a stashing schedule the earlier micro-batches' graphs still reference the pre-update weight, so an in-place update before the last is invalid; \forge{}'s exact-accumulation mode makes the accumulator the granule, at $+2.6\%$ with matched bf16 accumulation. Composing tensor, context, and data parallelism, the TP shards step locally while the coordinator completes the sum over exactly the CP$\times$DP slice that holds partials: across twenty-two cells the arms sit at memory parity within $+0.10$--$0.42$~GB, with losses in family at every configuration including the three-way TP2$\times$CP2$\times$DP2. On NVLink the composed arms run $1.12$--$1.56\times$ faster from 4B up and at parity below; on PCIe the current implementation is $2$--$2.6\times$ slower, for causes we name in Appendix~N, while the memory result holds on both.

\section{Limitations and Conclusion}

\label{sec:limitations}

All our numbers come from one kernel. We did not specialize it per architecture, optimizer or parallelism strategy; only the autotuner varies. \forge{} helps most when parameters rather than activations dominate the peak, where fine-tuning and continued pretraining sit. Outside that regime we still remove the same bytes at the backward-to-optimizer boundary; what changes is whether the peak is still there to lower. A long forward crest moves it, and checkpointing cuts that crest, so the two work together: an 8B model saves 14.9~GB with it on (Appendices~M--N).

Two things constrain us. Micro-batch accumulation needs a gradient-shaped buffer to sum into, so the pool comes back. And an optimizer whose state update reads a gradient statistic, Adafactor and SM3 among them, cannot close it inside a tile; it takes a deferred path that frees the gradient inside one layer's backward, so residency falls to the largest layer, not to zero. Registers and shared memory are fixed per SM and cap the tile sizes the autotuner can pick (Appendix~H), leaving launch overhead visible below about 1B parameters.

Within those limits the claim holds: the materialized weight gradient is an artifact of how the backward pass is scheduled, and removing it buys capacity and speed.

\bibliography{refs}

\clearpage
\onecolumn
\raggedbottom
\appendix
\renewcommand{\thesection}{\Alph{section}}

\begin{center}
{\LARGE\bfseries Technical Appendix}
\end{center}
\vspace{1.2em}

\noindent Every section of this appendix supports a specific claim in the main paper and is referenced from the sentence it backs. Sections: \textbf{A} exactness, \textbf{B} precision theory, \textbf{C} single-GPU measurement protocol, \textbf{D} update-path microbenchmarks, \textbf{E} numerical-fidelity tables, \textbf{F} convergence protocols, \textbf{G} extended single-GPU grids (H200, H100, and B200), \textbf{H} autotuning, \textbf{I} optimizer families end to end, \textbf{J} method taxonomy, \textbf{K} distributed protocol and recipes, \textbf{L} distributed exactness gates, \textbf{M} distributed results on 8$\times$H200, \textbf{N} distributed results on 8$\times$RTX PRO 6000, \textbf{O} cross-platform capability (A100-40/80GB, H200, RTX PRO 6000, B200, B300), \textbf{P} cross-architecture generalization (ViT, Mamba-2, and MLP-Mixer).

\section{Exactness}\label{app:exact}

\paragraph{Setup.} A linear layer maps $\mathbf{X}\in\mathbb{R}^{BT\times H}$ to $\mathbf{Y}=\mathbf{X}\mathbf{W}^{\!\top}\in\mathbb{R}^{BT\times V}$ with $\mathbf{W}\in\mathbb{R}^{V\times H}$. Writing $\dY=\partial\mathcal{L}/\partial\mathbf{Y}$, the weight gradient is $\mathbf{G}=\dY^{\!\top}\mathbf{X}$, i.e. $G_{pq}=\sum_{t=1}^{BT}\Delta Y_{tp}X_{tq}$. Nothing below uses any property of $\mathbf{X}$ beyond this identity, so the results hold for any linear layer regardless of the surrounding architecture.

\subsection{The state--weight decomposition}

Every step considered here factors into a \emph{state update} followed by a \emph{weight update},
\begin{equation}
S_t \;=\; A\big(S_{t-1},\,\mathbf{G}\big),
\qquad
\mathbf{W}_t \;=\; U\big(\mathbf{W}_{t-1},\,S_t\big),
\label{eq:au}
\end{equation}
The factorization is forced, not chosen: a weight update that read $\mathbf{G}$ directly is indistinguishable from one that first wrote it into state. The schedule therefore constrains $A$ alone --- $U$ reads state, which is resident by construction --- and that condition is strictly weaker than requiring the whole step to be coordinate-wise.

\begin{definition}[Tile-local state update]\label{def:tl}
The state update $A$ is \emph{tile-local} if it carries per-coordinate state $S_{pq}$ and computes every new entry through one shared map
\[
S'_{pq}\;=\;\alpha\big(S_{pq},\,G_{pq},\,W_{pq};\,\theta\big)
\]
that reads only the matching coordinate of $(S,\mathbf{G},\mathbf{W})$. A state update that couples coordinates through a row, column, block, or global statistic \emph{of the gradient} is not tile-local. AdamW (with $S=(m,v)$), SGD, SGD with momentum, RMSprop, Adagrad, Lion, NAdam, RAdam, LAMB, Muon and Scion all have tile-local $A$; Adafactor, AdaLomo, SM3, Adam-mini and LARS do not.
\end{definition}

\begin{definition}[Tile-local and state-global weight updates]\label{def:tlu}
The weight update $U$ is \emph{tile-local} if $W'_{pq}=\upsilon(W_{pq},S'_{pq};\theta)$ for a shared map $\upsilon$, and \emph{state-global} if it reads a row, matrix, or layer of $S'$. A state-global $U$ still reads no gradient: its argument is the state $A$ has just produced.
\end{definition}

\subsection{Per-tile exactness}

\begin{proposition}[Per-tile exactness]\label{prop:tile}
Let $A$ be tile-local and let $\{\mathcal{T}_k\}$ be any partition of the weight index set $[V]\times[H]$ into tiles. Consider the tile-fused schedule that, for each tile $\mathcal{T}_k$, forms the gradient block by accumulating over a partition $\{\kappa\}$ of the token axis,
\[
\mathbf{G}|_{\mathcal{T}_k}=\sum_{\kappa}(\dY_\kappa)^{\!\top}\mathbf{X}_\kappa,
\]
applies $\alpha$ coordinate-wise on $\mathcal{T}_k$, and discards $\mathbf{G}|_{\mathcal{T}_k}$. In exact arithmetic the state $S'$ it produces is identical, coordinate for coordinate, to forming the full gradient $\mathbf{G}$ and applying $A$ in a single shot; consequently $U(\mathbf{W},S')$ equals the one-shot weight update for \emph{every} $U$, tile-local or state-global.
\end{proposition}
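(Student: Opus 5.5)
The proof has three steps, taken in order: the gradient block is exact, the state matches coordinate by coordinate, and then the weight update matches for any $U$.

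\emph{Step 1: the tile accumulation reproduces the gradient block.} Fix a tile $\mathcal{T}_k$ and a coordinate $(p,q)\in\mathcal{T}_k$. The $(p,q)$ entry of $\sum_\kappa(\dY_\kappa)^{\!\top}\mathbf{X}_\kappa$ is $\sum_\kappa\sum_{t\in\kappa}\Delta Y_{tp}X_{tq}$. Because $\{\kappa\}$ partitions $[BT]$, each token index $t$ appears exactly once. Regrouping this finite sum, which exact arithmetic permits by associativity and commutativity, gives $\sum_{t=1}^{BT}\Delta Y_{tp}X_{tq}=G_{pq}$. So $\mathbf{G}|_{\mathcal{T}_k}$ in the fused schedule equals the restriction of the full gradient, for any token partition and any order of accumulation.

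\emph{Step 2: the state matches coordinate for coordinate.} The tiles partition $[V]\times[H]$, so each coordinate $(p,q)$ lies in exactly one tile, and $\alpha$ is applied to it exactly once. By Definition~\ref{def:tl}, $S'_{pq}=\alpha(S_{pq},G_{pq},W_{pq};\theta)$ depends only on the $(p,q)$ entries of state, gradient and weight. The fused and one-shot schedules feed $\alpha$ identical arguments: the prior state and weight are the same, and $G_{pq}$ is the same by Step~1. One point needs care. The fused kernel overwrites $\mathbf{W}$ and $S$ tile by tile, so I must check that no tile reads a coordinate another tile has already updated. This holds because $\alpha$ at $(p,q)$ reads only $(p,q)$, which belongs to this tile alone. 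The input gradient $\Delta\mathbf{X}=\dY\,\mathbf{W}$ is computed before any update, as in Algorithm~\ref{alg:forge}, so $\dY$ and $\mathbf{X}$ are fixed inputs throughout. Hence $S'$ agrees with $A(S,\mathbf{G})$ everywhere.

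\emph{Step 3: the weight update matches for every $U$.} By Equation~\ref{eq:au} and Definition~\ref{def:tlu}, $U$ is a function of $(\mathbf{W}_{t-1},S_t)$ only, and never of $\mathbf{G}$. Both arguments are now equal in the two schedules. The fused schedule must evaluate $U$ only after every tile's $\alpha$ has finished, or, when $U$ is tile-local, on each coordinate's own freshly written state. With that ordering, equal inputs give equal outputs, so the result holds for tile-local and state-global $U$ alike.

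The main obstacle is Step~2's in-place argument, not the algebra. I would handle it by treating the fused kernel as a map on disjoint index blocks and showing that the blocks' read and write sets are pairwise disjoint. Any execution order of the tiles is then equivalent to a simultaneous update.
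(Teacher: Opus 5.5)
Your proposal is correct and follows the paper's proof essentially step for step. It uses the same three stages: gradient correctness by regrouping over the token partition; separability of $\alpha$, where each coordinate lies in exactly one tile and the read/write sets are disjoint, so tile order and in-place writes do not matter; and invariance of $U$, which receives the identical arguments $(\mathbf{W}_{t-1},S_t)$. Your explicit requirement that a state-global $U$ run only after every tile's $\alpha$ has finished spells out what the paper's phrase ``leaves $\mathbf{W}_{t-1}$ as the one-shot schedule found it'' assumes; your $\Delta\mathbf{X}$-first remark is not needed here, since that ordering protects the gradient passed to earlier layers and this layer's tiles never write $\Delta\mathbf{Y}$ or $\mathbf{X}$.
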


\begin{proof}
Three steps. \emph{(Gradient correctness.)} Fix $(p,q)\in\mathcal{T}_k$. Since $\{\kappa\}$ partitions the token index set, the token-chunk accumulation adds each contribution exactly once: $\sum_\kappa\sum_{t\in\kappa}\Delta Y_{tp}X_{tq}=\sum_{t=1}^{BT}\Delta Y_{tp}X_{tq}=G_{pq}$. The tile therefore holds exactly the sub-block $\mathbf{G}|_{\mathcal{T}_k}$ of the full gradient. \emph{(Separability of $A$.)} By Definition~\ref{def:tl} the new state at $(p,q)$ is $\alpha(S_{pq},G_{pq},W_{pq};\theta)$, a function of that coordinate alone; it depends neither on which tile contains $(p,q)$ nor on the order in which tiles are visited, so the union of the per-tile maps over $[V]\times[H]$ is the one-shot $A$. Both $\alpha$ and $\upsilon$ touch only coordinates of the tile in flight, so every $\alpha$ reads a pre-update weight exactly as the one-shot schedule does. \emph{(Invariance of $U$.)} $U$ is a deterministic function of $(\mathbf{W}_{t-1},S_t)$; the previous step makes $S_t$ identical to the one-shot state and leaves $\mathbf{W}_{t-1}$ as the one-shot schedule found it, so $U$ receives identical arguments and returns an identical result. Nothing in this step uses the structure of $U$, which is why it is unconstrained.
\end{proof}

\paragraph{Four regimes.} Definitions~\ref{def:tl} and~\ref{def:tlu} sort an optimizer by two independent questions --- what $A$ reads, and what $U$ reads --- and the sort determines how many gradient bytes are resident at the backward-to-optimizer boundary.

\begin{center}\small
\setlength{\tabcolsep}{5pt}
\begin{tabular}{@{}c l l l@{}}
\toprule
regime & condition & the update runs & resident $\mathbf{G}$ \\
\midrule
1 & $A$ tile-local, $U$ tile-local & entirely in the tile epilogue & $O(T_RT_C)$ \\
2 & $A$ tile-local, $U$ state-global & $A$ in the epilogue, $U$ after, over state & $O(T_RT_C)$ \\
3 & $A$ reads a gradient statistic of one layer & deferred, inside that layer's backward & $O(\max_\ell P_\ell)$ \\
4 & $A$ reads a gradient statistic across layers & after the entire backward & $O(P)$ \\
\bottomrule
\end{tabular}
\end{center}

\noindent Regimes~1 and~2 both eliminate the gradient; regime~2 gives up only that the weight write leaves the epilogue. Muon is the instance: $\mathbf{B}\leftarrow\mu\mathbf{B}+\mathbf{G}$ is tile-local and affine, so the weight-gradient multiply accumulates in place into the momentum buffer through the GEMM's own $\beta$ and the Newton--Schulz iteration then reads $\mathbf{B}$, never a gradient. Regime~3 moves the granule rather than losing it: a row or column reduction of $\mathbf{G}^{\odot2}$ cannot close inside a tile, but it is confined to one matrix, so the gradient is formed, consumed and freed inside that layer's backward and residency falls from $O(P)$ to the largest single layer. Appendix~\ref{app:zoo} sorts and measures every family, and the regime-3 rules turn out to gain the most.

\begin{remark}[Tiles that receive no gradient]\label{rem:untouched}
Proposition~\ref{prop:tile} applies $\alpha$ to every coordinate once per step. Folding the state's decay into the accumulate's $\beta$ --- what makes the fusion free --- applies $\alpha$ only where a contribution arrives, and the two sets coincide only when every tile is touched. Conditionally routed layers break that: an expert nothing routes to on a step must have the gradient-free part of $\alpha$ applied separately, or its state stops decaying while every other expert's decays. It is a property of $A$, not of the kernel.
\end{remark}

\paragraph{Floating-point characterization.} Proposition~\ref{prop:tile} is an exact-arithmetic statement; in floating point two cases arise. With fp32 tile accumulation and the same token-axis summation order as a reference that forms $\mathbf{G}$ in fp32 and then applies $A$, the operands are bit-identical and so are the results. Against any other implementation (a cuBLAS weight-gradient GEMM followed by a separate optimizer kernel), the sole source of discrepancy is the summation order of the token-axis reduction, a floating-point non-associativity intrinsic to the GEMM and independent of the fusion, of the magnitude two GEMM libraries already exhibit against each other. A state-global $U$ adds nothing to either case: it runs on a state the argument above has already made bit-identical, over the same layout, in the same order.

\begin{remark}[Ordering and multi-consumer weights]\label{rem:order}
Because \forge{} overwrites $\mathbf{W}$ in place during the backward pass, the chain rule requires the layer's input gradient $\Delta\mathbf{X}=\dY\,\mathbf{W}$ to be read \emph{before} the update; the kernel schedules $\Delta\mathbf{X}$ first and applies $\alpha$, then $U$, afterward. This single-consumer precondition fails when a weight is read by more than one backward path within a step, e.g. a tied embedding/LM head; such weights are excluded from fusion and stepped by the standard optimizer.
\end{remark}

\begin{remark}[Activation checkpointing]\label{rem:ac}
With checkpointing, each segment's forward recomputation runs at the start of that segment's backward, before any weight inside the segment is updated; updates flow deepest-segment-first. Every recomputation therefore observes exactly the weights its original forward observed, and the composition is exact.
\end{remark}

\subsection{Distribution}

\begin{proposition}[Distributed completeness]\label{prop:dist}
The fused update is exact precisely when the argument $A$ receives is the complete $\mathbf{G}$.
(i) \emph{Tensor, sequence, and expert parallelism} shard the weight itself (a row or column block of $\mathbf{W}$, or one expert); the token-axis contraction defining a shard's gradient is entirely rank-local, so Proposition~\ref{prop:tile} applies to the shard's coordinate subset with no coordination of any kind.
(ii) \emph{Data and context parallelism} replicate the weight, and rank $r$ computes a partial $\mathbf{G}^{(r)}$ with $\mathbf{G}=\sum_r\mathbf{G}^{(r)}$. If $A$ is nonlinear in its gradient argument --- AdamW's second moment is quadratic, $v\mapsto\beta_2 v+(1-\beta_2)\mathbf{G}^{\odot 2}$ --- then $\sum_r A(\cdot,\mathbf{G}^{(r)})\neq A(\cdot,\sum_r\mathbf{G}^{(r)})$ in general, and the state update must wait for the cross-rank sum.
(iii) If $A$ is \emph{affine} in its gradient argument, so that $A(S,\mathbf{G})=\mathcal{A}(S)+c\,\mathbf{G}$ for a gradient-free $\mathcal{A}$, the obstruction disappears. Seeding $\mathcal{A}(S)$ on exactly one rank of the group and $\mathbf{0}$ on the others gives $\sum_r A_r=\mathcal{A}(S)+c\sum_r\mathbf{G}^{(r)}=A(S,\mathbf{G})$, so reducing the \emph{state} is exact and no rank ever forms a gradient.
\end{proposition}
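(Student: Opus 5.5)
The plan is to prove the three clauses separately, each by reduction to Proposition~\ref{prop:tile} or to a direct algebraic identity. The common thread is the framing sentence: the fused step is exact if and only if $\alpha$ is evaluated on the complete entry $G_{pq}$. The "if" direction is Proposition~\ref{prop:tile} itself. The "only if" direction is supplied by the counterexample in (ii): once $\alpha$ is applied to a partial sum, the resulting state can differ from the one-shot state.

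For (i), I will formalize each sharding as a restriction of the weight index set. Under tensor parallelism, rank $r$ owns a coordinate subset $\mathcal{I}_r\subseteq[V]\times[H]$, namely a row block or a column block. Under expert parallelism, it owns an entire expert matrix. Under sequence parallelism, the weight is sharded as in tensor parallelism, and the activations reaching the weight-gradient GEMM are all-gathered along the token axis. The key step is to check that, for every $(p,q)\in\mathcal{I}_r$, the rank holds every row of $\dY_{:,p}$ and $\mathbf{X}_{:,q}$ for all $t\in[BT]$ (for expert parallelism, all tokens routed to that expert). Then $G_{pq}=\sum_t \Delta Y_{tp}X_{tq}$ is formed entirely on the rank. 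Next I apply Proposition~\ref{prop:tile} with $[V]\times[H]$ replaced by $\mathcal{I}_r$, partitioned into tiles. Because $\alpha$ is coordinate-wise, the union over ranks of the per-shard maps is the one-shot $A$, by the same separability argument used there. I also need to state the caveat of Remark~\ref{rem:untouched}: an expert that receives no tokens must still have the gradient-free part of $\alpha$ applied.

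For (ii), I will exhibit a single counterexample rather than prove a general impossibility, since the claim is only "$\neq$ in general". Take AdamW's second moment with two ranks and scalar partials $g_1=1$, $g_2=-1$. Then $\sum_r[\beta_2 v+(1-\beta_2)g_r^2]$ gives $2\beta_2 v+2(1-\beta_2)$, whereas $A(v,0)=\beta_2 v$. The decay term is also double-counted, which shows that even linear parts need care; this observation motivates the seeding in (iii). It follows that any exact schedule must all-reduce $\mathbf{G}$, or at least each bucket of it, before applying $\alpha$.

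For (iii), the identity is a one-line computation. Set $A_r=\mathcal{A}(S)\,[r=r_0]+c\,\mathbf{G}^{(r)}$. Summing over $r$ and using linearity of the sum gives $\mathcal{A}(S)+c\,\mathbf{G}=A(S,\mathbf{G})$. The step I expect to be the main obstacle is making precise the phrase "no rank ever forms a gradient". For Muon's $\mathbf{B}\leftarrow\mu\mathbf{B}+\mathbf{G}$, each rank's GEMM accumulates $c\,\mathbf{G}^{(r)}$ directly into its seeded buffer through the GEMM's $\beta$ argument, which is $\mu$ on $r_0$ and $0$ elsewhere. The all-reduce then operates on state-shaped buffers. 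Finally, $U$ reads the reduced state, so Proposition~\ref{prop:tile}'s invariance-of-$U$ step carries over verbatim.
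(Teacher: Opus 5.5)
Your proposal is correct and is essentially the paper's own argument. Clause (i) runs Proposition~\ref{prop:tile} on the shard's coordinate set once the token contraction is seen to be rank-local. Clause (ii) rests on the fact that the quadratic second moment is not additive over partial gradients. Clause (iii) is the seeded affine identity, realized by passing the decay as the GEMM's $\beta$ on one rank and $0$ on the others before an in-place all-reduce of the state, as in the paper's reduce-into-state paragraph. Your explicit scalar counterexample and the untouched-expert caveat from Remark~\ref{rem:untouched} are sound additions the paper leaves implicit; the counterexample still separates the two sides after seeding, since $g_1^2+g_2^2=2\neq 0=(g_1+g_2)^2$, so it isolates the nonlinearity and not just the double-counted decay.
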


\paragraph{Bucket-transient schedule (the nonlinear case).} Under data and context parallelism with a nonlinear $A$, \forge{} reduces gradients bucket by bucket (bucket capacity matched to the framework's) and applies the fused tile-local step to each \emph{reduced} bucket, which holds a complete gradient; exactness then follows from Proposition~\ref{prop:tile} applied bucket-wise. The materialized transient is one rotating bucket, $O(\text{bucket})$ independent of $P$, in place of the full $O(P)$ gradient. Any exact scheme compatible with data parallelism must materialize at least the reduction granule, so this schedule meets that lower bound. Empirical gates for both cases are in Appendix~\ref{app:gates}.

\paragraph{Reduce-into-state (the affine case).} Heavy-ball momentum and Muon's $\mathbf{B}\leftarrow\mu\mathbf{B}+\mathbf{G}$ are Proposition~\ref{prop:dist}(iii), and the construction needs no change to the collective library: the accumulate already takes the decay as its $\beta$ on a step's first contribution and $1$ thereafter, so passing $\mu$ on one rank and $0$ on the others \emph{is} the zero seed, and an in-place \texttt{all\_reduce} of $\mathbf{B}$ reconstructs $\mu\mathbf{B}+\sum_r\mathbf{G}^{(r)}$ with no division and no staging buffer. Wire volume is unchanged; the object reduced is the momentum, so the bucket transient does not arise. Exact by Proposition~\ref{prop:dist}, not bit-exact in practice --- a collective's reduction tree does not reproduce a sequential summation order.

\begin{remark}[State-global $U$ under weight sharding]\label{rem:ushard}
A tile-local $U$ never crosses a shard boundary, because every coordinate it writes is one it read. A state-global $U$ reads a whole matrix of $S'$, which sharding has split, so it must gather it --- and the gathered volume is the layer's full parameter count \emph{independent of sharding degree}, since sharding divides what a rank owns and not what its $U$ reads. LAMB's $U$ reduces two scalars per layer; Muon's reads every entry. The cost belongs to the optimizer rather than to the fusion --- any implementation of Newton--Schulz over a sharded weight pays it, fused or not --- and it is the axis on which to choose between a regime-1 and a regime-2 rule under heavy sharding.
\end{remark}

\section{Precision Theory}\label{app:precision}

\paragraph{Undamped truncation bias.} In a BF16-everywhere pipeline the backward GEMM accumulates in fp32 but rounds $\nabla_{\mathbf{W}}\mathcal{L}$ to bf16 when writing it to HBM, a relative error of order $2^{-8}$; the optimizer reads that rounded value into both moments. Write the stored first moment as $\tilde m_t=\beta_1\tilde m_{t-1}+(1-\beta_1)(g_t+\varepsilon_t)$ with truncation $\varepsilon_t$. Unrolling, the induced deviation is $\Delta_t=\sum_{k\le t}\beta_1^{t-k}(1-\beta_1)\varepsilon_k$: each error enters scaled by $(1-\beta_1)$ but persists with horizon $\sum_j\beta_1^j=1/(1-\beta_1)$, so the two factors cancel and a systematic $\varepsilon$ passes through with magnitude $\sim\varepsilon$, undamped. \forge{}'s update reads the fp32 register accumulator, so $\varepsilon_t\equiv 0$ on this path.

\paragraph{INT8 states.} Moments are stored with a per-block absmax scale over 64-wide blocks: for block absmax $a$, scale $s=a/127$ and $q(x)=s\,\mathrm{round}(x/s)$, re-quantized every step. The stored-state error is a damped accumulation of per-step round-off: with $|\varepsilon_k|\le s/2$, $|\tilde m_t-m_t|\le\frac{s}{2}\sum_j\beta_1^j=\frac{s}{2(1-\beta_1)}$, and round-to-nearest makes the errors mean-zero with RMS $\approx s/\sqrt{12(1-\beta_1^2)}$; the $v$ bound follows with $\beta_2$. The $1/(1-\beta_2)$ horizon is why the second moment is the delicate state: at $\beta_2=0.999$ a quantized-to-zero entry would persist $\sim$1000 steps and blow up $1/\sqrt{v}$, so $v$ is floored away from zero. The 64-wide blocks confine an outlier's influence to its own block; the update arithmetic and the gradient remain fp32 throughout.

\section{Single-GPU Measurement Protocol}\label{app:protocol1}

\begin{center}\small
\setlength{\tabcolsep}{5pt}
\begin{tabular}{@{}l l@{}}
\toprule
memory metric & \texttt{torch.cuda.max\_memory\_allocated()}$/2^{30}$, reported as GB \\
process model & one process per cell (the autotuner pins its workspace for process life) \\
warmup & $\ge$30 iterations, 10 at the largest shapes; absorbs JIT and autotuning \\
measured & three runs per cell, each the median of 20 steps; headline cells \\
 & report mean $\pm$ sd across runs, grid cells the median \\
ratios & drawn only between cells from one session on one GPU \\
per-cell record & GPU index, SM and memory clocks, temperature, power, library \\
 & versions, source-tree hash, chosen autotuner configuration \\
gate & 20-step parity vs.\ an fp32 reference (App.~\ref{app:fidelity}) before any cell is \\
 & recorded; hyperparameters and seeded inputs identical across arms \\
\bottomrule
\end{tabular}
\end{center}

\noindent Percentages are against ceilings measured on the same device, never datasheet numbers: cuBLAS bf16 square GEMM 678.8~TF/s at $4096^3$ falling to 602.4 at $16384^3$; per-shape TN weight-gradient GEMMs 577--631~TF/s at $BT{=}32$k; HBM 4{,}252~GB/s triad and 4{,}087 device-to-device.

\section{Update-Path Microbenchmarks}\label{app:micro}

Achieved bandwidth of the optimizer step alone, and the marginal cost of the in-backward update over an identical kernel that stores the gradient instead (q/o\_proj shape, matched configurations).

\begin{figure}[tbp]
\centering
\includegraphics[width=\linewidth]{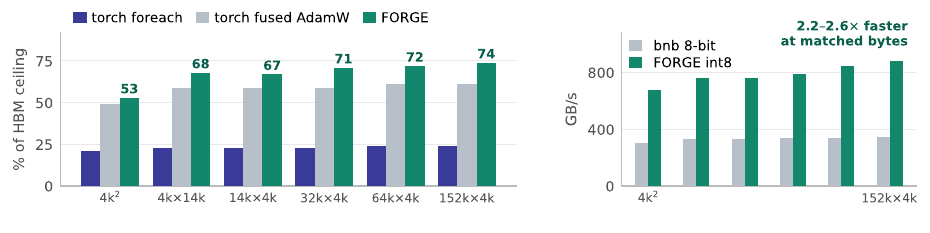}
\caption{Standalone update path. Left: achieved bandwidth as \% of the measured 4{,}252~GB/s ceiling across parameter-tensor sizes. Right: quantized-state arms at matched bytes per element (GB/s); \forge{}-int8 runs $2.2$--$2.6\times$ faster than bitsandbytes 8-bit at every size.}
\label{fig:suppbw}
\end{figure}

\begin{center}\small
\begin{tabular}{@{}l rr@{}}
\toprule
family (state tensors) & $BT{=}512$ & $BT{=}4096$ \\
\midrule
store-$\mathbf{G}$ control & 53~$\mu$s & 286~$\mu$s \\
SGD (0) & $+11$ & $+30$ \\
momentum / Adagrad / RMSprop / Lion (1) & $+49$--$59$ & $+82$--$92$ \\
Adamax / Adam / AdamW (2) & $+81$--$124$ & $+106$--$156$ \\
\bottomrule
\end{tabular}
\end{center}

\noindent Monotone in state count and nearly flat in $BT$, as a state-I/O-only cost model predicts.

\section{Numerical-Fidelity Tables}\label{app:fidelity}

Weight error after 20 steps against an fp32 reference (identical seeded inputs; $V{=}1024$, $H{=}4096$, $BT{=}2048$):

\begin{center}\small
\begin{tabular}{@{}l rr@{}}
\toprule
arm & rms-rel($W$) & max $|\Delta W|$ \\
\midrule
standard (cuBLAS $+$ torch fused AdamW) & 6.091e-4 & 7.324e-4 \\
cuBLAS $+$ \forge{} optimizer kernel & 6.096e-4 & 7.324e-4 \\
\forge{} fused (non-persistent) & \textbf{5.825e-4} & 8.545e-4 \\
\forge{} fused (persistent) & 5.825e-4 & 8.545e-4 \\
\forge{} fused (TMA) & 5.825e-4 & 8.545e-4 \\
\forge{} int8 states (qblock 64) & 2.588e-3 & 3.662e-3 \\
\forge{} fp8 states (per-tensor scale) & 2.560e-2 & 1.601e-2 \\
CUTLASS two-kernel (Hopper / Stream-K) & 6.096e-4 & 7.324e-4 \\
\bottomrule
\end{tabular}
\end{center}

\noindent The fused path's rms error sits below the decomposed path's because the gradient reaches the optimizer without the bf16 rounding applied when $\nabla_{\mathbf{W}}\mathcal{L}$ is materialized; the int8 mode's stored-state error is bounded in Appendix~\ref{app:precision}; the fp8 mode is per-tensor scaled with delayed scaling, so its error is reported here rather than bounded. Quantized modes are always labeled as such.

\section{Convergence Protocols and Curves}\label{app:convergence}

\paragraph{Continued pretraining.} OpenMathInstruct-2, batch 4, sequence 512, 20{,}000 steps, at least three seeds per method (\texttt{torch.manual\_seed}), across Llama-3.1-8B and Qwen3 \{0.6B, 1.7B, 4B, 8B, 14B\}; loss deltas are reported as the maximum across seeds and curves as the across-seed mean, reproducible under deterministic-algorithms mode. End-of-run smoothed deltas of \forge{} against fused AdamW average 0.001 nats (0.003 worst case) across the family; both bf16 arms sit the known BF16-vs-FP32 margin above the fp32-master reference ($\le 0.009$ nats), a precision-recipe floor independent of \forge{}. Figures~\ref{fig:cpt-main} and \ref{fig:cpt-family} reproduce the curves.

\begin{figure}[tbp]
\centering
\includegraphics[width=0.80\linewidth]{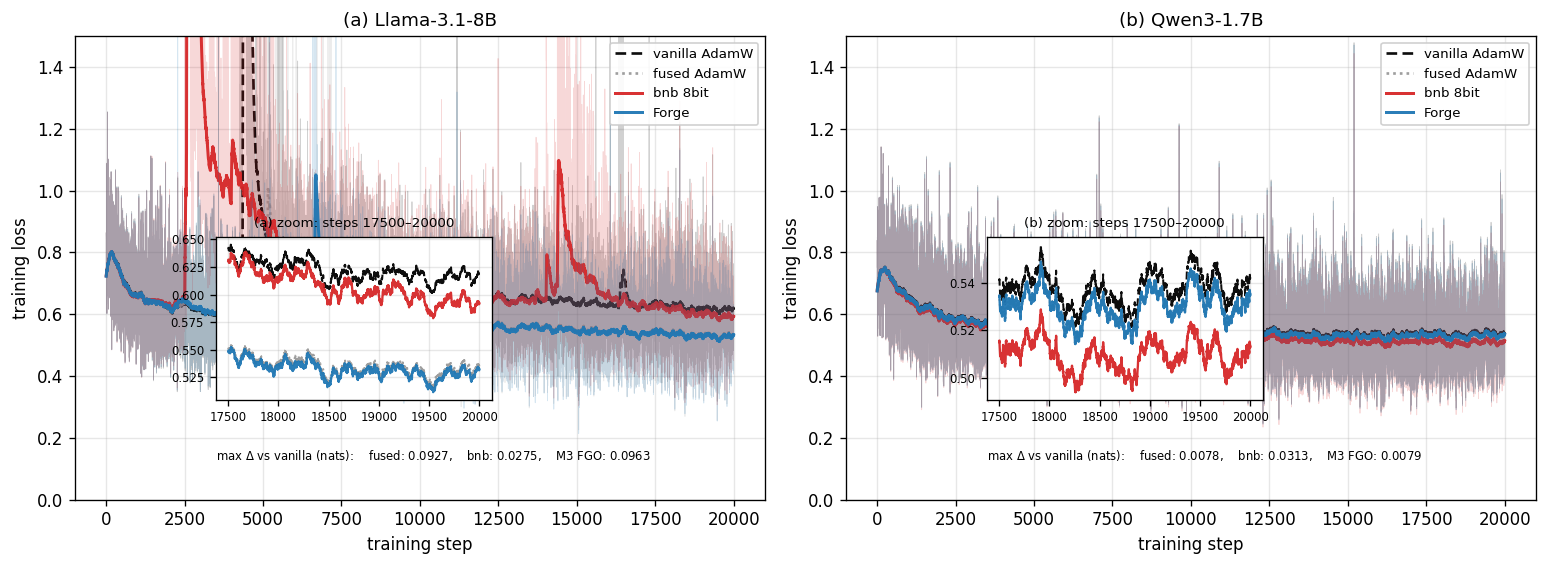}
\caption{Continued-pretraining parity: Llama-3.1-8B (left) and Qwen3-1.7B (right), with insets over the final 2{,}500 steps.}
\label{fig:cpt-main}
\end{figure}

\begin{figure}[tbp]
\centering
\includegraphics[width=0.44\linewidth]{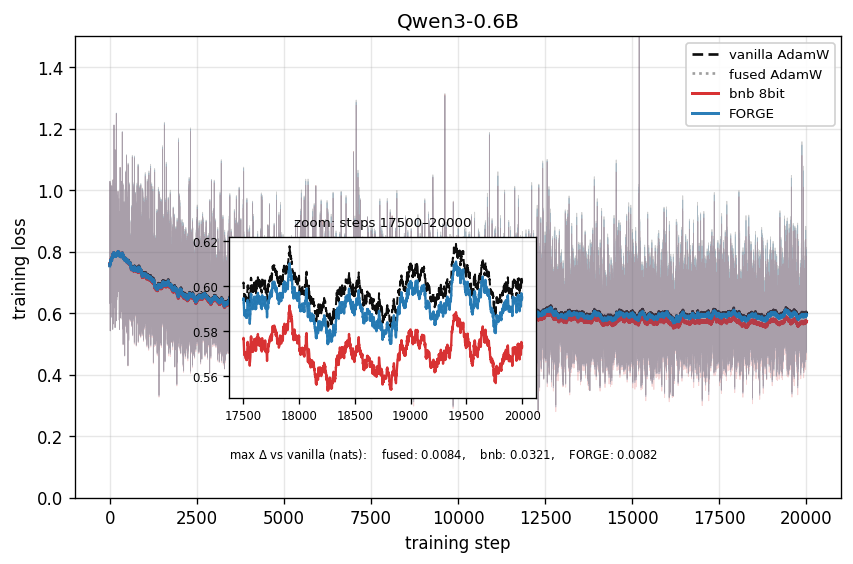}\hfill
\includegraphics[width=0.44\linewidth]{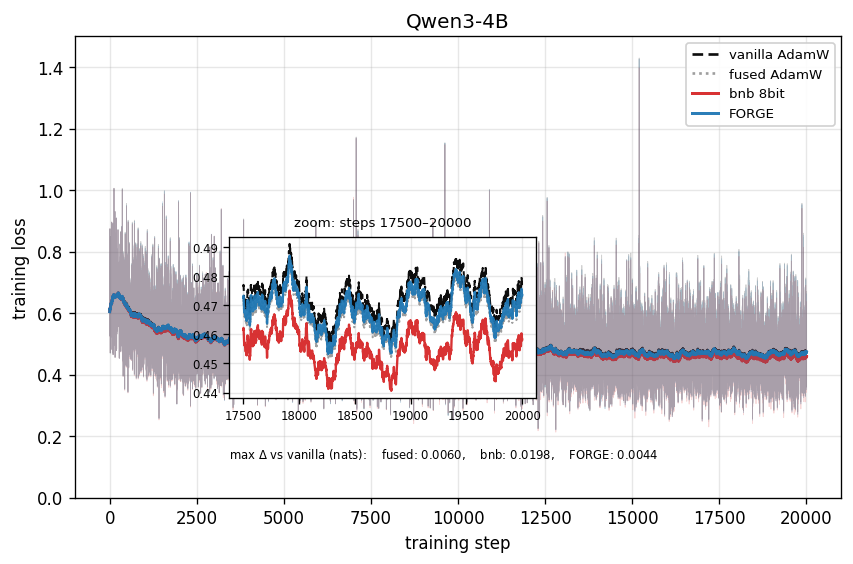}\\[3pt]
\includegraphics[width=0.44\linewidth]{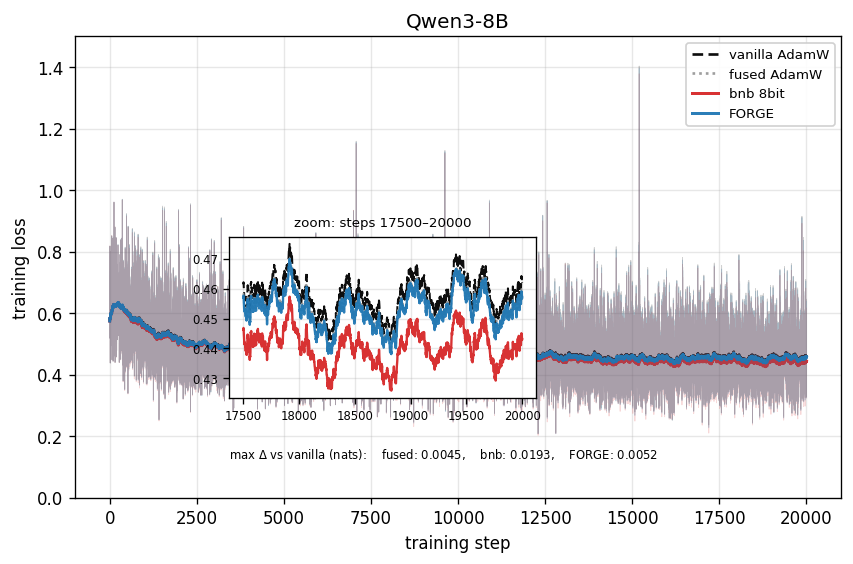}\hfill
\includegraphics[width=0.44\linewidth]{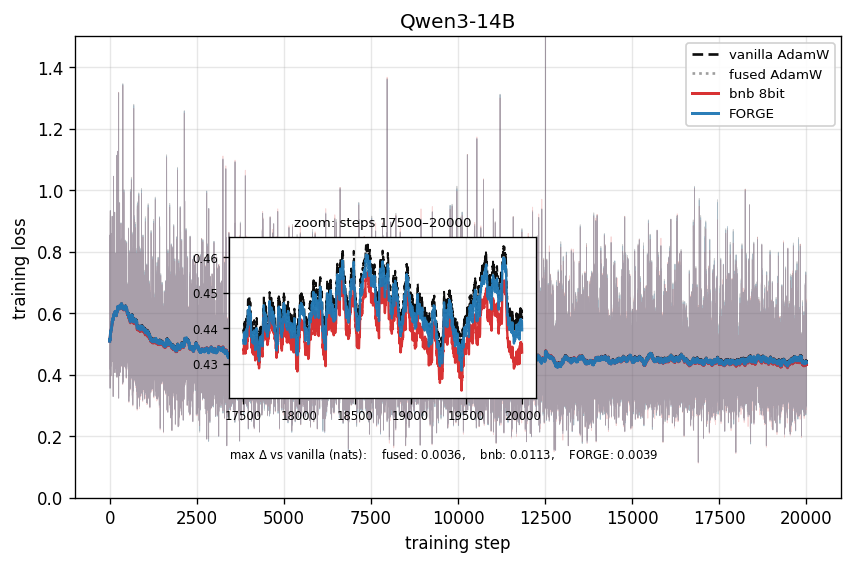}
\caption{Continued-pretraining parity across the remaining Qwen3 sizes (0.6B, 4B, 8B, 14B).}
\label{fig:cpt-family}
\end{figure}

\paragraph{From-scratch pretraining.} GPT-2 124M on FineWeb-Edu sample-10BT, 125k iterations (Figure~\ref{fig:gpt2full}); held-out loss 3.20 (\forge{}) vs 3.22 (baseline).

\begin{center}\small
\setlength{\tabcolsep}{5pt}
\begin{tabular}{@{}l l@{}}
\toprule
architecture & 12 layers, 12 heads, embedding 768, head dim 64, context 1024, \\
 & vocabulary 50{,}304 (GPT-2 BPE, padded), no biases, no dropout \\
weight tying & $\texttt{lm\_head}\leftrightarrow\texttt{wte}$; the tied head is stepped by the standard \\
 & optimizer (Remark~\ref{rem:order}) \\
\forge{} coverage & the 48 block linears, 84.93M of 124{,}373{,}760 parameters (68.3\%) \\
optimizer & AdamW, $(\beta_1,\beta_2)=(0.9,0.95)$, $\epsilon=10^{-8}$, wd 0.1 on ndim $\ge2$ \\
schedule & peak LR $6{\times}10^{-4}$, min $6{\times}10^{-5}$, 2{,}000-iteration linear warmup, cosine \\
batch & $64\times$ sequence 1024 (65{,}536 tokens/iteration), no accumulation \\
other & global-norm clipping off on every arm, seed 1337, eval every 1{,}000 \\
\bottomrule
\end{tabular}
\end{center}

\begin{figure}[tbp]
\centering
\includegraphics[width=0.68\linewidth]{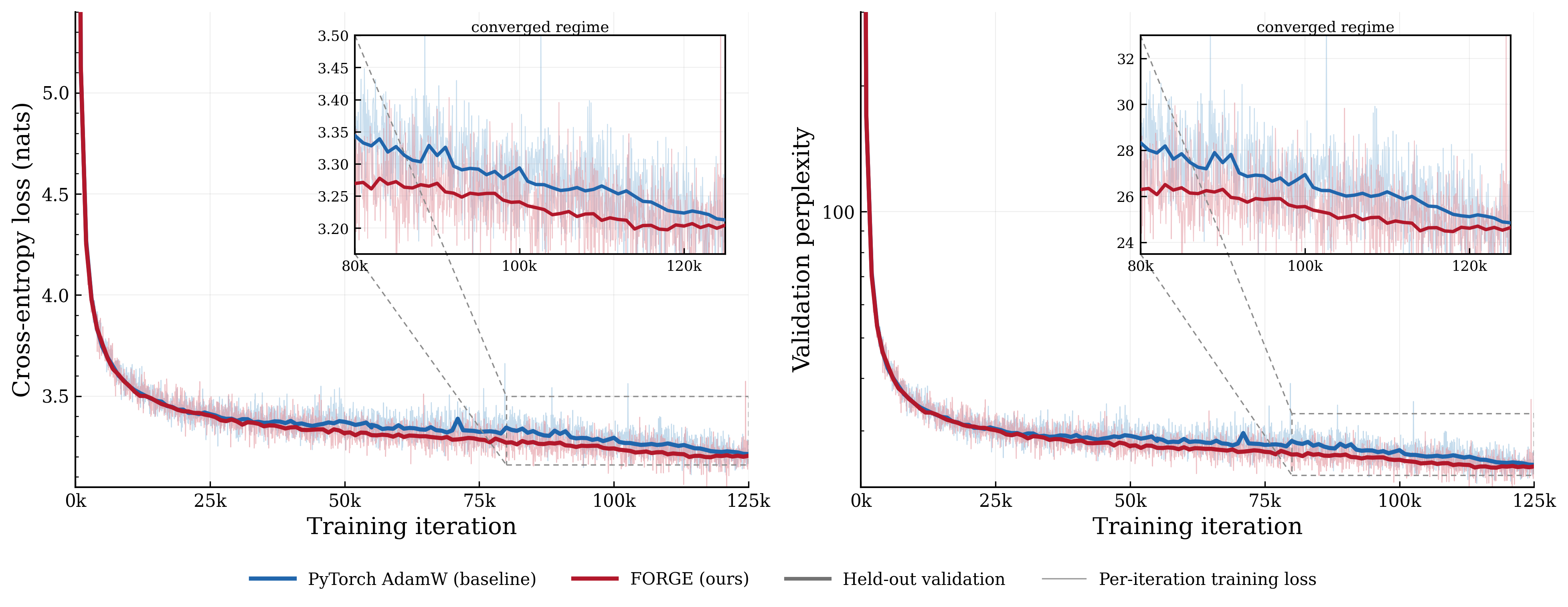}
\caption{GPT-2 124M from random initialization: held-out cross-entropy (left) and validation perplexity (right) with converged-regime insets.}
\label{fig:gpt2full}
\end{figure}

\section{Extended Single-GPU Grids}\label{app:grids1}

Peak memory is shape-deterministic; latencies are $N{=}20$ medians under the protocol of Appendix~\ref{app:protocol1}. $^{\P}$ marks arms measured together in one session, between which latency ratios may be drawn.

\subsection{Batch $\times$ sequence, per baseline (Llama-3.1-8B, H200; peak GB)}

\begin{center}\small
\begin{tabular}{@{}l l rrr@{}}
\toprule
SEQ & method & BS=1 & BS=2 & BS=4 \\
\midrule
1024 & fused AdamW & 60.27 & 60.65 & 72.36 \\
 & bnb 8-bit & 45.6 & 45.9 & 46.3 \\
 & \forge{} (bf16 states) & 51.80 & 58.65 & 72.36 \\
 & \forge{} (int8 states) & 38.79 & 45.57 & 59.31 \\
\addlinespace[2pt]
2048 & fused AdamW & 60.65 & 72.36 & 99.79 \\
 & bnb 8-bit & 45.9 & 57.6 & 85.1 \\
 & \forge{} (bf16 states) & 58.65 & 72.36 & 99.79 \\
 & \forge{} (int8 states) & 45.57 & 59.31 & 86.78 \\
\addlinespace[2pt]
4096 & fused AdamW & 72.37 & 99.79 & OOM \\
 & bnb 8-bit & 57.6 & 85.1 & OOM \\
 & \forge{} (bf16 states) & 72.37 & 99.79 & OOM \\
 & \forge{} (int8 states) & 59.31 & 86.78 & OOM \\
\bottomrule
\end{tabular}
\end{center}

\noindent At BS=16, SEQ=512 ($BT{=}8192$) the bf16 arm is at parity (both 99.79~GB) while int8 holds 86.78: past the crossover only the persistent state saving survives. BS=4, SEQ=4096 is recovered by \forge{}$+$AC.

\subsection{Multi-model grid (BS=1, SEQ=512, H200; peak GB)}

\begin{center}\small
\setlength{\tabcolsep}{4.5pt}
\begin{tabular}{@{}l rrrrrr@{}}
\toprule
method & Q-0.6B & Q-1.7B & Q-4B & Q-8B & L-8B & Q-14B \\
\midrule
vanilla AdamW & 5.34 & 14.24 & 31.85 & 63.63 & 62.04 & 113.22 \\
fused AdamW & 5.33 & 14.24 & 31.85 & 61.31 & 60.08 & 110.33 \\
optimi.grad-release & 6.3 & 15.0 & 33.5 & 66.4 & 64.5 & 117.3 \\
bnb 8-bit & 4.2 & 11.1 & 24.4 & 46.3 & 45.4 & 83.7 \\
GaLore r=128 & 4.0 & 9.5 & 19.1 & 39.0 & 37.2 & 65.9 \\
APOLLO r=256 & 4.2 & 9.9 & 20.1 & 40.1 & 38.4 & 67.6 \\
AdaLomo & 3.1 & 7.0 & 12.1 & 25.2 & 23.6 & 40.7 \\
FlashOptim & 4.7 & 10.4 & 22.6 & 43.5 & 42.3 & 76.6 \\
\forge{} (bf16 states) & 5.77 & 12.98 & 27.39 & 49.64 & 48.36 & 87.80 \\
\forge{} (int8 states) & 4.66 & 9.98 & 20.31 & 36.53 & 35.32 & 63.78 \\
\bottomrule
\end{tabular}
\end{center}

\noindent Peak is shape-deterministic, so these columns carry unchanged to the H100 and B200 sweeps below; only latencies are per-platform.

\subsection{Capability walk on one H200 (batch 1, no activation checkpointing)}

\begin{center}\small
\begin{tabular}{@{}l ccc c r@{}}
\toprule
& \multicolumn{3}{c}{14B trains at seq} & 32B & peak @ 14B s512 \\
method & 512 & 2048 & 8192 & 512 & (GB) \\
\midrule
standard (torch fused) & \checkmark & \checkmark & --- & --- & 110.33 \\
bitsandbytes 8-bit & \checkmark & \checkmark & --- & --- & 83.6 \\
\forge{} bf16 & \checkmark & \checkmark & --- & --- & 87.80 \\
\forge{} int8 & \checkmark & \checkmark & --- & --- & 63.78 \\
\forge{} fp8 & \checkmark & \checkmark & --- & \checkmark & 62.1 \\
\bottomrule
\end{tabular}
\end{center}

\noindent The 32B cell runs at 134.4 of 139.8~GB, 1165~ms/step, and is out of memory at sequence 2048. The standard path's 14B peak is sequence-invariant --- a states-plus-gradients watermark --- while \forge{}'s tracks activations.

\subsection{Optimizer-phase and peak at 8B (batch 1, sequence 4096, H200)}

Columns restricted to the quantities cited in the main text:

\begin{center}\small
\begin{tabular}{@{}l rr@{}}
\toprule
method & optimizer phase (ms) & peak (GB) \\
\midrule
standard fused & 46.4 & 72.4 \\
standard fused $+$ CUDA-graph step & 50.8 & 91.4 \\
standard foreach & 111.0 & 72.4 \\
bnb 8-bit & 245.1 & 61.7 \\
\forge{} bf16 & 4.5 & 72.4 \\
\forge{} int8 & 4.5 & 63.4 \\
\forge{} fp8 & 4.4 & 62.5 \\
\bottomrule
\end{tabular}
\end{center}

\noindent The peak sits at the logits moment here, so bf16 states match the baseline while int8/fp8 cut 12\%; the graph-captured step is no faster and holds $+15$~GB of capture pool.

\subsection{Full baseline sweep across sequence length (Llama-3.1-8B, H200, BS=1)}\label{app:fullh200}

Each cell: per-step latency (ms) / peak memory (GB), measured BF16 everywhere with FlashAttention-3 and Liger kernels. $^\dagger$approximated AdamW dynamics (not numerically identical to vanilla AdamW). $^\ddagger$COAT rows use the COAT paper's fp32-master $+$ autocast(bf16) $+$ SDPA recipe, not BF16-everywhere $+$ FA3; within those rows the correct reference is COAT-anchor ($\sim$120~GB), not the BF16-everywhere fused-AdamW row --- the two regimes are never compared against each other.

\begin{center}\small
\setlength{\tabcolsep}{3.5pt}
\begin{tabular}{@{}l rr rr rr rr@{}}
\toprule
 & \multicolumn{2}{c}{$S{=}512$} & \multicolumn{2}{c}{$S{=}1024$} & \multicolumn{2}{c}{$S{=}2048$} & \multicolumn{2}{c}{$S{=}4096$} \\
\cmidrule(lr){2-3}\cmidrule(lr){4-5}\cmidrule(lr){6-7}\cmidrule(lr){8-9}
method & ms & GB & ms & GB & ms & GB & ms & GB \\
\midrule
vanilla AdamW$^{\P}$ & 167.1 & 62.04 & 178.0 & 62.22 & 255.3 & 62.59 & 431.6 & 72.37 \\
fused AdamW$^{\P}$ & 134.3 & 60.08 & 149.9 & 60.27 & 225.0 & 60.65 & 402.0 & 72.37 \\
\forge{} (bf16 states)$^{\P}$ & \pnum{110.2} & 48.36 & \pnum{140.8} & 51.80 & \pnum{221.4} & 58.65 & \pnum{405.2} & 72.37 \\
\forge{} (int8 states)$^{\P}$ & \pnum{155.0} & 35.32 & \pnum{192.2} & 38.79 & \pnum{306.6} & 45.57 & \pnum{516.7} & 59.31 \\
optimi.grad-release & 188.7 & 64.5 & 220.2 & 127.1 & \multicolumn{2}{c}{OOM} & \multicolumn{2}{c}{OOM} \\
bnb 8-bit & 316.3 & 45.4 & 339.0 & 45.6 & 408.4 & 45.9 & 574.3 & 57.6 \\
GaLore r$=$128$^\dagger$ & 149.2 & 37.2 & 160.8 & 37.4 & 235.5 & 37.8 & 411.9 & 47.5 \\
APOLLO r$=$256$^\dagger$ & 184.3 & 38.4 & 207.7 & 38.6 & 277.5 & 38.9 & 435.3 & 48.6 \\
AdaLomo$^\dagger$ & 632.9 & 23.6 & 643.8 & 26.3 & 681.6 & 31.7 & 802.6 & 42.5 \\
FlashOptim & 148.0 & 42.3 & 172.9 & 60.3 & 240.7 & 82.1 & 413.4 & 110.7 \\
\midrule
COAT-anchor$^\ddagger$ & 173.2 & 120.1 & 208.3 & 120.4 & 301.8 & 121.6 & \multicolumn{2}{c}{OOM} \\
COAT-opt$^\ddagger$ & 445.8 & 76.7 & 483.2 & 77.0 & 577.8 & 78.2 & 781.9 & 96.1 \\
COAT-act$^\ddagger$ & 364.1 & 120.1 & 367.9 & 120.4 & 366.5 & 121.1 & 448.4 & 122.6 \\
COAT-both$^\ddagger$ & 636.5 & 76.7 & 639.4 & 77.0 & 639.3 & 77.7 & 721.9 & 79.1 \\
\bottomrule
\end{tabular}
\end{center}

\noindent $^{\P}$Measured together on an idle H200 under the protocol of Appendix~\ref{app:protocol1}; the $S{=}512$ column is the main paper's headline cell, whose dispersion is reported there. Two results the table makes directly: at $S{=}4096$ vanilla AdamW, fused AdamW and \forge{} (bf16) all meet at $72.37$~GB, which is the $BT$ crossover; and optimi's peak \emph{rises} from 64.5 to 127.1~GB between $S{=}512$ and $1024$, its per-parameter accumulator pool exceeding the monolithic allocation it replaces.

\subsection{Qwen3 family $\times$ sequence, H200}\label{app:h200qwen}

Each Qwen3 size at batch~1 across the same four sequence lengths and protocol. Both PyTorch references fit Qwen3-14B on one H200 at every sequence length. $^{\S}$bf16-state rows give the peak only; their step times are the main paper's.

{\footnotesize
\setlength{\tabcolsep}{4pt}
\begin{longtable}{@{}l rrrr@{}}
\toprule
method & $S{=}512$ & $S{=}1024$ & $S{=}2048$ & $S{=}4096$ \\
 & ms / GB & ms / GB & ms / GB & ms / GB \\
\midrule
\endfirsthead
\toprule
method & $S{=}512$ & $S{=}1024$ & $S{=}2048$ & $S{=}4096$ \\
 & ms / GB & ms / GB & ms / GB & ms / GB \\
\midrule
\endhead
\multicolumn{5}{@{}l}{\emph{Qwen3-0.6B}} \\
vanilla AdamW & 111.9 / 5.34 & 111.4 / 6.80 & 113.7 / 10.21 & 118.1 / 17.01 \\
fused AdamW & 107.3 / 5.33 & 106.4 / 6.80 & 109.5 / 10.21 & 113.6 / 17.01 \\
\forge{} (int8 states) & \pnum{100.2} / 4.66 & \pnum{100.6} / 6.38 & \pnum{104.1} / 9.83 & \pnum{108.0} / 16.60 \\
\forge{} (bf16 states)$^{\S}$ & --- / 5.77 & --- / 7.38 & --- / 10.79 & --- / 17.59 \\
optimi.grad-release & 177.4 / 6.3 & 174.0 / 12.4 & 176.6 / 20.3 & 181.4 / 31.5 \\
bnb 8-bit & 134.6 / 4.2 & 135.7 / 5.7 & 138.8 / 9.1 & 142.8 / 15.9 \\
GaLore r$=$128$^\dagger$ & 136.6 / 4.0 & 137.6 / 5.4 & 140.2 / 8.8 & 144.3 / 15.6 \\
APOLLO$^\dagger$ & 159.1 / 4.2 & 160.6 / 5.7 & 163.2 / 9.1 & 167.4 / 15.9 \\
AdaLomo$^\dagger$ & 527.1 / 3.1 & 517.5 / 4.6 & 525.0 / 8.0 & 534.1 / 14.8 \\
FlashOptim & 135.2 / 4.7 & 136.1 / 7.5 & 138.8 / 12.0 & 142.9 / 19.9 \\
\midrule
\multicolumn{5}{@{}l}{\emph{Qwen3-1.7B}} \\
vanilla AdamW & 127.3 / 14.24 & 127.2 / 14.44 & 129.9 / 18.25 & 151.7 / 26.81 \\
fused AdamW & 114.5 / 14.24 & 114.3 / 14.44 & 117.1 / 18.25 & 139.4 / 26.81 \\
\forge{} (int8 states) & \pnum{101.7} / 9.98 & \pnum{102.7} / 12.12 & \pnum{106.4} / 16.40 & \pnum{112.6} / 24.97 \\
\forge{} (bf16 states)$^{\S}$ & --- / 12.98 & --- / 15.12 & --- / 19.41 & --- / 27.97 \\
optimi.grad-release & 177.2 / 15.0 & 177.8 / 30.0 & 182.1 / 47.1 & 186.2 / 68.5 \\
bnb 8-bit & 164.8 / 11.1 & 166.7 / 11.3 & 169.7 / 15.1 & 191.5 / 23.7 \\
GaLore r$=$128$^\dagger$ & 137.4 / 9.5 & 138.7 / 9.7 & 141.8 / 13.5 & 156.7 / 22.0 \\
APOLLO$^\dagger$ & 160.3 / 9.9 & 162.3 / 10.1 & 164.8 / 13.9 & 184.7 / 22.5 \\
AdaLomo$^\dagger$ & 527.5 / 7.0 & 520.3 / 7.6 & 531.1 / 11.8 & 537.3 / 20.4 \\
FlashOptim & 138.1 / 10.4 & 139.2 / 15.8 & 141.9 / 23.3 & 146.2 / 35.0 \\
\midrule
\multicolumn{5}{@{}l}{\emph{Qwen3-4B}} \\
vanilla AdamW & 187.6 / 31.85 & 188.6 / 32.10 & 199.4 / 35.61 & 298.0 / 48.51 \\
fused AdamW & 158.4 / 31.85 & 159.4 / 32.10 & 171.0 / 35.61 & 271.9 / 48.51 \\
\forge{} (int8 states) & \pnum{129.8} / 20.31 & \pnum{132.6} / 23.71 & \pnum{135.4} / 30.00 & \pnum{200.0} / 42.94 \\
\forge{} (bf16 states)$^{\S}$ & --- / 27.39 & --- / 30.72 & --- / 37.05 & --- / 49.96 \\
optimi.grad-release & 225.6 / 33.5 & 230.9 / 66.9 & 232.8 / 103.4 & OOM \\
bnb 8-bit & 260.6 / 24.4 & 262.7 / 24.7 & 275.3 / 28.2 & 370.7 / 41.1 \\
GaLore r$=$128$^\dagger$ & 176.9 / 19.1 & 180.0 / 19.4 & 182.8 / 22.8 & 280.3 / 35.8 \\
APOLLO$^\dagger$ & 212.4 / 20.1 & 215.1 / 20.4 & 224.6 / 23.8 & 317.1 / 36.8 \\
AdaLomo$^\dagger$ & 809.4 / 12.1 & 801.1 / 14.2 & 813.3 / 20.5 & 841.4 / 33.5 \\
FlashOptim & 179.0 / 22.6 & 181.4 / 33.5 & 184.5 / 47.3 & 276.1 / 67.8 \\
\midrule
\multicolumn{5}{@{}l}{\emph{Qwen3-8B}} \\
vanilla AdamW & 243.5 / 63.63 & 246.1 / 63.85 & 303.8 / 64.28 & 479.0 / 76.27 \\
fused AdamW & 184.8 / 61.31 & 187.1 / 61.53 & 249.3 / 61.97 & 429.0 / 76.27 \\
\forge{} (int8 states) & \pnum{131.6} / 36.53 & \pnum{135.1} / 40.27 & \pnum{155.0} / 47.91 & \pnum{288.7} / 63.07 \\
\forge{} (bf16 states)$^{\S}$ & --- / 49.64 & --- / 53.45 & --- / 61.06 & --- / 76.27 \\
optimi.grad-release & 230.0 / 66.4 & 248.5 / 130.3 & OOM & OOM \\
bnb 8-bit & 370.8 / 46.3 & 375.3 / 46.5 & 433.4 / 47.0 & 605.8 / 61.3 \\
GaLore r$=$128$^\dagger$ & 196.1 / 39.0 & 199.7 / 39.2 & 251.6 / 39.7 & 429.2 / 51.5 \\
APOLLO$^\dagger$ & 240.0 / 40.1 & 243.7 / 40.3 & 297.7 / 40.7 & 460.3 / 52.6 \\
AdaLomo$^\dagger$ & 909.7 / 25.2 & 919.4 / 28.2 & 933.3 / 34.0 & 1032.1 / 45.8 \\
FlashOptim & 178.8 / 43.5 & 196.9 / 62.1 & 259.1 / 84.9 & 438.5 / 115.4 \\
\midrule
\multicolumn{5}{@{}l}{\emph{Qwen3-14B}} \\
vanilla AdamW & 265.3 / 113.22 & 317.5$^{\P}$ / 113.44 & 474.2$^{\P}$ / 113.89 & 791.5$^{\P}$ / 123.96 \\
fused AdamW & 211.9 / 110.33 & 266.0$^{\P}$ / 110.55 & 417.0$^{\P}$ / 111.01 & 738.6$^{\P}$ / 123.96 \\
\forge{} (int8 states) & \pnum{150.1} / 63.78 & \pnum{152.1} / 68.74 & \pnum{248.7} / 79.10 & \pnum{480.7} / 99.59 \\
\forge{} (bf16 states)$^{\S}$ & --- / 87.80 & --- / 92.93 & --- / 103.28 & --- / 123.96 \\
optimi.grad-release & 343.6 / 117.3 & OOM & OOM & OOM \\
bnb 8-bit & 564.1 / 83.7 & 600.8 / 83.7 & 744.7 / 84.2 & 1050.3 / 97.1 \\
GaLore r$=$128$^\dagger$ & 247.2 / 65.9 & 271.7 / 66.2 & 419.2 / 66.5 & 737.0 / 76.5 \\
APOLLO$^\dagger$ & 315.0 / 67.6 & 346.3 / 67.8 & 479.5 / 68.3 & 782.3 / 78.2 \\
AdaLomo$^\dagger$ & 1216.6 / 40.7 & 1230.7 / 44.9 & 1307.6 / 53.5 & 1540.8 / 70.7 \\
FlashOptim & 257.8 / 76.6 & 300.3 / 108.6 & OOM & OOM \\
\bottomrule
\end{longtable}}

\subsection{Qwen3 family $\times$ sequence, H100 SXM 80~GB}\label{app:h100qwen}

The 80~GB budget exposes boundaries the H200 sweep does not. Among arms keeping the exact AdamW update, \forge{} (int8) is the only one that trains Qwen3-14B at BS=1, $S\le2048$ here; the approximate-dynamics arms ($^\dagger$) also fit, bitsandbytes does not. Latencies run uniformly higher than H200, consistent with the $\sim$30\% HBM3-vs-HBM3e gap. Memory columns are the platform-independent peaks; only latencies are this card's. ``n/m'': not timed on this platform. $^{\S}$peak only.

\begin{figure}[tbp]
\centering
\includegraphics[width=\linewidth]{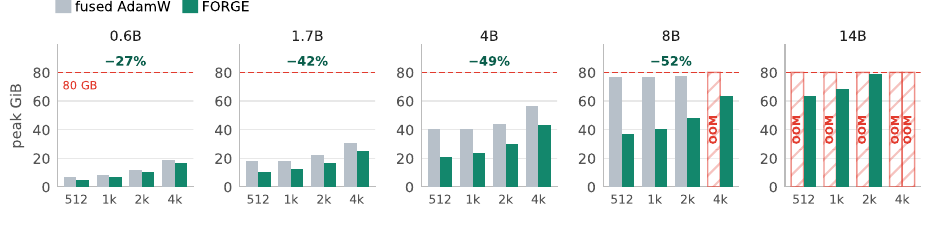}
\caption{Qwen3 family $\times$ sequence on H100 (80~GB), batch~1: peak GB, fused AdamW vs.\ \forge{}; hatched red = out of memory; the label over each panel is the saving at $S{=}512$. Vanilla AdamW is out of memory at every 8B and 14B cell. The full grid is tabulated below.}
\label{fig:supph100}
\end{figure}

{\footnotesize
\setlength{\tabcolsep}{4pt}
\begin{longtable}{@{}l rrrr@{}}
\toprule
method & $S{=}512$ & $S{=}1024$ & $S{=}2048$ & $S{=}4096$ \\
 & ms / GB & ms / GB & ms / GB & ms / GB \\
\midrule
\endfirsthead
\toprule
method & $S{=}512$ & $S{=}1024$ & $S{=}2048$ & $S{=}4096$ \\
 & ms / GB & ms / GB & ms / GB & ms / GB \\
\midrule
\endhead
\multicolumn{5}{@{}l}{\emph{Qwen3-0.6B}} \\
vanilla AdamW & 125.3 / 5.34 & 124.8 / 6.80 & 127.3 / 10.21 & 132.3 / 17.01 \\
fused AdamW & 118.0 / 5.33 & 117.0 / 6.80 & 120.5 / 10.21 & 125.0 / 17.01 \\
\forge{} (int8 states) & \pnum{107.2} / 4.66 & \pnum{107.6} / 6.38 & \pnum{111.4} / 9.83 & \pnum{115.6} / 16.60 \\
\forge{} (bf16 states)$^{\S}$ & --- / 5.77 & --- / 7.38 & --- / 10.79 & --- / 17.59 \\
optimi.grad-release & 195.1 / 6.3 & 191.4 / 12.4 & 194.3 / 20.3 & 199.5 / 31.5 \\
bnb 8-bit & 168.3 / 4.2 & 169.6 / 5.7 & 173.5 / 9.1 & 178.5 / 15.9 \\
GaLore r$=$128$^\dagger$ & 153.0 / 4.0 & 154.1 / 5.4 & 157.0 / 8.8 & 161.6 / 15.6 \\
APOLLO$^\dagger$ & 179.8 / 4.2 & 181.5 / 5.7 & 184.4 / 9.1 & 189.2 / 15.9 \\
AdaLomo$^\dagger$ & 579.8 / 3.1 & 569.3 / 4.6 & 577.5 / 8.0 & 587.5 / 14.8 \\
FlashOptim & 162.2 / 4.7 & 163.3 / 7.5 & 166.6 / 12.0 & 171.5 / 19.9 \\
\midrule
\multicolumn{5}{@{}l}{\emph{Qwen3-1.7B}} \\
vanilla AdamW & 142.6 / 14.24 & 142.5 / 14.44 & 145.5 / 18.25 & 169.9 / 26.81 \\
fused AdamW & 126.0 / 14.24 & 125.7 / 14.44 & 128.8 / 18.25 & 153.3 / 26.81 \\
\forge{} (int8 states) & \pnum{108.8} / 9.98 & \pnum{109.9} / 12.12 & \pnum{113.8} / 16.40 & \pnum{120.5} / 24.97 \\
\forge{} (bf16 states)$^{\S}$ & --- / 12.98 & --- / 15.12 & --- / 19.41 & --- / 27.97 \\
optimi.grad-release & 194.9 / 15.0 & 195.6 / 30.0 & 200.3 / 47.1 & 204.8 / 68.5 \\
bnb 8-bit & 206.0 / 11.1 & 208.4 / 11.3 & 212.1 / 15.1 & 239.4 / 23.7 \\
GaLore r$=$128$^\dagger$ & 153.9 / 9.5 & 155.3 / 9.7 & 158.8 / 13.5 & 175.5 / 22.0 \\
APOLLO$^\dagger$ & 181.1 / 9.9 & 183.4 / 10.1 & 186.2 / 13.9 & 208.7 / 22.5 \\
AdaLomo$^\dagger$ & 580.3 / 7.0 & 572.3 / 7.6 & 584.2 / 11.8 & 591.0 / 20.4 \\
FlashOptim & 165.7 / 10.4 & 167.0 / 15.8 & 170.3 / 23.3 & 175.4 / 35.0 \\
\midrule
\multicolumn{5}{@{}l}{\emph{Qwen3-4B}} \\
vanilla AdamW & 210.1 / 31.85 & 211.2 / 32.10 & 223.3 / 35.61 & 333.8 / 48.51 \\
fused AdamW & 174.2 / 31.85 & 175.3 / 32.10 & 188.1 / 35.61 & 299.1 / 48.51 \\
\forge{} (int8 states) & \pnum{138.9} / 20.31 & \pnum{141.9} / 23.71 & \pnum{144.9} / 30.00 & \pnum{214.0} / 42.94 \\
\forge{} (bf16 states)$^{\S}$ & --- / 27.39 & --- / 30.72 & --- / 37.05 & --- / 49.96 \\
optimi.grad-release & 248.2 / 33.5 & 254.0 / 66.9 & OOM & OOM \\
bnb 8-bit & 325.8 / 24.4 & 328.4 / 24.7 & 344.1 / 28.2 & 463.4 / 41.1 \\
GaLore r$=$128$^\dagger$ & 198.1 / 19.1 & 201.6 / 19.4 & 204.7 / 22.8 & 313.9 / 35.8 \\
APOLLO$^\dagger$ & 240.0 / 20.1 & 243.1 / 20.4 & 253.8 / 23.8 & 358.3 / 36.8 \\
AdaLomo$^\dagger$ & 890.3 / 12.1 & 881.2 / 14.2 & 894.6 / 20.5 & 925.5 / 33.5 \\
FlashOptim & 214.8 / 22.6 & 217.7 / 33.5 & 221.4 / 47.3 & 331.3 / 67.8 \\
\midrule
\multicolumn{5}{@{}l}{\emph{Qwen3-8B}} \\
vanilla AdamW & n/m / 63.63 & n/m / 63.85 & n/m / 64.28 & n/m / 76.27 \\
fused AdamW & 203.3 / 61.31 & 205.8 / 61.53 & 274.2 / 61.97 & n/m / 76.27 \\
\forge{} (int8 states) & \pnum{140.8} / 36.53 & \pnum{144.6} / 40.27 & \pnum{165.9} / 47.91 & \pnum{308.9} / 63.07 \\
\forge{} (bf16 states)$^{\S}$ & --- / 49.64 & --- / 53.45 & --- / 61.06 & --- / 76.27 \\
optimi.grad-release & 253.0 / 66.4 & OOM & OOM & OOM \\
bnb 8-bit & 463.5 / 46.3 & 469.1 / 46.5 & 541.8 / 47.0 & 757.3 / 61.3 \\
GaLore r$=$128$^\dagger$ & 219.6 / 39.0 & 223.7 / 39.2 & 281.8 / 39.7 & 480.7 / 51.5 \\
APOLLO$^\dagger$ & 271.2 / 40.1 & 275.4 / 40.3 & 336.4 / 40.7 & 520.1 / 52.6 \\
AdaLomo$^\dagger$ & 1000.7 / 25.2 & 1011.3 / 28.2 & 1026.6 / 34.0 & 1135.3 / 45.8 \\
FlashOptim & 214.6 / 43.5 & 236.3 / 62.1 & OOM & OOM \\
\midrule
\multicolumn{5}{@{}l}{\emph{Qwen3-14B}} \\
vanilla AdamW & OOM / 113.22 & OOM / 113.44 & OOM / 113.89 & OOM / 123.96 \\
fused AdamW & OOM / 110.33 & OOM / 110.55 & OOM / 111.01 & OOM / 123.96 \\
\forge{} (int8 states) & \pnum{160.6} / 63.78 & \pnum{162.7} / 68.74 & \pnum{266.1} / 79.10 & OOM / 99.59 \\
\forge{} (bf16 states)$^{\S}$ & OOM / 87.80 & OOM / 92.93 & OOM / 103.28 & OOM / 123.96 \\
optimi.grad-release & OOM & OOM & OOM & OOM \\
bnb 8-bit & OOM & OOM & OOM & OOM \\
GaLore r$=$128$^\dagger$ & 276.9 / 65.9 & 304.3 / 66.2 & 469.5 / 66.5 & 825.4 / 76.5 \\
APOLLO$^\dagger$ & 356.0 / 67.6 & 391.3 / 67.8 & 541.8 / 68.3 & 884.0 / 78.2 \\
AdaLomo$^\dagger$ & 1338.3 / 40.7 & 1353.8 / 44.9 & 1438.4 / 53.5 & 1694.9 / 70.7 \\
FlashOptim & 309.4 / 76.6 & OOM & OOM & OOM \\
\bottomrule
\end{longtable}}

\subsection{B200 (Blackwell) sweeps}\label{app:b200}

\paragraph{Provenance.} $^\star$mean of 5 rather than the $N{=}20$ median, so latencies carry wider per-step variance and ratios are drawn only within the marked rows. Two cells falling outside their within-model trend (Qwen3-1.7B at $S{=}2048$, Qwen3-14B at $S{=}512$) are withheld rather than reported, since neither is separable from sample variance; their memory is unaffected. Memory columns are the platform-independent peaks, latencies B200's own; ``n/m'' as on H100, $^{\S}$peak only. Stack: FlashAttention-4 (FA3 ships sm\_90 cubins only, and the attention backend does not touch the weight-gradient path), CUDA~12.8, Triton~3.6.

\paragraph{Llama-3.1-8B, BS=1, across sequence.}

\begin{center}\small
\setlength{\tabcolsep}{3.5pt}
\begin{tabular}{@{}l rr rr rr rr@{}}
\toprule
 & \multicolumn{2}{c}{$S{=}512$} & \multicolumn{2}{c}{$S{=}1024$} & \multicolumn{2}{c}{$S{=}2048$} & \multicolumn{2}{c}{$S{=}4096$} \\
\cmidrule(lr){2-3}\cmidrule(lr){4-5}\cmidrule(lr){6-7}\cmidrule(lr){8-9}
method & ms & GB & ms & GB & ms & GB & ms & GB \\
\midrule
vanilla AdamW$^\star$ & 240.1 & 62.04 & 242.6 & 62.22 & 251.3 & 62.59 & 324.7 & 72.37 \\
fused AdamW$^\star$ & 205.2 & 60.08 & 212.9 & 60.27 & 216.9 & 60.65 & 285.0 & 72.37 \\
\forge{} (int8 states)$^\star$ & \pnum{172.6} & 35.32 & \pnum{178.1} & 38.79 & \pnum{178.7} & 45.57 & \pnum{202.0} & 59.31 \\
\forge{} (bf16 states)$^{\S}$ & --- & 48.36 & --- & 51.80 & --- & 58.65 & --- & 72.37 \\
optimi.grad-release & 240.6 & 64.5 & 239.1 & 127.0 & \multicolumn{2}{c}{OOM} & \multicolumn{2}{c}{OOM} \\
bnb 8-bit & 364.6 & 45.3 & 368.6 & 45.5 & 368.4 & 45.9 & 406.9 & 57.6 \\
GaLore r$=$128$^\dagger$ & 242.1 & 37.2 & 249.6 & 37.4 & 244.4 & 37.7 & 271.1 & 47.4 \\
APOLLO$^\dagger$ & 279.5 & 38.3 & 280.2 & 38.5 & 282.0 & 38.9 & 313.5 & 48.5 \\
AdaLomo$^\dagger$ & 708.0 & 23.5 & 715.2 & 26.2 & 715.5 & 31.6 & 763.3 & 42.4 \\
FlashOptim & 218.7 & 42.2 & 218.1 & 60.3 & 223.1 & 82.0 & 263.2 & 110.7 \\
\bottomrule
\end{tabular}
\end{center}

\paragraph{Qwen3 family $\times$ sequence.} The Protocol-A vanilla and fused arms use the fp32-master reference recipe, which is why they are out of memory at Qwen3-14B within the 180~GB budget.

\begin{figure}[tbp]
\centering
\includegraphics[width=\linewidth]{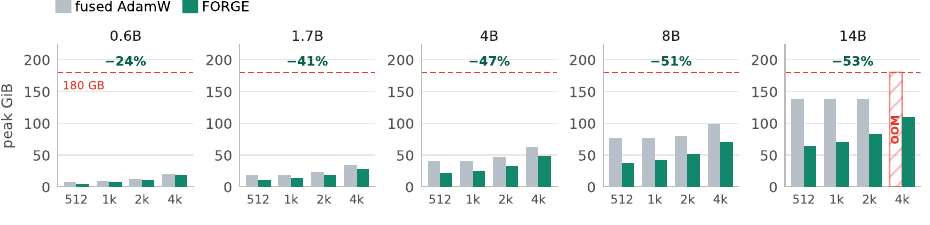}
\caption{Qwen3 family $\times$ sequence on B200 (180~GB), batch~1, Protocol-A ($^\star$) cells: peak GB, fused AdamW vs.\ \forge{}; hatched red = out of memory; panel labels give the saving at $S{=}512$. Vanilla AdamW is out of memory at every 14B cell; peak memory is protocol-independent. The full grid is tabulated below.}
\label{fig:suppb200}
\end{figure}

{\footnotesize
\setlength{\tabcolsep}{4pt}
\begin{longtable}{@{}l rrrr@{}}
\toprule
method & $S{=}512$ & $S{=}1024$ & $S{=}2048$ & $S{=}4096$ \\
 & ms / GB & ms / GB & ms / GB & ms / GB \\
\midrule
\endfirsthead
\toprule
method & $S{=}512$ & $S{=}1024$ & $S{=}2048$ & $S{=}4096$ \\
 & ms / GB & ms / GB & ms / GB & ms / GB \\
\midrule
\endhead
\multicolumn{5}{@{}l}{\emph{Qwen3-0.6B}} \\
vanilla AdamW$^\star$ & 188.4 / 5.34 & 198.9 / 6.80 & 199.2 / 10.21 & 186.4 / 17.01 \\
fused AdamW$^\star$ & 186.9 / 5.33 & 197.3 / 6.80 & 198.1 / 10.21 & 192.6 / 17.01 \\
\forge{} (int8 states)$^\star$ & \pnum{181.5} / 4.66 & \pnum{189.6} / 6.38 & \pnum{191.3} / 9.83 & \pnum{171.4} / 16.60 \\
\forge{} (bf16 states)$^{\S}$ & --- / 5.77 & --- / 7.38 & --- / 10.79 & --- / 17.59 \\
optimi.grad-release & 248.6 / 6.2 & 246.3 / 12.4 & 248.9 / 20.2 & 258.3 / 31.5 \\
bnb 8-bit & 232.7 / 4.2 & 235.3 / 5.7 & 232.3 / 9.1 & 233.2 / 15.9 \\
GaLore r$=$128$^\dagger$ & 229.7 / 3.9 & 229.1 / 5.4 & 231.3 / 8.8 & 234.7 / 15.6 \\
APOLLO$^\dagger$ & 249.4 / 4.2 & 251.6 / 5.7 & 251.0 / 9.1 & 254.2 / 15.9 \\
AdaLomo$^\dagger$ & 584.9 / 3.1 & 582.2 / 4.5 & 584.1 / 7.9 & 583.5 / 14.8 \\
FlashOptim & 221.5 / 4.6 & 217.0 / 7.4 & 219.1 / 12.0 & 222.5 / 19.9 \\
\midrule
\multicolumn{5}{@{}l}{\emph{Qwen3-1.7B}} \\
vanilla AdamW$^\star$ & 150.5 / 14.24 & 123.1 / 14.44 & 123.8 / 18.25 & 127.7 / 26.81 \\
fused AdamW$^\star$ & 192.1 / 14.24 & 113.2 / 14.44 & 114.5 / 18.25 & 118.3 / 26.81 \\
\forge{} (int8 states)$^\star$ & \pnum{180.6} / 9.98 & \pnum{101.6} / 12.12 & n / 16.40 & \pnum{100.4} / 24.97 \\
\forge{} (bf16 states)$^{\S}$ & --- / 12.98 & --- / 15.12 & --- / 19.41 & --- / 27.97 \\
optimi.grad-release & 255.9 / 15.0 & 250.6 / 29.9 & 258.9 / 47.0 & 276.2 / 68.4 \\
bnb 8-bit & 242.4 / 11.0 & 243.3 / 11.2 & 245.0 / 15.0 & 248.9 / 23.6 \\
GaLore r$=$128$^\dagger$ & 267.3 / 9.4 & 255.1 / 9.6 & 261.9 / 13.4 & 266.1 / 22.0 \\
APOLLO$^\dagger$ & 246.0 / 9.8 & 253.6 / 10.1 & 252.4 / 13.8 & 255.1 / 22.4 \\
AdaLomo$^\dagger$ & 614.3 / 6.9 & 598.3 / 7.5 & 589.8 / 11.8 & 494.0 / 20.4 \\
FlashOptim & 220.6 / 10.4 & 219.6 / 15.7 & 221.4 / 23.2 & 225.6 / 35.0 \\
\midrule
\multicolumn{5}{@{}l}{\emph{Qwen3-4B}} \\
vanilla AdamW$^\star$ & 251.0 / 31.85 & 262.0 / 32.10 & 264.7 / 35.61 & 275.4 / 48.51 \\
fused AdamW$^\star$ & 237.4 / 31.85 & 250.1 / 32.10 & 248.9 / 35.61 & 258.1 / 48.51 \\
\forge{} (int8 states)$^\star$ & \pnum{222.1} / 20.31 & \pnum{229.3} / 23.71 & \pnum{231.0} / 30.00 & \pnum{234.4} / 42.94 \\
\forge{} (bf16 states)$^{\S}$ & --- / 27.39 & --- / 30.72 & --- / 37.05 & --- / 49.96 \\
optimi.grad-release & 293.3 / 33.5 & 304.1 / 66.9 & OOM & OOM \\
bnb 8-bit & 299.7 / 24.4 & 298.6 / 24.6 & 301.1 / 28.1 & 307.2 / 41.1 \\
GaLore r$=$128$^\dagger$ & 274.2 / 19.1 & 272.7 / 19.3 & 273.6 / 22.8 & 275.5 / 35.8 \\
APOLLO$^\dagger$ & 347.2 / 20.1 & 337.0 / 20.3 & 346.6 / 23.8 & 353.7 / 36.7 \\
AdaLomo$^\dagger$ & 905.6 / 12.1 & 914.9 / 14.2 & 904.3 / 20.5 & 915.0 / 33.4 \\
FlashOptim & 311.3 / 22.6 & 318.9 / 33.5 & 309.8 / 47.3 & 316.0 / 67.7 \\
\midrule
\multicolumn{5}{@{}l}{\emph{Qwen3-8B}} \\
vanilla AdamW$^\star$ & 283.9 / 63.63 & 294.1 / 63.85 & 294.5 / 64.28 & 374.1 / 76.27 \\
fused AdamW$^\star$ & 252.6 / 61.31 & 263.0 / 61.53 & 263.6 / 61.97 & 330.9 / 76.27 \\
\forge{} (int8 states)$^\star$ & \pnum{222.0} / 36.53 & \pnum{226.0} / 40.27 & \pnum{230.5} / 47.91 & \pnum{256.3} / 63.07 \\
\forge{} (bf16 states)$^{\S}$ & --- / 49.64 & --- / 53.45 & --- / 61.06 & --- / 76.27 \\
optimi.grad-release & OOM & OOM & OOM & OOM \\
bnb 8-bit & 397.2 / 46.2 & 400.6 / 46.5 & 402.3 / 46.9 & 429.5 / 61.2 \\
GaLore r$=$128$^\dagger$ & 269.2 / 38.9 & 272.7 / 39.1 & 273.8 / 39.6 & 291.5 / 51.5 \\
APOLLO$^\dagger$ & 332.8 / 40.0 & 352.0 / 40.2 & 353.8 / 40.7 & 366.9 / 52.6 \\
AdaLomo$^\dagger$ & 946.7 / 25.2 & 940.0 / 28.1 & 937.3 / 34.0 & 976.4 / 45.7 \\
FlashOptim & 268.7 / 43.4 & 267.4 / 62.1 & 269.9 / 84.9 & 290.0 / 115.3 \\
\midrule
\multicolumn{5}{@{}l}{\emph{Qwen3-14B}} \\
vanilla AdamW$^\star$ & n/m / 113.22 & n/m / 113.44 & n/m / 113.89 & n/m / 123.96 \\
fused AdamW$^\star$ & 226.3 / 110.33 & 312.0 / 110.55 & 336.0 / 111.01 & n/m / 123.96 \\
\forge{} (int8 states)$^\star$ & n/a / 63.78 & \pnum{246.1} / 68.74 & \pnum{252.9} / 79.10 & \pnum{362.6} / 99.59 \\
\forge{} (bf16 states)$^{\S}$ & --- / 87.80 & --- / 92.93 & --- / 103.28 & --- / 123.96 \\
optimi.grad-release & OOM & OOM & OOM & OOM \\
bnb 8-bit & 555.3 / 83.6 & 559.0 / 83.7 & 577.1 / 84.2 & 704.4 / 97.0 \\
GaLore r$=$128$^\dagger$ & 300.1 / 65.9 & 256.7 / 66.1 & 301.9 / 66.5 & 429.9 / 76.4 \\
FlashOptim & 344.4 / 76.6 & 340.9 / 108.5 & 353.1 / 146.4 & OOM \\
\bottomrule
\end{longtable}}

\section{Autotuning and Tile Selection}\label{app:autotune}

\paragraph{Search space.} Six knobs per output tile: tile height and width $T_R,T_C\in\{64,128,256\}$, reduction chunk $T_K\in\{32,64,128\}$, an L2 swizzle group size $\in\{8,16,32\}$ that co-schedules row-adjacent tiles so they share input slabs in cache, warps $\in\{4,8\}$, and pipeline stages $\in\{2,\dots,6\}$: 810 Cartesian combinations, pruned to a curated 34-point base set shipped with the kernel, which an architecture-specific pre-prune reduces further per device.

\paragraph{Why $128{\times}128$.} Pre-pruning identified two failure modes: $256$-wide tiles spill registers at 8 warps and lose occupancy across the 132 SMs (consistent with the shared-memory budget of two $32$~KiB operand slabs per pipeline stage against the per-CTA limit at three or more stages), while $64$-wide tiles produce so many blocks that launch and per-tile epilogue overhead dominates the matrix-multiply mainloop. $T_R{=}T_C{=}128$ sits at the empirical Pareto point on every device measured, and the per-shape autotune varies mainly the pipeline depth with the reduction extent.

\paragraph{Certification.} An exhaustive H200 pass timed all 800 resource-valid configurations on the reference cell (best: $128{\times}128{\times}64$, group 16, 8 warps, 4 stages) and refined the top 20 across four shapes and three $BT$; promoting two deep-pipeline variants that HBM3e's latency rewards into the shipped list recovered $+2.5$--$8.4\%$ at $BT{=}4096$, left $BT\in\{512,32768\}$ neutral within $\pm3\%$, and passed the parity gate unchanged (rms 5.825e-4).

\section{Optimizer Families End to End}\label{app:zoo}

What the state update $A$ reads fixes the regime; what the weight update $U$ reads fixes only whether the weight write stays inside the epilogue (Appendix~\ref{app:exact}). Cells are Llama-3.1-8B, batch~1, sequence~512, 10{,}000 steps after a 500-step linear warmup, RTX PRO 6000, peak GB and mean ms per step (Figure~\ref{fig:zoo1}). \emph{Both arms hold fp32 optimizer state throughout this appendix}, which is why the AdamW baseline reads 75.0~GB here against 62.04 for the BF16-everywhere recipe of Appendix~\ref{app:fullh200}: the two are different recipes on different hardware and are never compared across.

\begin{center}\footnotesize
\setlength{\tabcolsep}{4pt}
\begin{tabular}{@{}l l l c rr rr@{}}
\toprule
& & & & \multicolumn{2}{c}{peak GB} & \multicolumn{2}{c}{ms/step} \\
\cmidrule(lr){5-6}\cmidrule(lr){7-8}
family & $A$ reads & $U$ reads beyond its coordinate & reg. & base & \forge{} & base & \forge{} \\
\midrule
SGD & $G_{pq}$ & --- & 1 & 45.1 & \textbf{20.9} & 217.3 & \textbf{148.2} \\
SGD w/ mom. & $G_{pq}$ & --- & 1 & 60.0 & \textbf{34.9} & 290.4 & \textbf{166.1} \\
Adagrad & $G_{pq}$ & --- & 1 & 75.0 & \textbf{34.9} & 361.6 & \textbf{167.0} \\
RMSprop & $G_{pq}$ & --- & 1 & 75.0 & \textbf{34.9} & 361.2 & \textbf{164.6} \\
AdamW & $G_{pq}$ & --- & 1 & 75.0 & \textbf{48.8} & 379.0 & \textbf{236.0} \\
NAdam & $G_{pq}$ & --- & 1 & --- & 48.8 & --- & 177.0 \\
RAdam & $G_{pq}$ & --- & 1 & 75.0 & \textbf{48.8} & 443.7 & \textbf{181.6} \\
Lion & $G_{pq}$ & --- & 1 & 48.0 & \textbf{34.9} & 380.6 & \textbf{186.8} \\
\addlinespace[2pt]
LAMB & $G_{pq}$ & $\|\mathbf{W}\|$, $\|$update$\|$, per layer & 2 & --- & 48.8 & --- & 218.6 \\
Muon & $G_{pq}$ & matrix of $\mathbf{B}$ (Newton--Schulz) & 2 & \multicolumn{4}{c}{\S\ref{app:zoomuon}} \\
Scion & $G_{pq}$ & matrix of $S'$ & 2 & \multicolumn{4}{c}{classified only} \\
\addlinespace[2pt]
Adam-mini & block mean $\mathbf{G}^{\odot2}$ & its block's scalar & 3 & 65.9 & \textbf{34.9} & 663.1 & \textbf{266.2} \\
Adafactor & row/col sum $\mathbf{G}^{\odot2}$ & row and column of $S'$ & 3 & 41.9 & \textbf{22.2} & 951.4 & \textbf{280.9} \\
AdaLomo$^{\ddagger}$ & row/col sum $\mathbf{G}^{\odot2}$ & row and column of $S'$ & 3 & 39.9 & \textbf{20.9} & 615.7 & \textbf{189.3} \\
SM3 & row/col max $\mathbf{G}^{\odot2}$ & row and column of $S'$ & 3 & 39.9 & \textbf{20.9} & 583.8 & \textbf{168.4} \\
LARS & $\|\mathbf{G}\|$, per layer & the layer's scalars & 3 & \multicolumn{4}{c}{classified only} \\
\bottomrule
\end{tabular}
\end{center}

\noindent Peak falls $27$--$54\%$ and step time $32$--$71\%$ in every family with a baseline. The largest gains are the regime-3 families --- Adafactor $3.4\times$, SM3 $3.5\times$, AdaLomo $3.3\times$, Adam-mini $2.5\times$ --- so the rules that fuse least completely gain most, their reference implementations paying a per-tensor dispatch that the deferred path removes. Two int8-state arms were also run: AdamW at 35.8~GB / 182.6~ms, Lion at 28.3 / 167.0. NAdam and LAMB are reported single-arm, having no counterpart in \texttt{torch.optim} at matched state. $^{\ddagger}$AdaLomo runs 4{,}000 steps. Every update is verified to floating-point round-off against a reference PyTorch implementation on small tensors ($D{=}512$, $BT{=}256$) before any end-to-end run, and all thirteen train end to end.

\noindent LAMB classifies as regime~2: its norms are of $\mathbf{W}$ and of the update its moments produce, never of the gradient. The cell reported is its deferred path, so it understates the residency the classification permits; the update computed is identical either way.

One fixed protocol across families: the sweep tests the \emph{mechanism} under a shared configuration, not per-family tuned convergence.

\begin{center}\small
\begin{tabular}{@{}l llll@{}}
\toprule
family & lr & wd & $(\beta_1,\beta_2)$ & $\epsilon$ \\
\midrule
adafactor & $10^{-3}$ & 0 & --- & --- \\
adagrad & $2{\times}10^{-5}$ & $10^{-2}$ & --- & $10^{-10}$ \\
adalomo & $10^{-3}$ & 0 & --- & $10^{-8}$ \\
adam\_mini & $2{\times}10^{-5}$ & $10^{-2}$ & (0.9, 0.999) & $10^{-8}$ \\
adamw / nadam / radam & $2{\times}10^{-5}$ & $10^{-2}$ & (0.9, 0.999) & $10^{-8}$ \\
lamb & $2{\times}10^{-5}$ & $10^{-2}$ & (0.9, 0.999) & $10^{-6}$ \\
lion & $3{\times}10^{-6}$ & 1.0 & (0.9, 0.99) & --- \\
rmsprop & $2{\times}10^{-5}$ & $10^{-2}$ & --- & $10^{-8}$ \\
sgd / sgd\_momentum & $2{\times}10^{-5}$ & $10^{-2}$ & ($\beta{=}0.9$) & --- \\
sm3 & $2{\times}10^{-5}$ & --- & --- & $10^{-8}$ \\
\bottomrule
\end{tabular}
\end{center}

\begin{figure}[p]
\centering
\includegraphics[width=0.495\linewidth]{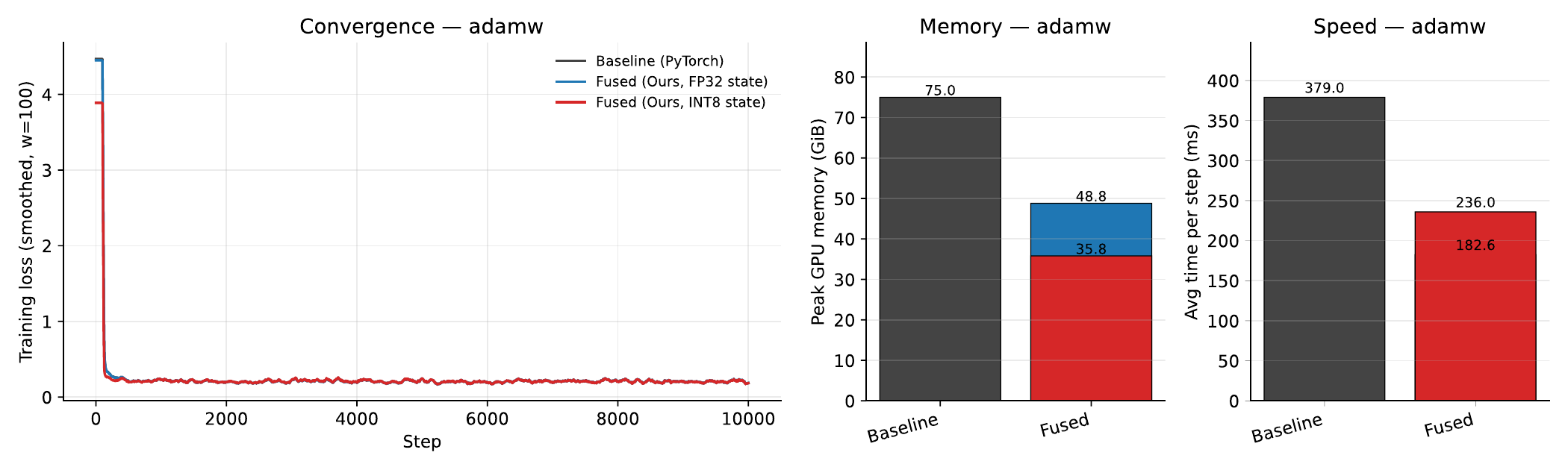}\hfill\includegraphics[width=0.495\linewidth]{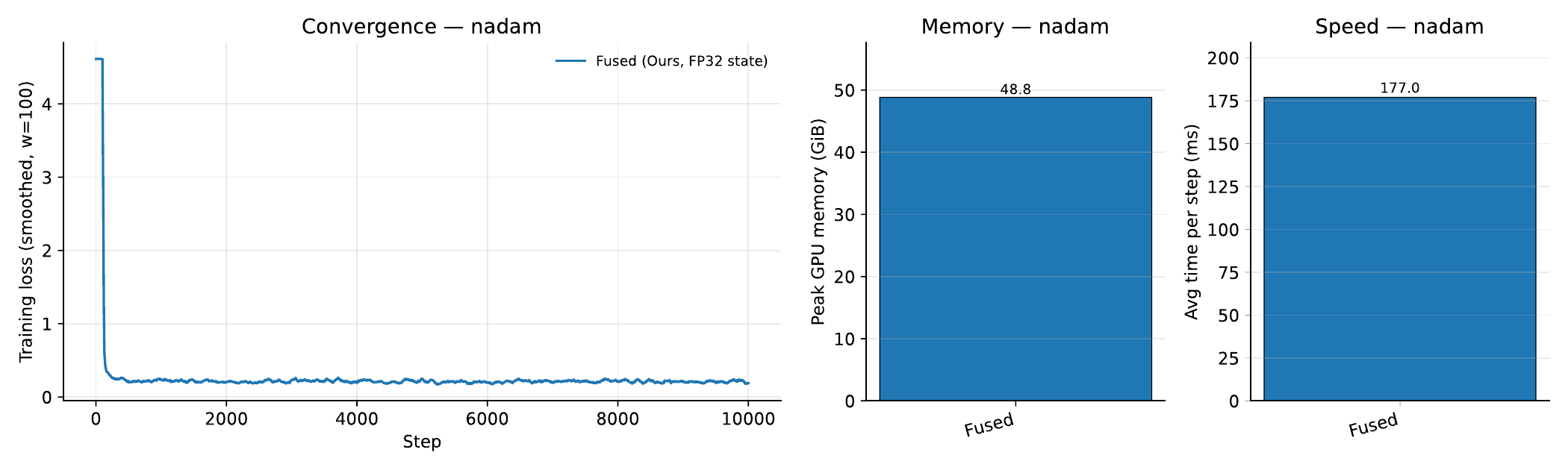}\\[1.5pt]
\includegraphics[width=0.495\linewidth]{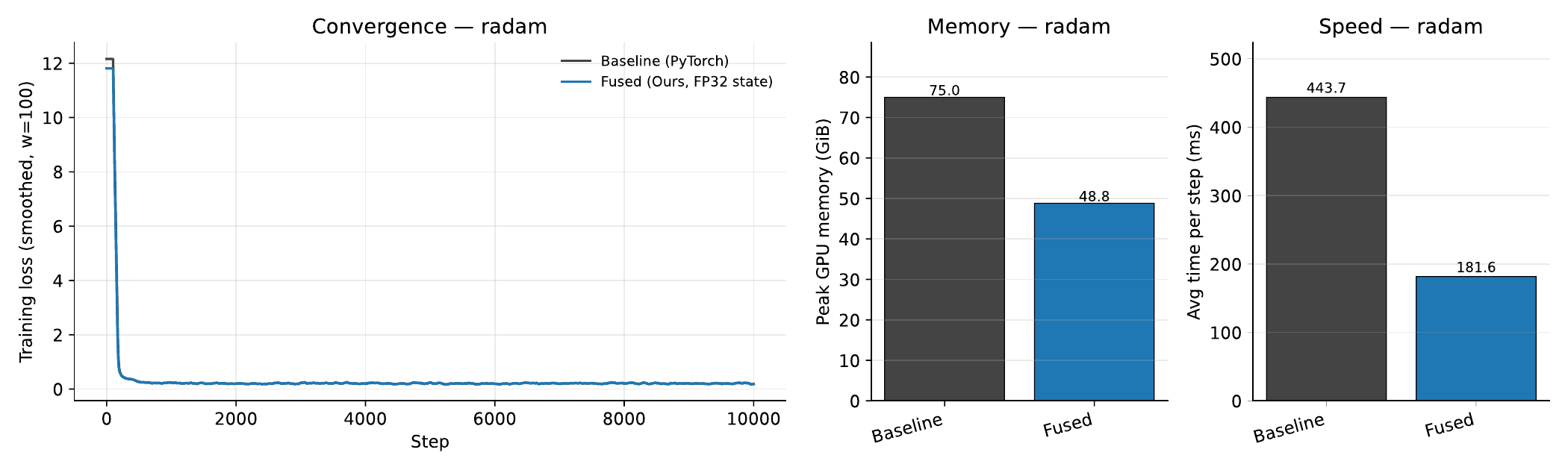}\hfill\includegraphics[width=0.495\linewidth]{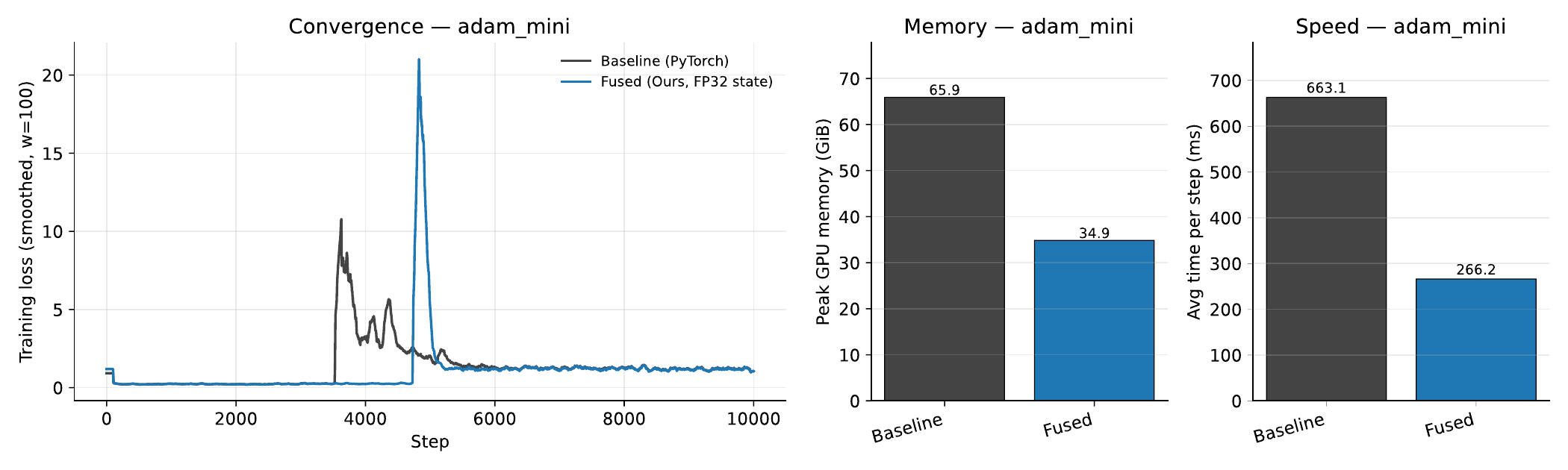}\\[1.5pt]
\includegraphics[width=0.495\linewidth]{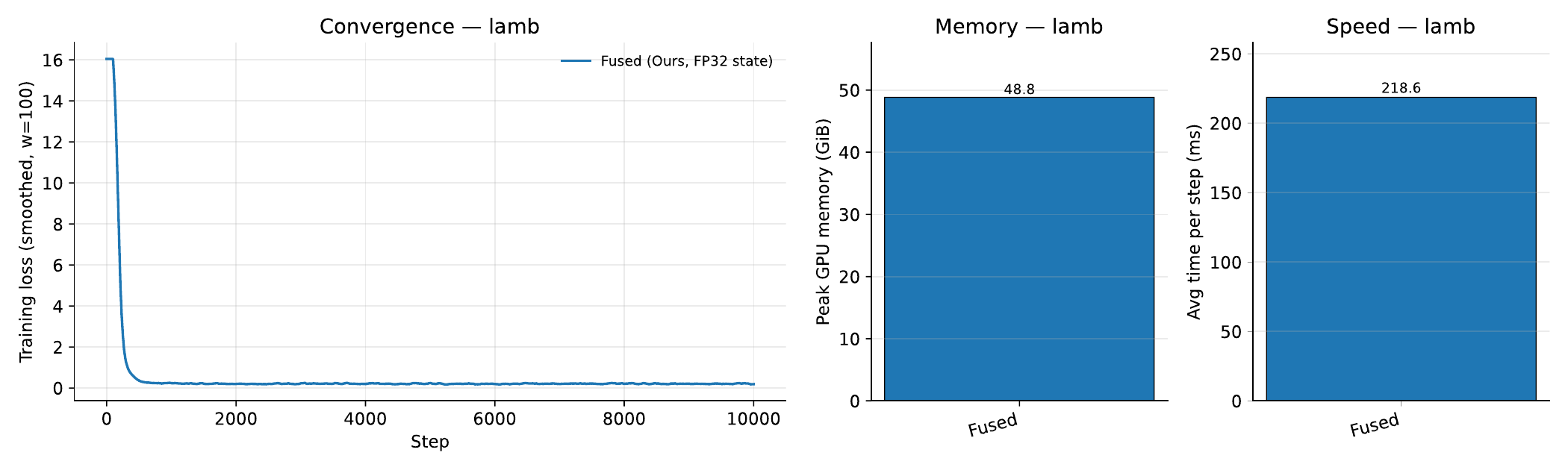}\hfill\includegraphics[width=0.495\linewidth]{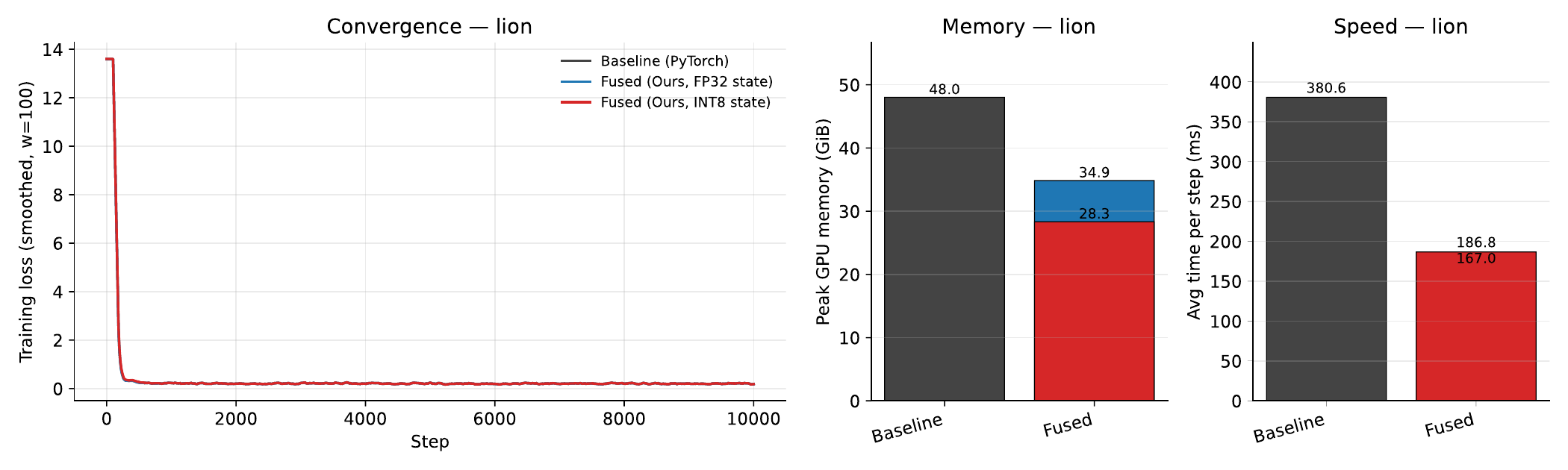}\\[1.5pt]
\includegraphics[width=0.495\linewidth]{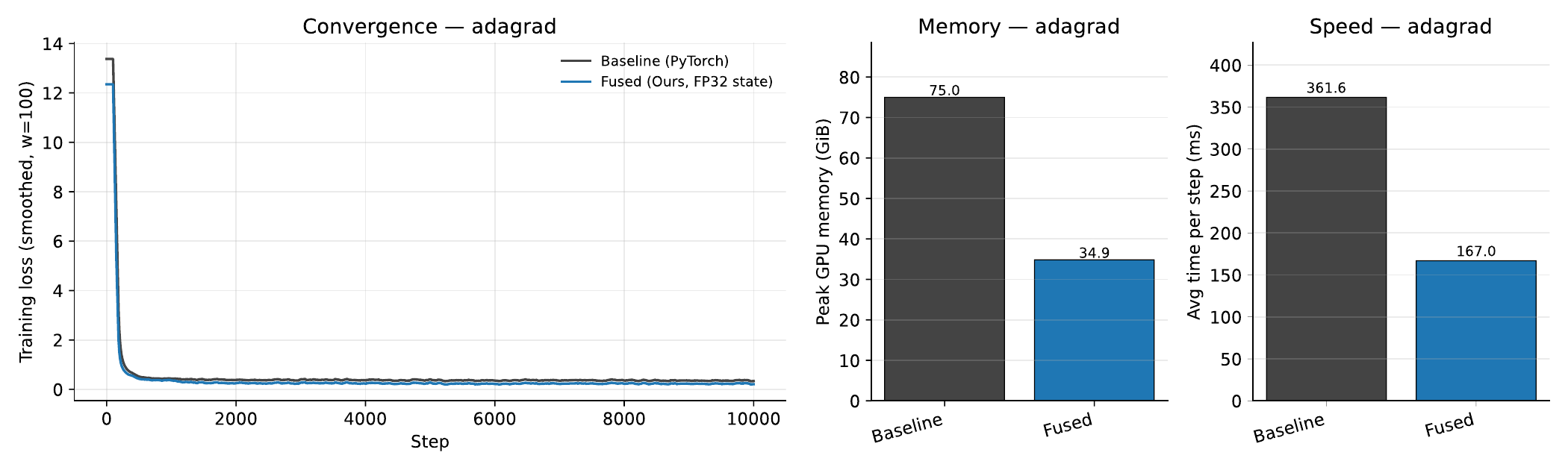}\hfill\includegraphics[width=0.495\linewidth]{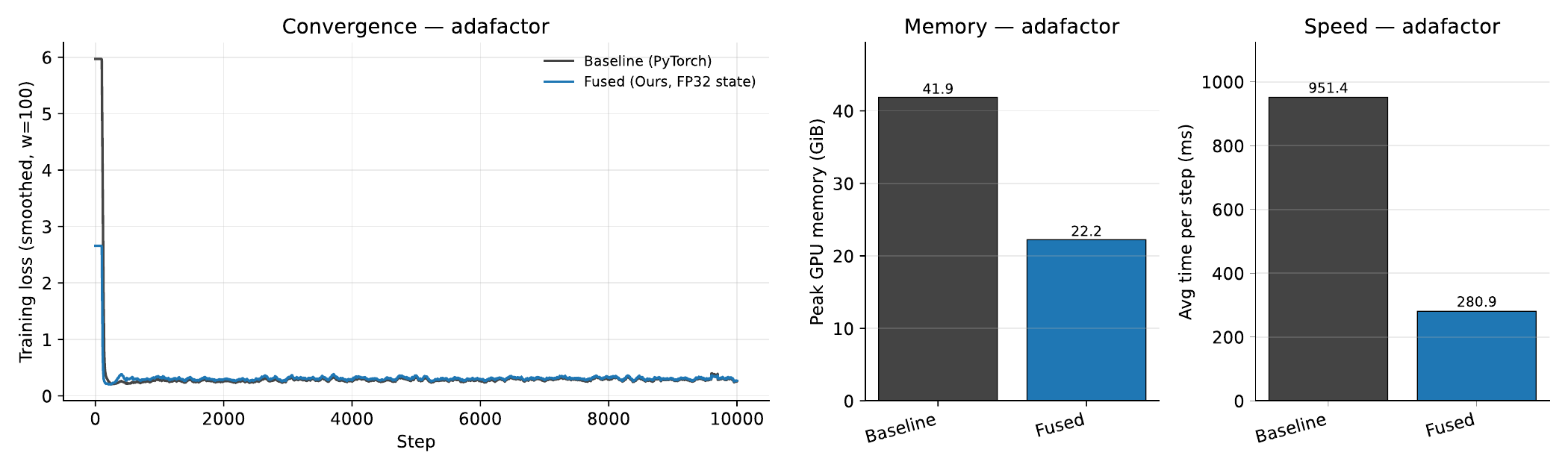}\\[1.5pt]
\includegraphics[width=0.495\linewidth]{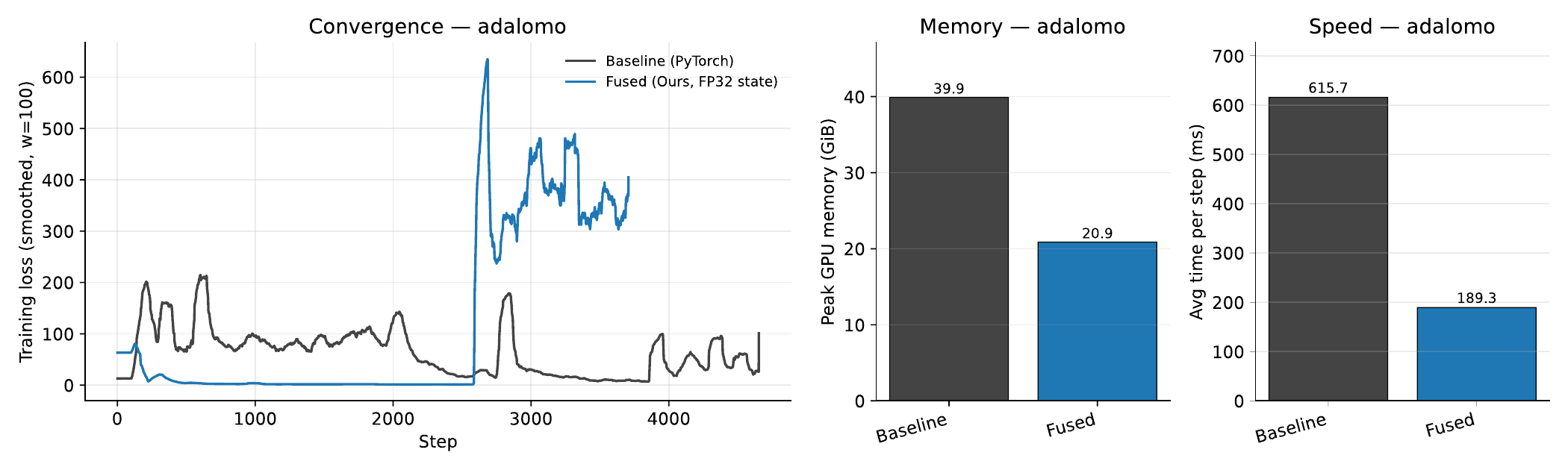}\hfill\includegraphics[width=0.495\linewidth]{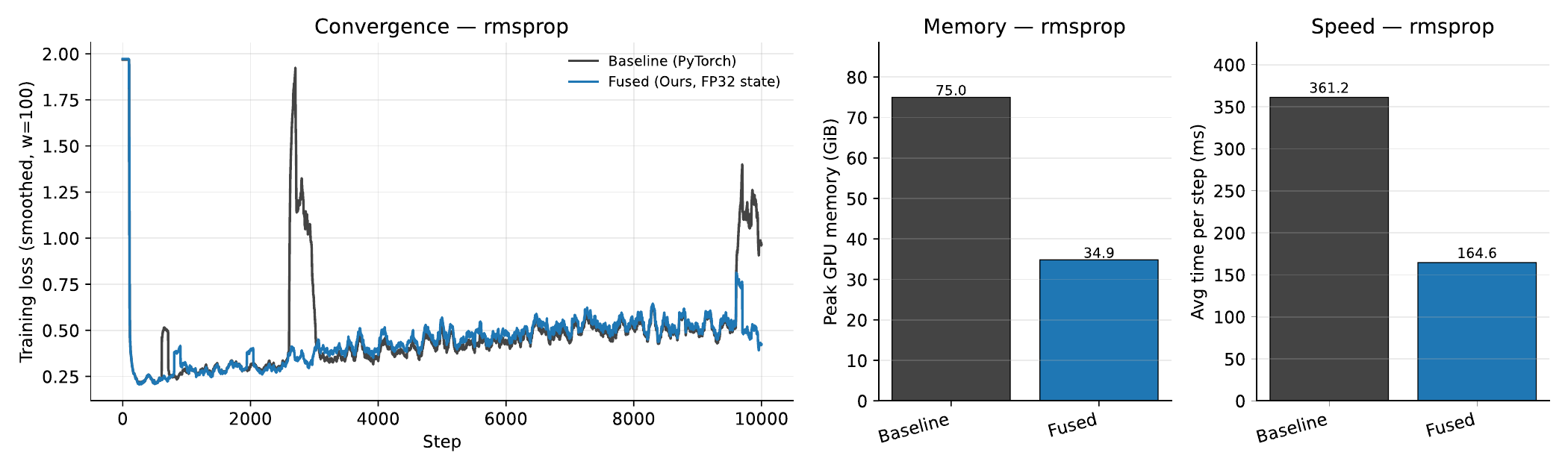}\\[1.5pt]
\includegraphics[width=0.495\linewidth]{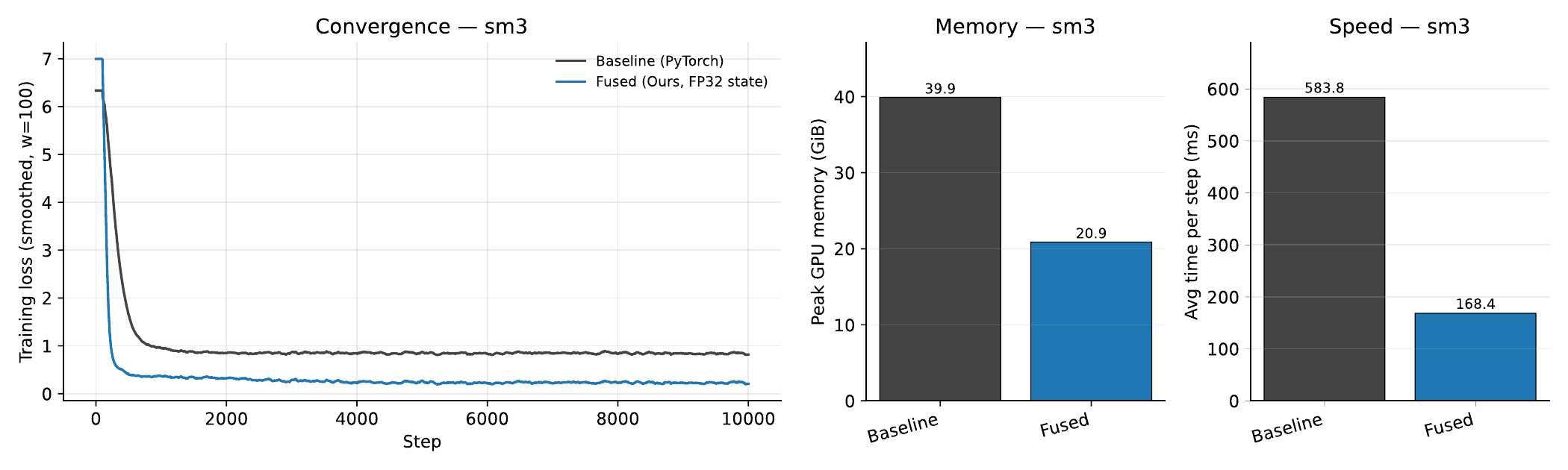}\hfill\includegraphics[width=0.495\linewidth]{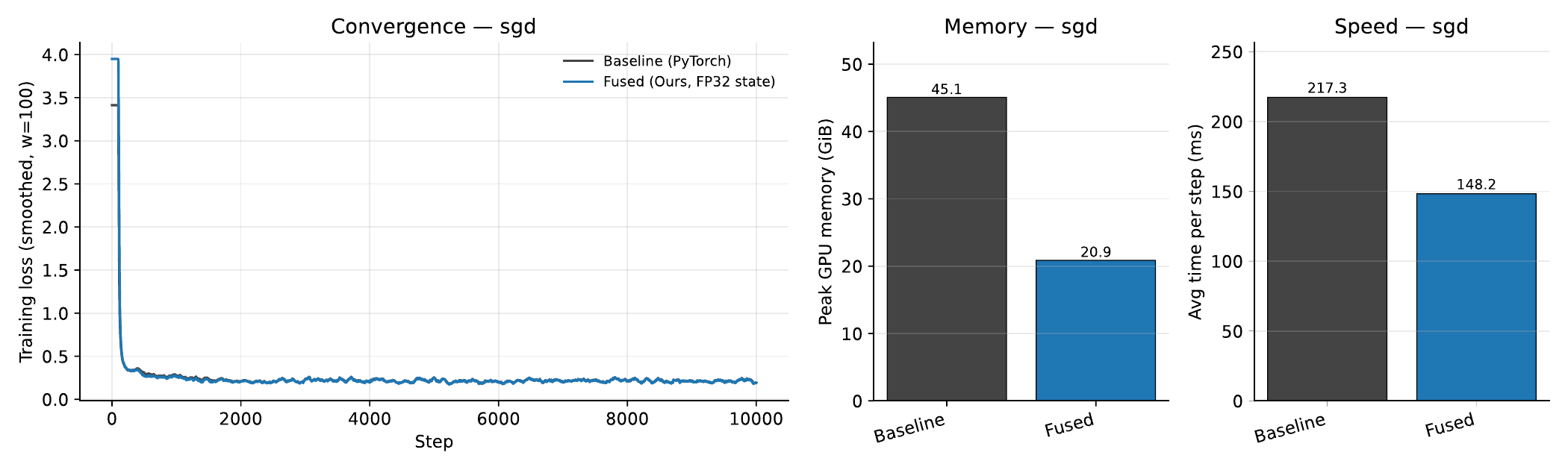}\\[1.5pt]
\includegraphics[width=0.495\linewidth]{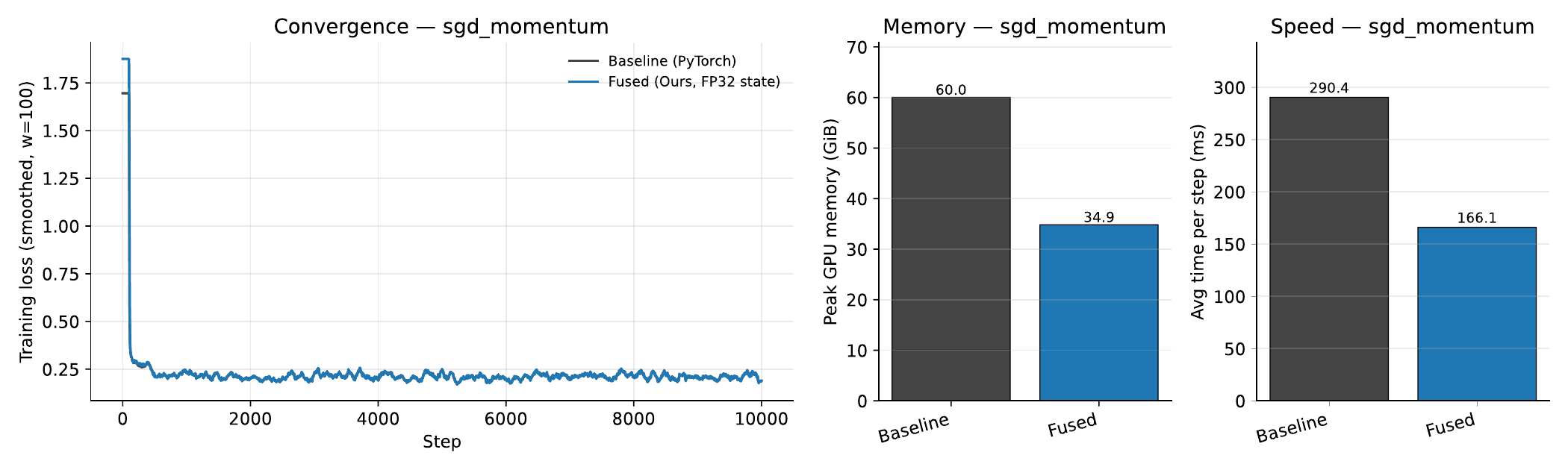}
\caption{Family sweep, all thirteen optimizer families under the shared protocol --- per family: training loss (left), peak memory (center), step time (right). Top to bottom, left to right: AdamW, NAdam, RAdam, Adam-mini, LAMB, Lion, Adagrad, Adafactor, AdaLomo, RMSprop, SM3, SGD, SGD with momentum.}
\label{fig:zoo1}
\end{figure}

\subsection{Regime 2 measured: Muon}\label{app:zoomuon}

Qwen3 dense ladder, one H200 NVL (139.8~GB usable), bf16 weights and momentum, $\mu{=}0.95$, Newton--Schulz 5 steps, batch~1 sequence~512. Both arms run the same Newton--Schulz on the same shapes, so $U$ cancels and the difference is exactly what $A$'s fusion does: \texttt{muon} writes $\mathbf{G}$ to HBM and folds it into $\mathbf{B}$ in a separate elementwise pass; \forge{}-Muon accumulates $\dY^{\!\top}\mathbf{X}$ into $\mathbf{B}$ through the GEMM's $\beta$ and $\mathbf{G}$ never exists. Activation checkpointing on throughout, the one condition in which every cell is comparable.

\begin{center}\small
\setlength{\tabcolsep}{5pt}
\begin{tabular}{@{}l rr rr r r@{}}
\toprule
& & & \multicolumn{2}{c}{saved} & removed & step \\
\cmidrule(lr){4-5}
model & Muon & \forge{}-Muon & AC on & AC off & @ bnd. & ratio \\
\midrule
Qwen3-0.6B & 4.32 & \textbf{3.50} & 18.9\% & $-0.7\%$ & 0.85 & $1.013\times$ \\
Qwen3-1.7B & 11.42 & \textbf{8.79} & 23.0\% & 17.4\% & 2.62 & $0.970\times$ \\
Qwen3-4B & 24.86 & \textbf{18.05} & 27.4\% & 20.5\% & 6.81 & $0.998\times$ \\
Qwen3-8B & 48.69 & \textbf{35.66} & 26.8\% & 22.1\% & 12.94 & $1.007\times$ \\
Qwen3-14B & 86.42 & \textbf{61.64} & 28.7\% & 24.6\% & 24.61 & $1.005\times$ \\
Qwen3-32B & OOM & \textbf{129.00} & capability & both OOM & --- & --- \\
\bottomrule
\end{tabular}
\end{center}

\noindent At Qwen3-32B standard Muon needs 61.0 (weights) $+$ 58.1 (momentum) $+$ 58.1 (gradient pool) $\approx177$~GB against the 139.8 available and does not fit at any setting; \forge{}-Muon trains at 129.00~GB, 7307~ms/step, with 10.8~GB of headroom. Checkpointing is what brings it inside the card: without it the arm peaks at 138.45~GB.

\paragraph{The removed bytes are the pool, exactly.} Sampled live at the backward-to-optimizer boundary, checkpointing off:

\begin{center}\small
\setlength{\tabcolsep}{6pt}
\begin{tabular}{@{}l rrrr@{}}
\toprule
model & Muon @ bnd. & \forge{} @ bnd. & removed & live gradient pool \\
\midrule
Qwen3-0.6B & 3.73 & 2.90 & 0.83 & 0.82 \\
Qwen3-1.7B & 10.26 & 7.63 & 2.62 & 2.62 \\
Qwen3-4B & 23.41 & 16.59 & 6.81 & 6.77 \\
Qwen3-8B & 48.16 & 35.22 & 12.94 & 12.94 \\
Qwen3-14B & 85.49 & 60.88 & 24.61 & 24.61 \\
\bottomrule
\end{tabular}
\end{center}

\noindent Removed equals the pool at every size, and is token-invariant: Qwen3-8B removes 12.94~GB at every $BT$ from 128 to 4096. Whether that reaches the \emph{peak} is the separate question, and it is the $BT$ crossover of the AdamW path reproduced --- peak saving \%:

\begin{center}\small
\setlength{\tabcolsep}{6pt}
\begin{tabular}{@{}l rrrrrr@{}}
\toprule
model & $BT{=}128$ & 256 & 512 & 1024 & 2048 & 4096 \\
\midrule
Qwen3-1.7B & 23.0 & 23.0 & 17.4 & 0.0 & 0.0 & 0.0 \\
Qwen3-4B & 27.4 & 26.9 & 20.5 & 4.9 & $-0.1$ & $-0.0$ \\
Qwen3-8B & 26.8 & 26.1 & 22.1 & 13.6 & 0.0 & 0.0 \\
\bottomrule
\end{tabular}
\end{center}

\noindent The crossover moves right with model size, and batch and sequence are interchangeable at matched $BT$ to the digit (Qwen3-1.7B at $BT{=}1024$: 11.82/11.82~GB by either route). Every constant-$BT$ anti-diagonal of the batch$\times$sequence grid is identical to the digit on both arms: at Qwen3-8B, $BT{=}2048$ gives 51.2/51.2 by all three routes and $BT{=}4096$ gives 69.6/69.5 by all three, so the saving is a function of $BT$ alone (22\% at $BT{=}512$, 14\% at 1024, zero from 2048).

\paragraph{Checkpointing and gradient elimination are complementary.} The two \emph{saved} columns above are the same models with checkpointing on and off. Checkpointing removes the token-scaled activation crest and the fusion removes the parameter-scaled pool, so only with both does the removed term sit at the peak: the bytes removed at the boundary are identical in either condition, and checkpointing decides only whether they reach the peak. That is why Qwen3-0.6B moves from $-0.7\%$ to $18.9\%$. At 32B the two are not alternatives --- without checkpointing \forge{}-Muon peaks at 138.45~GB against 139.8 usable and misses by ${\sim}1.4$~GB, so that cell is a near-miss rather than a wide margin.

\paragraph{Where the step time goes.} Checkpointing off, batch~1 sequence~512; $\Delta$mom is the separate momentum pass the fusion deletes:

\begin{center}\small
\setlength{\tabcolsep}{5pt}
\begin{tabular}{@{}l rr r r r@{}}
\toprule
model & Muon & \forge{}-Muon & ratio & NS share & $\Delta$mom \\
\midrule
Qwen3-0.6B & 111.4 & 114.4 & $0.973\times$ & 53.9\% & $-2.00$ \\
Qwen3-1.7B & 229.4 & 231.2 & $0.992\times$ & 76.4\% & $-4.28$ \\
Qwen3-4B & 594.6 & 591.8 & $1.005\times$ & 88.0\% & $-10.33$ \\
Qwen3-8B & 1454.6 & 1440.7 & $1.010\times$ & 94.2\% & $-18.15$ \\
Qwen3-14B & 3219.1 & 3190.7 & $1.009\times$ & 96.1\% & $-33.52$ \\
\bottomrule
\end{tabular}
\end{center}

\noindent $\Delta$mom scales linearly with Muon-governed parameter count ($-2.0$ to $-33.5$~ms) and is the cleanest evidence that the mechanism does what it claims. End to end the arms are at parity because Newton--Schulz is $54$--$96\%$ of the step at batch~1: its cost is fixed per optimizer step and independent of tokens, so at 512 tokens nothing else is visible. Below ${\sim}$4B a per-layer \texttt{autograd.Function} overhead that tracks matrix count rather than size exceeds the fused accumulate's win, which is what makes the two smallest arms net-negative.

\paragraph{Accumulation is where the speed result lives.} The deleted momentum pass is paid once per optimizer step while the per-micro-batch delta is paid $K$ times, so depth amplifies whichever sign the model size sets:

\begin{center}\small
\setlength{\tabcolsep}{5pt}
\begin{tabular}{@{}l r r rr r r@{}}
\toprule
model & $K$ & $BT$ & Muon & \forge{} & ratio & NS share \\
\midrule
Qwen3-1.7B & 1 & 512 & 11.42 & 9.44 & $0.996\times$ & 76.6\% \\
 & 16 & 8{,}192 & 12.64 & 10.02 & $0.918\times$ & 16.9\% \\
 & 64 & 32{,}768 & 12.64 & 10.02 & $0.929\times$ & 5.1\% \\
\addlinespace[2pt]
Qwen3-8B & 1 & 512 & 48.69 & 37.92 & $1.008\times$ & 94.2\% \\
 & 16 & 8{,}192 & 53.18 & 40.24 & $1.045\times$ & 50.9\% \\
 & 64 & 32{,}768 & 53.18 & 40.24 & $\mathbf{1.073\times}$ & 21.1\% \\
\bottomrule
\end{tabular}
\end{center}

\noindent Qwen3-8B's backward delta is almost exactly linear in $K$ ($-7.8$, $-31.4$, $-133.0$, $-516.7$~ms at $K{=}1,4,16,64$, or ${\approx}-8$~ms per micro-batch), and the memory saving is flat in $K$ at $24.3\%$ while the materialized arm's peak grows. The operating point for a regime-2 rule is therefore a large model with deep accumulation, which is also where Newton--Schulz amortizes.

\paragraph{Recipe variants (Qwen3-8B, batch~1 sequence~512).}

\begin{center}\small
\setlength{\tabcolsep}{5pt}
\begin{tabular}{@{}l rr r r r@{}}
\toprule
recipe & Muon & \forge{}-Muon & saved & ratio & NS phase \\
\midrule
fp32 momentum & 61.63 & 50.86 & 17.5\% & $\mathbf{1.028\times}$ & 1365.9 \\
bf16 momentum & 48.69 & 37.92 & \textbf{22.1\%} & $1.010\times$ & 1356.5 \\
NS steps $=3$ & 48.69 & 37.92 & 22.1\% & $1.015\times$ & \textbf{830.0} \\
NS fused-axpy & 48.50 & 37.92 & 21.8\% & $1.012\times$ & 1281.4 \\
\bottomrule
\end{tabular}
\end{center}

\noindent fp32 momentum costs 12.9~GB at 8B but gives the best ratio, because the pass the materialized arm must run moves twice the bytes. Cutting Newton--Schulz from five steps to three removes $39\%$ of the optimizer phase and is the largest single lever on step time, but it changes the optimizer and is a recipe choice rather than a free win.

\paragraph{Distributed Muon.} Under data parallelism the reduce-into-state construction of Proposition~\ref{prop:dist}(iii) applies and no gradient is formed on any rank. Per-rank peak is identical at DP1/2/4/8 --- a replicated momentum does not grow with world size --- and wire volume is byte-identical between arms, since the same $P$ elements are reduced either way and only the object reduced changes:

\begin{center}\small
\setlength{\tabcolsep}{5pt}
\begin{tabular}{@{}l rr r r@{}}
\toprule
model & Muon & \forge{}-Muon & saved & step (Muon $\to$ \forge{}) \\
\midrule
Qwen3-1.7B & 11.42 & \textbf{9.44} & 17.4\% & $368.7 \to 369.8$~ms \\
Qwen3-4B & 24.86 & \textbf{19.75} & 20.5\% & $868.3 \to 858.4$~ms \\
Qwen3-8B & 48.69 & \textbf{37.92} & 22.1\% & $1909.1 \to 1898.9$~ms \\
Qwen3-14B & 86.42 & \textbf{65.13} & 24.6\% & $4055.5 \to 4021.8$~ms \\
\bottomrule
\end{tabular}
\end{center}

\noindent The gate (world size 4) agrees to one bf16 rounding: momentum relative deviation $3.1\times10^{-3}$, weights $8.1\times10^{-4}$. It is not bit-exact, because NCCL's reduction tree does not reproduce the reference's sequential order.

\paragraph{Tensor parallelism, and the one axis where a regime-2 rule is worse.} A shard's owner sees every token, so the fused accumulate needs no collective at all; but Newton--Schulz reads the whole layer matrix, so the weight update must all-gather $\mathbf{B}$. Both effects are visible at Qwen3-8B:

\begin{center}\small
\setlength{\tabcolsep}{5pt}
\begin{tabular}{@{}l r rr r r@{}}
\toprule
model & TP & Muon & \forge{}-Muon & saved & NS gather \\
\midrule
Qwen3-4B & 2 & 15.40 & \textbf{12.54} & 18.6\% & 3.63B \\
 & 8 & 9.40 & \textbf{8.55} & 9.0\% & 3.63B \\
\addlinespace[2pt]
Qwen3-8B & 2 & 29.05 & \textbf{23.28} & 19.9\% & 6.95B \\
 & 4 & 21.46 & \textbf{18.31} & 14.7\% & 6.95B \\
 & 8 & 17.66 & \textbf{16.05} & 9.2\% & 6.95B \\
\bottomrule
\end{tabular}
\end{center}

\noindent The saving decays as $1/\mathrm{TP}$ because sharding divides weights, momentum and the pool alike while activations stay per-rank --- the $BT$ crossover on a different axis. The gather does not: it is the full parameter count at every degree, so at 8B/TP8 each rank moves 13.9~GB per step to own a 1.62~GB shard. \emph{A coordinate-wise rule needs nothing here}, because its update never crosses a shard boundary. This is the one axis on which a regime-2 rule is strictly worse than the regime-1 rule it replaces, and it worsens as sharding deepens. The sharded forward reproduces the unsharded one at relative $8.9\times10^{-3}$ and gathered Newton--Schulz is bit-identical to single-device Newton--Schulz in both orientations.

\paragraph{Context parallelism: a negative result.} The prediction was that context parallelism shrinks the activation crest and so exposes the pool saving. It does not, at either size or sequence:

\begin{center}\small
\setlength{\tabcolsep}{5pt}
\begin{tabular}{@{}l r rr r r@{}}
\toprule
model & global seq & Muon & \forge{}-Muon & saved & @ bnd.\ (M $\to$ F) \\
\midrule
Qwen3-1.7B & 4{,}096 & 12.81 & 12.81 & \textbf{0.0\%} & $10.26 \to 7.63$ \\
Qwen3-1.7B & 16{,}384 & 32.68 & 32.68 & \textbf{0.0\%} & $10.26 \to 7.63$ \\
Qwen3-4B & 4{,}096 & 26.00 & 26.00 & \textbf{0.0\%} & $23.40 \to 16.58$ \\
Qwen3-4B & 16{,}384 & 59.19 & 59.19 & \textbf{0.0\%} & $23.32 \to 16.55$ \\
\bottomrule
\end{tabular}
\end{center}

\noindent The implementation is why. Ulysses-style context parallelism all-gathers K and V, so every rank materializes them at full sequence and the per-rank peak grows with the \emph{global} sequence even though each rank owns $S/\mathrm{CP}$ queries. The crest is not sharded, so the removal stays invisible in the peak while the boundary removal is unaffected and identical to the data-parallel case. Obtaining the predicted benefit requires ring attention, which shards K and V instead of gathering them. Fully sharded training is a second gap: \texttt{fully\_shard} makes the weight a \texttt{DTensor} while $\mathbf{B}$ and the Newton--Schulz output are plain tensors, and the weight update raises on the mixed types; a fix needs a \texttt{DTensor}-aware Newton--Schulz, which would pay the same gather as tensor parallelism.

\paragraph{Mixture of experts: the second capability cell.} Qwen3-30B-A3B (30.53B parameters, 128 experts over 48 layers) on one H200 with checkpointing, batch~1 sequence~512: standard Muon is out of memory --- weights 57 $+$ momentum 56 $+$ gradient pool 56 ${\approx}$ 170~GB against 139.8 usable --- while \forge{}-Muon trains at \textbf{120.67~GB}, 6797~ms/step. The first attempt reached only $3\%$ of parameters and both arms went out of memory, for the reason Appendix~\ref{app:archgen} documents on Mamba-2: this \texttt{transformers} version stores the experts as fused 3-D parameters (\texttt{gate\_up\_proj}, \texttt{down\_proj}) rather than a list of \texttt{nn.Linear}, so 29B of expert weight never reached the module interface and fell through to AdamW. Accumulating into a \emph{slice} of a 3-D momentum takes coverage to $98\%$; the mechanism needs nothing new, because an expert's weight gradient is still $\dY_e^{\!\top}\mathbf{X}_e$ over its own routed tokens and the accumulate is still one \texttt{addmm} with $\beta$. Each expert's gradient is complete locally, so no reduction is required even under expert parallelism. Remark~\ref{rem:untouched} is what the routed case makes concrete: an expert nothing routes to on a step is never touched, so decay-on-first-touch would silently skip it, and the untouched experts are decayed explicitly at the end of the step. Batching Newton--Schulz over same-shape experts is bit-identical to the serial path and $2.7$--$4.2\times$ faster ($128\times(768,2048)$: 42.0 to 10.0~ms), which matters because 18{,}624 governed matrices at 15 small GEMMs each is ${\sim}280$k launches per step.

\paragraph{Scope of the Muon programme.} Built, gated and swept: the single-GPU ladder and batch$\times$sequence grid, checkpointing on and off, bf16 and fp32 momentum, data parallelism at 1/2/4/8, tensor parallelism at 2/4/8, context parallelism, and the mixture-of-experts cell. Not available: int8 and fp8 states and FP8 weights are not implemented on the Muon path; fully sharded training is incompatible as described above; pipelining is the accumulation case above plus a schedule that was not built. Two caveats bound the mixture-of-experts cell specifically: the arms do not govern identical parameter sets (98\% under the layer rule against 100\% under the control's \texttt{ndim}$\ge2$ rule, immaterial against a ${\sim}30$~GB margin but not matched), and the router is 2-D and therefore falls inside the Muon set by the stated rule, where it is often deliberately excluded. Losses in this programme are recorded on randomly initialized models with an untuned learning rate: these cells measure memory and time, not convergence.

\section{Method Taxonomy}\label{app:taxonomy}

Where \forge{} sits against the representative memory-efficient methods, on the properties the main paper's comparisons exercise:

\begin{center}\small
\setlength{\tabcolsep}{4pt}
\begin{tabular}{@{}l cccccc@{}}
\toprule
property & AdamW & bnb 8-bit & GaLore & APOLLO & COAT & \forge{} \\
\midrule
granularity & per-step & per-param & per-step & per-step & per-step & per-tile \\
tile-local $A$ (Def.~\ref{def:tl}) & \checkmark & \checkmark & --- & --- & \checkmark & \checkmark \\
full Adam dynamics & \checkmark & \checkmark & --- & --- & \checkmark & \checkmark \\
algebraically exact (fp32 state) & \checkmark & --- & --- & --- & --- & \checkmark \\
$\nabla_{\mathbf{W}}\mathcal{L}$ in HBM & \checkmark & \checkmark & \checkmark & \checkmark & \checkmark & --- \\
kernel-level fusion & --- & --- & --- & --- & \checkmark & \checkmark \\
FP8 compatible & --- & --- & --- & --- & \checkmark & \checkmark \\
DP / FSDP & \checkmark & \checkmark & \checkmark & \checkmark & \checkmark & \checkmark$^{a}$ \\
peak saving vs.\ fused (8B cell) & --- & 39\% & 50\% & 49\% & ref$^{b}$ & 53\% \\
\bottomrule
\end{tabular}
\end{center}

\noindent \forge{} is the only method that is simultaneously per-tile, full-Adam-preserving, and algebraically exact in fp32 state. $^{a}$Zero-materialization under tensor, sequence, and expert sharding; bucket-transient under data and context parallelism and FSDP (Appendix~\ref{app:exact}). $^{b}$COAT's own BF16-everywhere anchor is the correct reference for its rows; cross-regime comparison against COAT's fp32-master$+$SDPA recipe would be misleading, so none is drawn. Savings are the Llama-3.1-8B, BS=1, $S{=}512$ H200 cell (Appendix~\ref{app:fullh200}).

\subsection{Composability, by mechanism}\label{app:composeclass}

The table above places methods on shared properties. The composition experiments ask something narrower: \emph{what does this method do at the backward-to-optimizer boundary, and can \forge{} act at the same boundary?} Cells are Qwen3-8B, one H200, batch~1 sequence~512, bf16, no activation checkpointing; each method alone and composed with \forge{}. Target layers, non-target handling, seed, data, warmup and peak metric are identical across arms.

\begin{center}\footnotesize
\setlength{\tabcolsep}{4pt}
\begin{tabular}{@{}l l rr r rr r@{}}
\toprule
& & \multicolumn{2}{c}{peak GB} & & \multicolumn{2}{c}{ms/step} & \\
\cmidrule(lr){3-4}\cmidrule(lr){6-7}
method & verdict & alone & $+$\forge{} & $\Delta$ & alone & $+$\forge{} & speedup \\
\midrule
vanilla AdamW & subsumed & 76.35 & 50.86 & $\mathbf{-33.4\%}$ & 190.8 & 143.0 & $\mathbf{1.33\times}$ \\
fused AdamW & subsumed & 61.10 & 50.86 & $\mathbf{-16.8\%}$ & 122.9 & 143.0 & $0.86\times$ \\
bitsandbytes 8-bit & compatible & 46.08 & 38.86 & $\mathbf{-15.7\%}$ & 323.2 & 169.9 & $\mathbf{1.90\times}$ \\
fp8 states & compatible & \multicolumn{1}{c}{n/a} & 37.93 & --- & \multicolumn{1}{c}{n/a} & 292.5 & --- \\
GaLore $r{=}128$ & outside & 38.83 & 27.15 & $\mathbf{-30.1\%}$ & 136.8 & 172.0 & $0.80\times$ \\
APOLLO $r{=}256$ & outside & 39.85 & 29.18 & $\mathbf{-26.8\%}$ & 176.0 & 224.4 & $0.78\times$ \\
AdaLomo & compatible, reg.~3 & 26.00 & 25.60 & $-1.5\%$ & 620.1 & 253.3 & $\mathbf{2.45\times}$ \\
optimi grad-release & mechanism-exclusive & 67.13 & \multicolumn{2}{c}{does not exist} & 202.9 & \multicolumn{2}{c}{---} \\
FlashOptim & mechanism-exclusive & \multicolumn{6}{c}{no installable package} \\
\bottomrule
\end{tabular}
\end{center}

\noindent The two AdamW arms share one composed cell: the epilogue replaces the multi-tensor update kernel and the pool it reads together. The fused-AdamW arm reproduces Appendix~\ref{app:h200qwen} across harnesses, 61.10 against 61.31~GB; the vanilla arm sits one parameter-sized temporary higher, 76.35 against 63.63, this harness leaving the \texttt{foreach} intermediates in place. GaLore's and APOLLO's stock classes read \texttt{p.grad}, so their composed arms are a low-rank Adam with the projection folded into the weight-gradient GEMM, matched on rank, scale, betas, epsilon, weight decay and learning rate: equivalent in form and not bit-equivalent, which is what licenses the memory and speed comparison and stops short of a convergence claim.

\paragraph{The memory result is unconditional; the speed result is the bound.} Memory falls against every method that composes, $16$--$33\%$, and furthest against those that shrink \emph{state} and leave the gradient alone: GaLore $-30.1\%$ and APOLLO $-26.8\%$ are \texttt{torch.optim} subclasses reading \texttt{p.grad}, so the $O(P)$ pool is fully resident at the boundary --- exactly the term \forge{} deletes. Step time follows the bound the mechanism sets rather than an independent trend. Fusion can remove only the separate optimizer phase, so the gain is that phase's share of the step and no more: it is largest where the phase dominates (bitsandbytes' int8 unpacking $1.90\times$, AdaLomo's per-parameter hooks $2.45\times$) and vanishes where the phase is already a fused multi-tensor kernel costing 122.9~ms of a 61~GB step ($0.86$--$0.78\times$ for fused AdamW, GaLore and APOLLO on Qwen3's shapes). The column measures the operating condition of the main paper's Section~3.4 on the composition axis, and the memory column is what does not depend on it.

\paragraph{The verdicts.} \emph{Subsumed}: both AdamW arms are regime~1 by Definition~\ref{def:tl}, so this is not composition --- \forge{} \emph{is} that update, moved into the epilogue. \emph{Mechanism-exclusive}: optimi and FlashOptim occupy the same scheduling slot, being coarser points on the granularity ladder of the main paper's Section~1, and a weight cannot be co-owned by two schedules without being stepped twice; scheduling decides this one, not the criterion of Appendix~\ref{app:exact}. \emph{Compatible}: quantized state changes what the state costs without changing what $A$ reads, so the savings add. \emph{Compatible through the deferred path}: AdaLomo's factored moment is a row-and-column reduction of $\mathbf{G}^{\odot2}$, hence regime~3, and composes at layer granularity --- a small memory delta, its factored state already near the floor, and a large time delta, the deferred path replacing its own per-parameter hooks. \emph{Outside the contract}: GaLore and APOLLO project the gradient before the update reads it, so a projected coordinate depends on a whole row or column of $\mathbf{G}$; $A$ is not tile-local and the dynamics are approximate. Whether the projection can nonetheless be pushed into the weight-gradient GEMM is a mechanical question with its own answer, reported with the composition measurements.

\section{Distributed Protocol and Recipes}\label{app:protocol2}

\paragraph{Node.} 8$\times$NVIDIA H200 NVL (139.8~GB each) with NVLink, idle during the sweep. The companion program in Appendix~\ref{app:rtx} ran on 8$\times$RTX PRO 6000 (96~GB, PCIe, no NVLink); the two boxes are never merged in any table.

\paragraph{Protocol.} One process tree per cell; 5 warmup $+$ 20 measured steps (3$+$10 under context parallelism), median reported; peak is the per-rank allocator maximum, reset after warmup; gradients averaged, no accumulation, no clipping; learning rate, betas, epsilon and weight decay identical across arms. Losses are recorded every measured step and checked per table to land in a common band, so no method buys memory with divergence; that check is why result tables carry no loss column. Out-of-memory cells are capability boundaries.

\paragraph{Recipes and fairness.} recipe classes are named descriptively --- \emph{bf16-state}: bf16 weights and bf16 moments, no master copy; \emph{fp32-master}: bf16 compute weights with fp32 master and fp32 moments; \emph{SR-store}: bf16 store committed by stochastic rounding, no master copy; \emph{int8}: block-quantized moments; suffixes denote optimizer-state sharding and full sharding. A comparison is only ever drawn within one recipe $\times$ sharding class; \forge{} arms keep whatever their baseline keeps, including the fp32 master in every fp32-master arm.

\section{Distributed Exactness Gates}\label{app:gates}

Run on the H200 node before any benchmark cell (world size 2; all pass):

\begin{center}\small
\begin{tabular}{@{}l l r@{}}
\toprule
gate & scope & result \\
\midrule
T0 & pipeline vs same-kernel reference (1-rank group) & bit-exact, $\max|\Delta W|=0$ \\
T1 & replica coherence after steps & drift $=0.0$ \\
T2 & \forge{}-DP vs single-GPU \forge{} & max loss gap $5.96\times10^{-7}$ \\
F1 & FSDP-\forge{} vs single-GPU \forge{} & max loss gap $4.77\times10^{-7}$ \\
F2 & FSDP gather coherence & drift $=0.0$ \\
F3 & FSDP-\forge{} fp32-master tracks dense bf16-state & max loss gap $1.67\times10^{-6}$ \\
CP1 & Ulysses CP vs dense single-GPU & max loss gap $5.82\times10^{-5}$ \\
EP1 & pure-EP MoE vs all-local reference & max loss gap $3.58\times10^{-5}$ \\
MW & fp32-master arms vs torch mixed precision (replicated and sharded) & max loss gap $5.96\times10^{-7}$ \\
\bottomrule
\end{tabular}
\end{center}

\section{Distributed Results: 8$\times$H200 NVL}\label{app:h200}

All cells: Qwen3 architectures at random initialization (peaks and step times are shape-deterministic), recipe-matched arms, protocol of Appendix~\ref{app:protocol2}; peak is GB per rank. The megatron.core tensor-parallel programme --- quiet-node parity, the fp32-master comparison, the fp32-reference drift check --- runs on the second node and is Appendix~\ref{app:rtx}.

\subsection{Anchor cell: Qwen3-8B, DP8, micro-batch 2, sequence 2048}

\begin{center}\small
\begin{tabular}{@{}l l r r@{}}
\toprule
method & recipe & peak & ms \\
\midrule
ddp\_fused & bf16-state & 98.91 & 800 \\
forge\_dp (bucket 256~MB, \S\ref{app:bucket}) & bf16-state & 84.40 & 756 \\
ddp\_bnb8bit & int8 moments & 83.89 & 1006 \\
forge\_dp\_int8 & int8 moments & 70.86 & 870 \\
ddp\_mw & fp32-master & \multicolumn{2}{c}{OOM} \\
forge\_dp\_mw & fp32-master & \multicolumn{2}{c}{OOM} \\
forge\_dp\_shard & bf16-state states-sharded & 59.35 & 1572 \\
zero2 & fp32-master states-sharded & 64.57 & 1141 \\
forge\_dp\_shard\_mw & fp32-master states-sharded & 66.42 & 1522 \\
fsdp2 & bf16-state fully-sharded & 46.28 & 1413 \\
forge\_fsdp & bf16-state fully-sharded & 46.64 & 1624 \\
zero3 & fp32-master fully-sharded & 67.51 & 1339 \\
forge\_fsdp\_mw & fp32-master fully-sharded & 53.69 & 1649 \\
\bottomrule
\end{tabular}
\end{center}

\subsection{Model axis at DP8 (micro-batch 2, sequence 2048)}

\begin{center}\small
\begin{tabular}{@{}l rr rr rr rr@{}}
\toprule
& \multicolumn{2}{c}{ddp\_fused} & \multicolumn{2}{c}{forge\_dp} & \multicolumn{2}{c}{fsdp2} & \multicolumn{2}{c}{forge\_fsdp} \\
model & peak & ms & peak & ms & peak & ms & peak & ms \\
\midrule
Qwen3-0.6B & 20.32 & 120 & 19.58 & 126 & 16.62 & 155 & 17.06 & 237 \\
Qwen3-1.7B & 33.11 & 207 & 30.27 & 208 & 22.19 & 350 & 23.08 & 422 \\
Qwen3-4B & 61.71 & 442 & 54.62 & 428 & 35.38 & 725 & 36.41 & 966 \\
Qwen3-8B & 98.91 & 800 & 84.02 & 1722$^{\dagger}$ & 46.28 & 1413 & 46.64 & 1624 \\
Qwen3-14B & \multicolumn{2}{c}{OOM} & \multicolumn{2}{c}{OOM} & 65.35 & 2674 & 65.82 & 3057 \\
\bottomrule
\end{tabular}
\end{center}

\noindent $^{\dagger}$This one cell runs the default gradient bucket in the axis tables. At the tuned 256~MB bucket it is 84.40~GB / \textbf{756~ms}, ahead of \texttt{ddp\_fused} on both axes, and that is the configuration the anchor table and every ratio in the text use (\S\ref{app:bucket}); the same cell at $N{\le}4$ already runs at 523--562~ms. \texttt{forge\_fsdp} carries the replicated embeddings that FSDP2 sharded --- 2.92~GB per rank at Qwen3-8B, quantified in Appendix~\ref{app:rtx} --- and still lands within $0.35$--$1.03$~GB of it at every size, so the mechanism recovers most of a handicap it took by construction.

\subsection{Capability tier at DP8 (micro-batch 1, sequence 2048)}

\begin{center}\small
\begin{tabular}{@{}l l r r@{}}
\toprule
model & method (recipe) & peak & ms \\
\midrule
Qwen3-32B & ddp\_fused (bf16-state) & \multicolumn{2}{c}{OOM} \\
Qwen3-32B & forge\_dp (bf16-state) & \multicolumn{2}{c}{OOM} \\
Qwen3-32B & zero3 (fp32-master fully-sharded) & \multicolumn{2}{c}{OOM} \\
Qwen3-32B & fsdp2 (bf16-state fully-sharded) & 77.21 & 5480 \\
Qwen3-32B & forge\_fsdp (bf16-state fully-sharded) & 77.55 & 5684 \\
Qwen3-32B & forge\_fsdp\_int8 & \textbf{70.55} & 6005 \\
Qwen3-32B & forge\_fsdp\_mw (fp32-master fully-sharded) & 107.25 & 5801 \\
\bottomrule
\end{tabular}
\end{center}

\subsection{Micro-batch axis (Qwen3-8B, DP8, sequence 2048)}

\begin{center}\small
\begin{tabular}{@{}l rr rr rr rr@{}}
\toprule
& \multicolumn{2}{c}{ddp\_fused} & \multicolumn{2}{c}{forge\_dp} & \multicolumn{2}{c}{fsdp2} & \multicolumn{2}{c}{forge\_fsdp} \\
mbs & peak & ms & peak & ms & peak & ms & peak & ms \\
\midrule
1 & 80.00 & 644 & 65.12 & 663 & 27.37 & 1339 & 27.74 & 1432 \\
2 & 98.91 & 800 & 84.02 & 1722$^{\dagger}$ & 46.28 & 1413 & 46.64 & 1624 \\
4 & \multicolumn{2}{c}{OOM} & 121.83 & 1116 & 84.09 & 1591 & 84.45 & 2036 \\
8--32 & \multicolumn{2}{c}{OOM} & \multicolumn{2}{c}{OOM} & \multicolumn{2}{c}{OOM} & \multicolumn{2}{c}{OOM} \\
\bottomrule
\end{tabular}
\end{center}

\subsection{Sequence axis (Qwen3-8B, DP8, micro-batch 2)}

\begin{center}\small
\begin{tabular}{@{}l rr rr rr rr@{}}
\toprule
& \multicolumn{2}{c}{ddp\_fused} & \multicolumn{2}{c}{forge\_dp} & \multicolumn{2}{c}{fsdp2} & \multicolumn{2}{c}{forge\_fsdp} \\
seq & peak & ms & peak & ms & peak & ms & peak & ms \\
\midrule
512 & 70.55 & 575 & 55.66 & 599 & 18.13 & 1256 & 18.29 & 1402 \\
1024 & 80.00 & 644 & 65.12 & 644 & 27.37 & 1354 & 27.74 & 1439 \\
2048 & 98.91 & 800 & 84.02 & 1722$^{\dagger}$ & 46.28 & 1413 & 46.64 & 1624 \\
4096 & \multicolumn{2}{c}{OOM} & 121.83 & 1149 & 84.09 & 1570 & 84.45 & 2112 \\
8192--65536 & \multicolumn{2}{c}{OOM} & \multicolumn{2}{c}{OOM} & \multicolumn{2}{c}{OOM} & \multicolumn{2}{c}{OOM} \\
\bottomrule
\end{tabular}
\end{center}

\subsection{World-size axis (Qwen3-8B, micro-batch 2, sequence 2048)}

\begin{center}\small
\begin{tabular}{@{}l rr rr rr rr@{}}
\toprule
& \multicolumn{2}{c}{ddp\_fused} & \multicolumn{2}{c}{forge\_dp} & \multicolumn{2}{c}{fsdp2} & \multicolumn{2}{c}{forge\_fsdp} \\
$N$ & peak & ms & peak & ms & peak & ms & peak & ms \\
\midrule
1 & 98.91 & 526 & 84.02 & 523 & 85.97 & 550 & 83.65 & 578 \\
2 & 98.91 & 565 & 84.02 & 562 & 63.44 & 607 & 62.50 & 768 \\
4 & 98.91 & 556 & 84.02 & 552 & 52.00 & 570 & 51.93 & 661 \\
8 & 98.91 & 800 & 84.02 & 1722$^{\dagger}$ & 46.28 & 1413 & 46.64 & 1624 \\
\bottomrule
\end{tabular}
\end{center}

\subsection{Gradient-bucket axis at the anchor}\label{app:bucket}

\forge{}'s bucket is its transient gradient pool; the DDP bucket cap is matched per cell.

\begin{center}\small
\begin{tabular}{@{}l rr rr@{}}
\toprule
& \multicolumn{2}{c}{ddp\_fused} & \multicolumn{2}{c}{forge\_dp} \\
bucket & peak & ms & peak & ms \\
\midrule
1~MB & 98.91 & 858 & 83.65 & 2815 \\
4~MB & 98.91 & 858 & 83.66 & 1928 \\
16~MB & 98.91 & 842 & 83.70 & 1060 \\
64~MB & 98.91 & 817 & 83.84 & 866 \\
256~MB & 98.91 & 791 & \textbf{84.40} & \textbf{756} \\
\bottomrule
\end{tabular}
\end{center}

\noindent Memory is essentially flat in the bucket size while step time falls steeply; at 256~MB the \forge{} arm leads the baseline on both axes. The interaction probe at $N\in\{2,4\}$ repeats the pattern (bucket 4/64/256~MB: 826/592/560~ms at $N{=}2$; 814/588/553 at $N{=}4$; peaks 83.66--84.40 throughout).

\subsection{Activation checkpointing at the anchor}

\begin{center}\small
\begin{tabular}{@{}l rr@{}}
\toprule
method & peak & ms \\
\midrule
ddp\_fused $+$ AC & 70.46 & 848 \\
forge\_dp $+$ AC & 55.57 & 868 \\
fsdp2 $+$ AC & 17.83 & 1334 \\
forge\_fsdp $+$ AC & 18.20 & 2298 \\
\bottomrule
\end{tabular}
\end{center}

\subsection{Constant-$BT$ probe at DP8 (micro-batch $\times$ sequence $=4096$)}

\begin{center}\small
\begin{tabular}{@{}l rr rr@{}}
\toprule
& \multicolumn{2}{c}{ddp\_fused} & \multicolumn{2}{c}{forge\_dp} \\
(bs, seq) & peak & ms & peak & ms \\
\midrule
(1, 4096) & 98.91 & 810 & 84.02 & 774 \\
(4, 1024) & 98.91 & 797 & 84.02 & 801 \\
(8, 512) & 98.91 & 793 & 84.02 & 800 \\
\bottomrule
\end{tabular}
\end{center}

\subsection{Context parallelism (Ulysses, CP8)}

Reach on this node, self-contained attention LM, $d{=}2048$, 16 layers: \forge{} trains a 262{,}144-token context at 56.57~GB per rank (18.34~GB at 65{,}536 and 31.09 at 131{,}072; 537, 1562 and 5163~ms). The paired A/B against a materialized baseline, and the probes that say where the saving does and does not reach the peak, are on the second node (\S\ref{app:rtxcp}).

\subsection{Compositions: TP $\times$ CP $\times$ DP on Qwen3 (bf16-state both arms)}

Tensor, context and data parallelism wired together on real Qwen3 models, both arms sharing the TP and CP wiring and the bf16-state recipe so that only the update mechanism differs. Per-rank memory sits at parity within the bucket pool at every configuration, so the axis that separates the arms is step time (Figure~\ref{fig:suppcomp}): on this NVLink node the composed \forge{} arms run $1.12$--$1.56\times$ faster than their baselines from 4B up and at parity below. The companion programme on the PCIe node, where the same compositions run slower for the reasons named there, is \S\ref{app:rtxcomp}.

\begin{figure}[tbp]
\centering
\includegraphics[width=0.8\linewidth]{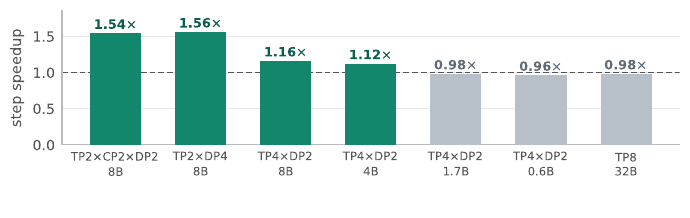}
\caption{TP$\times$CP$\times$DP compositions on 8$\times$H200 (bf16-state both arms): step speedup over the baseline; peak memory is at parity within the $+$0.37~GB pool at every configuration, and TP4$\times$CP2 at sequence 32k is a two-sided out-of-memory boundary (both arms). Peaks and step times per cell are in the artifact.}
\label{fig:suppcomp}
\end{figure}

\section{Distributed Results: 8$\times$RTX PRO 6000 (PCIe)}\label{app:rtx}

A second, independent node class: 8$\times$RTX PRO 6000 Blackwell Server Edition (96~GB GDDR7 per GPU), PCIe Gen5, no NVLink --- the interconnect- and capacity-constrained regime the H200 box does not probe. CUDA 12.8, PyTorch 2.11, Triton 3.6; protocol and recipe classes of Appendix~\ref{app:protocol2}; the two node classes are never merged in any table. Organized by technique: data parallelism (277 cells), megatron.core tensor parallelism (14 A/B cells plus a native-module exactness suite), Ulysses context parallelism, expert parallelism (seven A/B configurations), pipeline parallelism (16 cells), and TP$\times$CP$\times$DP compositions (22 cells). Every arm stores exactly what its baseline stores; comparisons are only drawn within one recipe $\times$ sharding class.

\paragraph{Two facts that cut across every subsection.} Which path runs is set by Proposition~\ref{prop:dist}, not by the implementation: tensor and pure expert parallelism leave a shard's gradient complete, so the fused kernel runs unmodified, while data, context and composed parallelism make it a partial sum over tokens, where the bucket coordinator is the schedule an exact update requires. Both arms in every harness share every component but the update mechanism, and \forge{} issues one optimizer launch per parameter per bucket against a fused multi-tensor call, so its step times are an upper bound and its memory numbers are not affected. Timing and OOM discipline: \S\ref{app:provenance}.

\paragraph{Reading the tables.} Plain numbers are measured. ``OOM'' marks an out-of-memory observation that reproduced on cards verified empty (\S\ref{app:provenance} states the OOM discipline); a dash marks a cell that is not part of the recorded sweep.

With 8 GPUs, $\mathrm{TP}\cdot\mathrm{CP}\cdot\mathrm{PP}\cdot\mathrm{DP}=2^3$, so the parallelism space is the ordered factorizations of $2^3$ into four slots --- twenty configurations, of which ten are informative:

\begin{center}\footnotesize
\setlength{\tabcolsep}{4pt}
\begin{tabular}{@{}r l l@{}}
\toprule
\# & configuration & role \\
\midrule
1 & TP8 (megatron.core) & exactness regression \\
2 & DP8 & the isolated new mechanism \\
3 & CP8 @ 32k--128k & long sequence \\
4 & PP4$\times$DP2 & pipeline \\
5 & TP4$\times$DP2 & composition \\
6 & TP2$\times$DP4 & DP-heavy composition \\
7 & TP4$\times$CP2 @ 8k/16k & CP composition \\
8 & TP2$\times$CP2$\times$DP2 & three-way \\
9 & EP8 ($E{=}16$) & pure expert parallelism \\
10 & EP4$\times$DP2 ($E{=}8$) & coordinator unlock \\
\bottomrule
\end{tabular}
\end{center}

\noindent A/B loss agreement across all ten configurations: $\max|\Delta\mathrm{loss}| \le 3.8\times10^{-4}$ at identical initialization.

\subsection{Data parallelism}\label{app:rtxdp}

\begin{figure}[tbp]
\centering
\includegraphics[width=0.9\linewidth]{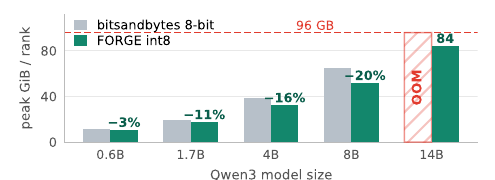}
\caption{\forge{} int8 vs.\ bitsandbytes 8-bit under identical DP8 wiring (8$\times$RTX PRO 6000, 96~GB): smaller and faster at every size; at 14B bitsandbytes is out of memory while \forge{} trains at 84~GB.}
\label{fig:dpint8supp}
\end{figure}

One process tree per cell; 277 unique cells (method $\times$ model $\times$ batch $\times$ sequence $\times$ world size) over the axes below. Arm names are written out; the recipe and the sharding scope decide which arms may be compared, and each class has one baseline.

\begin{center}\small
\begin{tabular}{@{}l l@{}}
\toprule
axis & values \\
\midrule
model & Llama-160M/1B/3B; Qwen3-0.6B/1.7B/4B/8B/14B \\
micro-batch & 1, 2, 4, 8 \\
sequence & 512, 1024, 2048, 4096, 8192, 16384, 32768 \\
world size & 1, 2, 4, 6, 8 \\
gradient bucket & 1, 4, 16, 64, 125 (default), 256~MB \\
activation checkpointing & off, on \\
optimizer state & fp32, bf16, int8 (an fp8 arm was not run) \\
\bottomrule
\end{tabular}
\end{center}

\noindent \forge{}-FSDP shards the fused linears and leaves embeddings and norms replicated, a handicap ZeRO-3 and FSDP2 do not carry: 2.92~GB per rank at Qwen3-8B, $N{=}8$, by allocator decomposition. Every fully sharded number below is net of it, so the class is reported conservatively.

\paragraph{Anchor cell: Qwen3-8B, DP8, micro-batch 2, sequence 2048.} On 96~GB the anchor inverts the H200 picture: replicated DDP no longer fits, and \forge{}-DP is the only bf16-state replicated method inside the budget.

\begin{figure}[tbp]
\centering
\includegraphics[width=\linewidth]{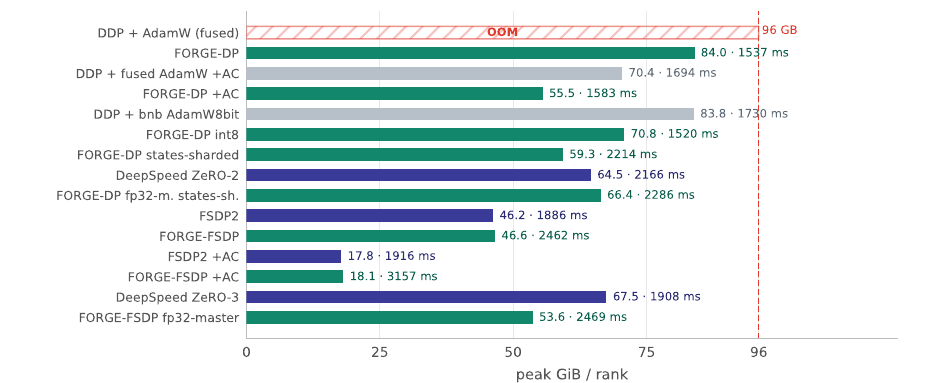}
\caption{Anchor cell, every arm (label: peak GB $\cdot$ ms per step). \forge{} arms green, sharded baselines blue, replicated baselines grey; hatched red = out of memory. The fp32-master replicated pair (DDP mixed-precision and \forge{}-DP fp32-master) is also out of memory on this card and is drawn once as the DDP row.}
\label{fig:suppdpanchor}
\end{figure}

\noindent The two \forge{} replicated arms are the two fastest arms on the node (1520 and 1537~ms against 1886 for the best sharded baseline): on a PCIe box sharding's collectives are expensive, and deleting the gradient pool keeps the cheap replicated schedule viable. \forge{}-DP$+$AC beats its baseline on both axes (14.89~GB smaller, 111~ms faster).

\paragraph{Model axis (DP8, micro-batch 2, sequence 2048; peak GB / ms per step).}

\begin{center}\footnotesize
\setlength{\tabcolsep}{3.5pt}
\begin{tabular}{@{}l lllll@{}}
\toprule
arm & 0.6B & 1.7B & 4B & 8B & 14B \\
\midrule
\multicolumn{6}{@{}l}{\emph{bf16-state, replicated}} \\
DDP $+$ AdamW (fused) & 20.28 / 175 & 33.06 / 396 & 61.66 / 869 & OOM & OOM \\
\forge{}-DP & 19.54 / 175 & 30.23 / 387 & 54.54 / 823 & 83.97 / 1537 & OOM \\
\addlinespace[2pt]
\multicolumn{6}{@{}l}{\emph{bf16-state, fully sharded}} \\
FSDP2 & 16.57 / 178 & 22.15 / 434 & 35.34 / 991 & 46.23 / 1886 & 65.30 / 3359 \\
\forge{}-FSDP & 17.01 / 281 & 23.02 / 691 & 36.36 / 1374 & 46.60 / 2462 & 65.78 / 4279 \\
\bottomrule
\end{tabular}
\end{center}

\noindent At 14B only the fully sharded pair fits; the fp32-master, int8, and states-sharded model ladders are the dedicated grids below.

\paragraph{Micro-batch axis (Qwen3-8B, DP8, sequence 2048; micro-batch 2 is the anchor).}

\begin{center}\footnotesize
\setlength{\tabcolsep}{3.5pt}
\begin{tabular}{@{}l llll@{}}
\toprule
arm & mbs 1 & mbs 2 & mbs 4 & mbs 8 \\
\midrule
\multicolumn{5}{@{}l}{\emph{bf16-state, replicated}} \\
DDP $+$ AdamW (fused) & 79.96 / 1441 & OOM & OOM & OOM \\
\forge{}-DP & 65.07 / 1390 & 83.97 / 1537 & OOM & OOM \\
\addlinespace[2pt]
\multicolumn{5}{@{}l}{\emph{bf16-state, fully sharded}} \\
FSDP2 & 27.32 / 1800 & 46.23 / 1886 & 84.04 / 2118 & OOM \\
\forge{}-FSDP & 27.69 / 2143 & 46.60 / 2462 & 84.41 / 3245 & OOM \\
\bottomrule
\end{tabular}
\end{center}

\noindent At micro-batch 1 the replicated comparison is two-sided: \forge{}-DP is 14.89~GB smaller \emph{and} 51~ms faster; at micro-batch 2 the baseline is already over the card while \forge{}-DP holds 12~GB of headroom.

\paragraph{Sequence axis (Qwen3-8B, DP8, micro-batch 2; 2048 is the anchor).}

\begin{center}\footnotesize
\setlength{\tabcolsep}{3.5pt}
\begin{tabular}{@{}l lllll@{}}
\toprule
arm & 512 & 1024 & 2048 & 4096 & 8192--32768 \\
\midrule
\multicolumn{6}{@{}l}{\emph{bf16-state, replicated}} \\
DDP $+$ AdamW (fused) & 70.50 / 1351 & 79.96 / 1440 & OOM & OOM & OOM \\
\forge{}-DP & 55.61 / 1316 & 65.07 / 1389 & 83.97 / 1537 & OOM & OOM \\
\addlinespace[2pt]
\multicolumn{6}{@{}l}{\emph{bf16-state, fully sharded}} \\
FSDP2 & 18.08 / 1762 & 27.32 / 1802 & 46.23 / 1886 & 84.04 / 2110 & OOM \\
\forge{}-FSDP & 18.24 / 1968 & 27.69 / 2147 & 46.60 / 2462 & 84.41 / 3292 & OOM \\
\bottomrule
\end{tabular}
\end{center}

\noindent Replicated DDP carries to 1k and \forge{}-DP one octave further to 2k; full sharding extends to 4k; every dense-attention arm is out by 8k --- the regime context parallelism owns (\S\ref{app:rtxcp}).

\paragraph{World-size axis (Qwen3-8B, micro-batch 2, sequence 2048; $N{=}8$ is the anchor).}

\begin{center}\footnotesize
\setlength{\tabcolsep}{3.5pt}
\begin{tabular}{@{}l llll@{}}
\toprule
arm & $N{=}1$ & $N{=}2$ & $N{=}4$ & $N{=}8$ \\
\midrule
\multicolumn{5}{@{}l}{\emph{bf16-state, replicated}} \\
DDP $+$ AdamW (fused) & OOM & OOM & OOM & OOM \\
\forge{}-DP & 83.97 / 792 & 83.97 / 873 & 83.97 / 1200 & 83.97 / 1537 \\
\addlinespace[2pt]
\multicolumn{5}{@{}l}{\emph{bf16-state, fully sharded}} \\
FSDP2 & 85.92 / 999 & 63.40 / 981 & 51.95 / 1429 & 46.23 / 1886 \\
\forge{}-FSDP & 83.60 / 888 & 62.46 / 1477 & 51.88 / 2002 & 46.60 / 2462 \\
\bottomrule
\end{tabular}
\end{center}

\noindent \forge{}-DP's per-rank peak is independent of replica count, as the bucket-transient schedule predicts, and the replicated baseline is out of memory at every $N$ including $N{=}1$: this cell simply does not fit under DDP on 96~GB at any world size.

\paragraph{Gradient-bucket axis (Qwen3-8B, DP8, micro-batch 2, sequence 2048).} \forge{}'s bucket is its transient gradient pool; DDP's \texttt{bucket\_cap\_mb} is matched per cell. The DDP arm is out of memory at every bucket size at this configuration, so the axis is a \forge{}-internal tuning sweep with no baseline column.

\begin{center}\small
\begin{tabular}{@{}l rrrrr@{}}
\toprule
\forge{}-DP & 1~MB & 4~MB & 16~MB & 64~MB & 256~MB \\
\midrule
peak GB/rank & 83.61 & 83.61 & 83.65 & 83.79 & 84.35 \\
ms/step & 3754 & 1889 & 1637 & 1580 & 1535 \\
\bottomrule
\end{tabular}
\end{center}

\noindent The default in every other cell is 125~MB (1537~ms at the anchor); 256~MB is the fast point (756~ms), and memory is essentially flat in the bucket size.

\paragraph{Constant-$BT$ probe (DP8, micro-batch $\times$ sequence $=4096$ tokens).} \forge{}-DP holds 83.97~GB at $(1,\,4096)$, $(4,\,1024)$, and $(8,\,512)$ --- 1541 / 1537 / 1534~ms --- while DDP $+$ AdamW (fused) is out of memory at all three: the single-GPU constant-$BT$ law reproduced under data parallelism.

\paragraph{Llama-family recipe-class sweeps ($N{=}4$, the earlier programme).} Llama-shaped configurations at $N{=}4$, used for the batch-size and sequence-length scaling laws; peak GB per rank, class baselines marked \emph{(base)}, classes with no standard counterpart are \forge{}-only configurations.

\emph{Model axis (batch 1, sequence 512, $N{=}4$).}

{\footnotesize
\setlength{\tabcolsep}{4pt}
\begin{longtable}{@{}l ll@{}}
\toprule
arm & 1B & 3B \\
\midrule
\endfirsthead
\toprule
arm & 1B & 3B \\
\midrule
\endhead
\multicolumn{3}{@{}l}{\emph{bf16-state, replicated}} \\
DDP $+$ AdamW (foreach) & 14.10 & 33.73 \\
DDP $+$ AdamW (fused) \emph{(base)} & 12.88 & 29.67 \\
\forge{}-DP & 10.46 & 23.18 \\
\addlinespace[2pt]
\multicolumn{3}{@{}l}{\emph{bf16-state, fully sharded}} \\
FSDP2 \emph{(base)} & 4.94 & 9.86 \\
\forge{}-FSDP & 4.93 & 9.69 \\
\addlinespace[2pt]
\multicolumn{3}{@{}l}{\emph{fp32-master, replicated}} \\
\forge{}-DP (fp32 master) & 19.66 & 47.11 \\
\addlinespace[2pt]
\multicolumn{3}{@{}l}{\emph{fp32-master, states sharded}} \\
DeepSpeed ZeRO-1 & 9.93 & 23.69 \\
DeepSpeed ZeRO-2 \emph{(base)} & 9.93 & 23.69 \\
\forge{}-DP (fp32 master, states sharded) & 9.31 & 20.21 \\
\addlinespace[2pt]
\multicolumn{3}{@{}l}{\emph{fp32-master, fully sharded}} \\
FSDP2 (mixed precision) & 8.17 & 17.25 \\
DeepSpeed ZeRO-3 \emph{(base)} & 10.05 & 22.48 \\
\forge{}-FSDP (fp32 master) & 7.23 & 15.65 \\
\addlinespace[2pt]
\multicolumn{3}{@{}l}{\emph{int8, replicated}} \\
\forge{}-DP (int8 moments) & 8.30 & 17.62 \\
\addlinespace[2pt]
\multicolumn{3}{@{}l}{\emph{\forge{}-only configurations}} \\
\forge{}-DP (states sharded) & 7.01 & 14.21 \\
\forge{}-DP (int8 moments, states sharded) & 6.47 & 12.85 \\
\forge{}-FSDP (int8 moments) & 4.39 & 8.30 \\
\bottomrule
\end{longtable}}

\noindent Class-fair savings at 1B / 3B: bf16-state replicated $18.8$ / $21.9\%$; fp32-master states-sharded vs.\ ZeRO $6.2$ / $14.7\%$; fp32-master fully sharded vs.\ ZeRO-3 $28.1$ / $30.4\%$; bf16-state fully sharded $0.2$ / $1.7\%$ --- the saving grows with model size in every class.

\emph{Batch-size axis (Llama-1B, sequence 512, $N{=}4$; $BT = 512\cdot\mathrm{bs}$).}

\begin{center}\footnotesize
\setlength{\tabcolsep}{4pt}
\begin{tabular}{@{}l ll@{}}
\toprule
arm & bs 1 & bs 4 \\
\midrule
\multicolumn{3}{@{}l}{\emph{bf16-state, replicated}} \\
DDP $+$ AdamW (foreach) & 14.10 & 17.96 \\
DDP $+$ AdamW (fused) \emph{(base)} & 12.88 & 17.96 \\
\forge{}-DP & 10.46 & 15.54 \\
\addlinespace[2pt]
\multicolumn{3}{@{}l}{\emph{bf16-state, fully sharded}} \\
FSDP2 \emph{(base)} & 4.94 & 9.98 \\
\forge{}-FSDP & 4.93 & --- \\
\addlinespace[2pt]
\multicolumn{3}{@{}l}{\emph{fp32-master, replicated}} \\
\forge{}-DP (fp32 master) & 19.66 & --- \\
\addlinespace[2pt]
\multicolumn{3}{@{}l}{\emph{fp32-master, states sharded}} \\
DeepSpeed ZeRO-1 & 9.93 & 13.77 \\
DeepSpeed ZeRO-2 \emph{(base)} & 9.93 & 13.77 \\
\forge{}-DP (fp32 master, states sharded) & 9.31 & --- \\
\addlinespace[2pt]
\multicolumn{3}{@{}l}{\emph{fp32-master, fully sharded}} \\
FSDP2 (mixed precision) & 8.17 & 12.07 \\
DeepSpeed ZeRO-3 \emph{(base)} & 10.05 & 14.95 \\
\forge{}-FSDP (fp32 master) & 7.23 & --- \\
\addlinespace[2pt]
\multicolumn{3}{@{}l}{\emph{int8, replicated}} \\
\forge{}-DP (int8 moments) & 8.30 & --- \\
\addlinespace[2pt]
\multicolumn{3}{@{}l}{\emph{\forge{}-only configurations}} \\
\forge{}-DP (states sharded) & 7.01 & --- \\
\forge{}-DP (int8 moments, states sharded) & 6.47 & --- \\
\forge{}-FSDP (int8 moments) & 4.39 & --- \\
\bottomrule
\end{tabular}
\end{center}

\noindent The batch and sequence axes coincide value for value at equal $BT=\mathrm{batch}\times\mathrm{sequence}$ --- the paper's constant-$BT$ law under data parallelism --- so the sequence axis is not reprinted.

\emph{DP-degree axis (Llama-1B, batch 1, sequence 512).}

\begin{center}\footnotesize
\setlength{\tabcolsep}{4pt}
\begin{tabular}{@{}l lllll@{}}
\toprule
arm & $N{=}1$ & $N{=}2$ & $N{=}4$ & $N{=}6$ & $N{=}8$ \\
\midrule
\multicolumn{6}{@{}l}{\emph{bf16-state, replicated}} \\
DDP $+$ AdamW (foreach) & 14.10 & --- & 14.10 & 14.10 & --- \\
DDP $+$ AdamW (fused) \emph{(base)} & 12.88 & 12.88 & 12.88 & 12.88 & 12.88 \\
\forge{}-DP & 10.46 & 10.46 & 10.46 & 10.46 & 10.46 \\
\addlinespace[2pt]
\multicolumn{6}{@{}l}{\emph{bf16-state, fully sharded}} \\
FSDP2 \emph{(base)} & 12.41 & 7.31 & 4.94 & 4.23 & 3.87 \\
\forge{}-FSDP & 10.16 & 6.67 & 4.93 & --- & 4.06 \\
\addlinespace[2pt]
\multicolumn{6}{@{}l}{\emph{fp32-master, replicated}} \\
DDP $+$ mixed-precision AdamW \emph{(base)} & 28.54 & --- & --- & 28.54 & 28.54 \\
\forge{}-DP (fp32 master) & 19.66 & --- & 19.66 & 19.66 & 19.66 \\
\addlinespace[2pt]
\multicolumn{6}{@{}l}{\emph{fp32-master, states sharded}} \\
DeepSpeed ZeRO-1 & 30.84 & 16.89 & 9.93 & 8.47 & --- \\
DeepSpeed ZeRO-2 \emph{(base)} & 30.84 & 16.89 & 9.93 & 8.47 & --- \\
\forge{}-DP (fp32 master, states sharded) & 19.66 & --- & 9.31 & 8.79 & --- \\
\addlinespace[2pt]
\multicolumn{6}{@{}l}{\emph{fp32-master, fully sharded}} \\
FSDP2 (mixed precision) & --- & --- & 8.17 & --- & --- \\
DeepSpeed ZeRO-3 \emph{(base)} & 28.32 & 15.74 & 10.05 & 8.19 & --- \\
\forge{}-FSDP (fp32 master) & 19.36 & --- & 7.23 & --- & --- \\
\bottomrule
\end{tabular}
\end{center}

\begin{center}\footnotesize
\setlength{\tabcolsep}{4pt}
\begin{tabular}{@{}l lllll@{}}
\toprule
arm & $N{=}1$ & $N{=}2$ & $N{=}4$ & $N{=}6$ & $N{=}8$ \\
\midrule
\multicolumn{6}{@{}l}{\emph{int8, replicated}} \\
DDP $+$ bitsandbytes AdamW8bit \emph{(base)} & 10.13 & --- & --- & 10.13 & --- \\
\forge{}-DP (int8 moments) & 8.30 & --- & 8.30 & 8.30 & --- \\
\addlinespace[2pt]
\multicolumn{6}{@{}l}{\emph{SR-store, replicated (\forge{}-only)}} \\
\forge{}-DP (SR store) & 10.46 & 10.46 & --- & 10.46 & --- \\
\addlinespace[2pt]
\multicolumn{6}{@{}l}{\emph{SR-store, fully sharded (\forge{}-only)}} \\
\forge{}-FSDP (SR store) & 10.16 & 6.67 & --- & --- & --- \\
\addlinespace[2pt]
\multicolumn{6}{@{}l}{\emph{\forge{}-only configurations}} \\
\forge{}-DP (states sharded) & 10.46 & 8.16 & 7.01 & 6.83 & --- \\
\forge{}-DP (int8 moments, states sharded) & --- & --- & 6.47 & 6.37 & --- \\
\forge{}-FSDP (int8 moments) & 8.00 & 5.59 & 4.39 & --- & --- \\
\bottomrule
\end{tabular}
\end{center}

\noindent Replicated-class peaks are $N$-independent; sharded classes amortize with $N$, which is why the fp32-master states-sharded verdict flips between $N{=}4$ (9.31 vs.\ 9.93, \forge{} $6.2\%$ smaller) and $N{=}6$ (8.79 vs.\ 8.47, $3.8\%$ larger). The int8 class is $18.1\%$ smaller than bitsandbytes wherever both run.

\paragraph{Measured memory decomposition.}\label{app:decomp} Peak per-rank memory decomposed into persistent state (enumerated tensor by tensor) and step transient, by differencing \texttt{memory\_allocated} at fixed points in the step; batch~1, sequence~2048, $N{=}8$, GB.

\begin{figure}[tbp]
\centering
\includegraphics[width=0.8\linewidth]{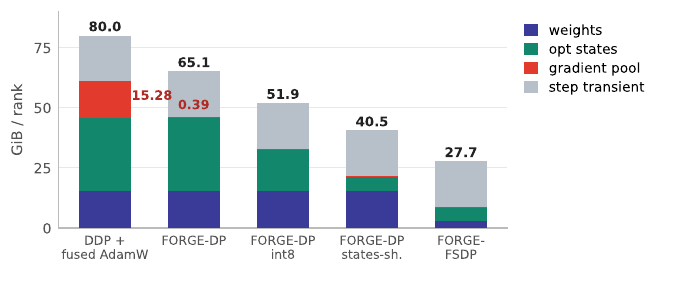}
\caption{Allocator decomposition at Qwen3-8B ($N{=}8$): stacked persistent state (weights, optimizer states, gradient pool) plus the shared step transient; the number over each bar is the measured peak. DDP's pool is weight-sized (15.28~GB) and \forge{}'s is the 0.39--0.51~GB bucket at every size; each further deletion (int8 states, state sharding, full sharding) composes on top. The Qwen3-1.7B decomposition, the single-GPU reference row, and the activation-crest and gradient probes are tabulated in the artifact.}
\label{fig:suppdecomp}
\end{figure}

\paragraph{The fp32-master grid ($N{=}4$, micro-batch 1, sequence 2048).} The class where the baseline carries the most bytes per parameter and \forge{} the fewest, run as a five-model ladder; peak GB/rank.

\begin{figure}[tbp]
\centering
\includegraphics[width=\linewidth]{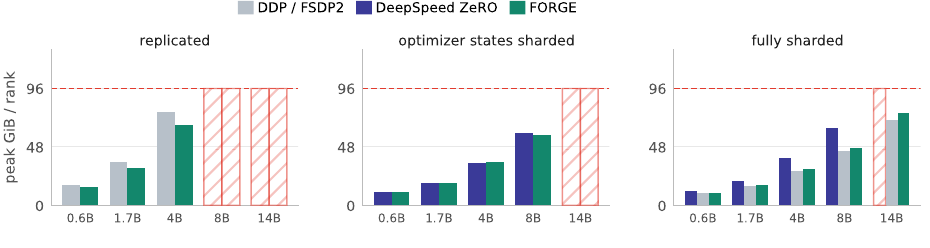}
\caption{The fp32-master grid ($N{=}4$, micro-batch 1, sequence 2048), peak GB/rank by sharding class; hatched red = out of memory; ZeRO-1 peaks equal ZeRO-2 and are drawn once. Step times for every cell are in the artifact.}
\label{fig:suppfp32m}
\end{figure}

\noindent In the replicated class \forge{} is $11.3$--$14.4\%$ smaller and both arms exceed the card from 8B; in the states-sharded class \forge{} is the smallest arm from 8B ($58.03$ vs.\ $59.08$); against ZeRO-3 it is $10.1$--$26.2\%$ smaller and trains the 14B where ZeRO-3 is out of memory. FSDP2 sits below \forge{}-FSDP by the replicated-embedding handicap quantified above, not by the mechanism.

\paragraph{Full-sharding accounting.} What each fully sharded arm holds per managed parameter at the matched fp32-master recipe; sharded terms divide by the world size. All three arms hold the fp32 master, so the gaps between them are gradient machinery and sharding coverage, not the master.

\begin{center}\footnotesize
\setlength{\tabcolsep}{4pt}
\begin{tabular}{@{}l l l l@{}}
\toprule
component & DeepSpeed ZeRO-3 & FSDP2 (mixed precision) & \forge{}-FSDP (fp32 master) \\
\midrule
bf16 compute weight & 2~B, sharded & 2~B, sharded & 2~B; linears sharded \\
fp32 master & 4~B, sharded & 4~B, sharded & 4~B, sharded \\
fp32 moments & 8~B, sharded & 8~B, sharded & 8~B, sharded \\
persistent gradient & partitioned buffer & freed as reduced & none; bucket pool 0.39--0.51~GB \\
embeddings, norms & sharded & sharded & replicated ($\approx$3.6~GB/rank at 8B) \\
\bottomrule
\end{tabular}
\end{center}

\noindent The 53.69-vs-67.51 anchor gap is therefore not a master deletion --- every arm above holds one --- but persistent-gradient machinery and runtime buffers, net of \forge{}'s replicated-embedding handicap. Where the master itself matters is recipe range: \forge{} also trains the bf16-state and int8 layouts that drop it entirely (46.60~GB at 8B; 70.55 at 32B).

\paragraph{int8 grid: \forge{} vs.\ bitsandbytes across the ladder ($N{=}2$, micro-batch 1, sequence 2048).} Both arms hold block-quantized int8 moments; cells are peak GB/rank / ms per step.

\begin{center}\footnotesize
\setlength{\tabcolsep}{3.5pt}
\begin{tabular}{@{}l lllll@{}}
\toprule
arm & 0.6B & 1.7B & 4B & 8B & 14B \\
\midrule
DDP $+$ bitsandbytes AdamW8bit & 11.30 / 116 & 19.80 / 208 & 38.49 / 430 & 64.94 / 806 & OOM \\
\forge{}-DP (int8 moments) & 10.93 / 102 & 17.66 / 174 & 32.42 / 354 & 51.92 / 648 & 84.49 / 1138 \\
\bottomrule
\end{tabular}
\end{center}

\noindent \forge{} is smaller and faster at every size --- $3.3$ / $10.8$ / $15.8$ / $20.0\%$ less memory at 0.6--8B --- and at 14B bitsandbytes is out of memory while \forge{} trains in 84.49~GB.

\paragraph{int8 with activation checkpointing at long sequence (Qwen3-8B, $N{=}2$, micro-batch 1).} Checkpointing removes most of the forward crest, which is the condition under which the persistent parameter-side term sets the peak --- structurally the most favourable regime for the mechanism. Cells are peak GB/rank / ms per step.

\begin{center}\footnotesize
\setlength{\tabcolsep}{4pt}
\begin{tabular}{@{}l lll@{}}
\toprule
arm & 2048 & 4096 & 8192 \\
\midrule
DDP $+$ bitsandbytes AdamW8bit & 64.94 / 806 & 83.84 / 1024 & OOM \\
\forge{}-DP (int8 moments) & 51.92 / 648 & 70.81 / 892 & OOM \\
DDP $+$ AdamW (fused) $+$AC & --- & 70.41 / 1079 & 79.78 / 2231 \\
\forge{}-DP $+$AC & --- & 55.53 / 1118 & 64.89 / 2297 \\
\bottomrule
\end{tabular}
\end{center}

\noindent The int8 pair extends the sequence axis until both arms exceed the card; the $+$AC pair carries past it, and at 4{,}096 the checkpointed comparison is two-sided and wide: 55.53 against 70.41~GB, a 14.9~GB saving at near-equal step time.

\paragraph{Stochastic rounding: the no-master arm compared (Qwen3-0.6B, $N{=}4$, micro-batch 4, sequence 512, 250 measured steps).} The SR-store recipe keeps no fp32 master: the bf16 weight is the only copy, committed by stochastic rounding --- bf16-state bytes aimed at fp32-master quality. The question is which no-master arm stays closest to the fp32-master trajectory.

\begin{center}\footnotesize
\setlength{\tabcolsep}{4pt}
\begin{tabular}{@{}l l r r r@{}}
\toprule
arm & recipe & peak & mean loss, last 20 & dev.\ from fp32 master \\
\midrule
DDP $+$ mixed-precision AdamW & fp32 master (reference) & 16.81 & 11.93283 & --- \\
DDP $+$ AdamW (fused) & bf16 store (RNE, no master) & 12.37 & 11.91591 & 5.33e-02 \\
\forge{}-DP & bf16 store (RNE, no master) & 11.63 & 11.92013 & 5.30e-02 \\
\forge{}-DP (SR store) & SR store (no master) & 11.63 & 11.92923 & \textbf{2.21e-02} \\
\bottomrule
\end{tabular}
\end{center}

\noindent Stochastic rounding tracks the fp32-master trajectory about twice as closely as plain bf16 storage at identical memory --- the deviation table above is the claim.

\paragraph{Caveats specific to data parallelism.} The DP arm runs the coordinator path --- architecturally forced, since the update cannot precede the cross-rank sum --- with a bf16 staging bucket matching DDP's wire dtype. \forge{} issues one optimizer launch per parameter per bucket and \forge{}-FSDP all-gathers synchronously where FSDP2 prefetches, so \forge{} step times are an upper bound on its cost. ZeRO-1 and ZeRO-2 measure identically here and read as one baseline for the states-sharded class. Memory numbers are conservatively fair.

\subsection{Tensor parallelism (megatron.core)}\label{app:rtxtp}

Two independent harnesses: (1)~\forge{} inside the column- and row-parallel linear modules of megatron.core (pinned commit), both arms from identical initial weights, Megatron's asynchronous TP collectives untouched; (2)~an exactness-only suite over the native fused modules at TP${=}2$ and~$4$. Fourteen A/B cells; with pure expert parallelism, one of the two settings where the fused kernel itself executes.

\paragraph{The bf16-state recipe, quiet node.} Cells measured with the node exclusively ours; step spread is (slowest $-$ fastest)\,/\,median over the measured steps, printed per cell.

\begin{center}\footnotesize
\setlength{\tabcolsep}{3.5pt}
\begin{tabular}{@{}rrrrr rr rr r r@{}}
\toprule
TP & $L$ & hidden & seq & mbs & base GB & \forge{} GB & base ms & \forge{} ms & speedup & spread \\
\midrule
2 & 24 & 2048 & 1024 & 4 & 18.89 & 18.80 & 223 & 219 & $1.02\times$ & 1\% \\
2 & 24 & 4096 & 512 & 1 & 23.77 & 20.67 & 89 & 75 & $1.18\times$ & 13\% \\
4 & 24 & 2048 & 1024 & 4 & 7.85 & 7.85 & 162 & 157 & $1.03\times$ & 2\% \\
4 & 24 & 2560 & 512 & 1 & 4.82 & 4.52 & 65 & 67 & $0.97\times$ & 19\% \\
4 & 24 & 4096 & 512 & 1 & 11.89 & 10.57 & 71 & 63 & $1.12\times$ & 21\% \\
8 & 24 & 1024 & 1024 & 4 & 2.71 & 2.71 & 86 & 86 & $1.00\times$ & 6\% \\
8 & 24 & 2048 & 1024 & 4 & 5.87 & 5.87 & 167 & 165 & $1.01\times$ & 4\% \\
8 & 32 & 2560 & 1024 & 4 & 9.32 & 9.34 & 277 & 271 & $1.02\times$ & 2\% \\
\bottomrule
\end{tabular}
\end{center}

\noindent A mechanism-only comparison (no master on either side): on a quiet node the step-time ratio is $1.00$--$1.03\times$ across every cell whose spread is at the $1$--$6\%$ level; the three cells with $13$--$21\%$ spread scatter more widely ($0.97$--$1.18\times$) and no ratio is claimed from them. Activation-set cells (micro-batch 4) are at memory parity, necessarily; in the parameter-set cells the baseline's persistent weight gradient is visible even at matched bf16 states ($23.77\to20.67$, $11.89\to10.57$~GB). The controlled memory claim is the fp32-master table below.

\paragraph{The fp32-master recipe on both arms.} Both arms keep the fp32 master and fp32 moments (Megatron's 18~B/param layout); the only difference is that \forge{}'s managed TP linears never materialize a persistent weight gradient. Unmanaged parameters (embeddings, norms, the tied output layer) run byte-identical \texttt{MixedPrecisionAdamW} on both arms.

\begin{center}\footnotesize
\setlength{\tabcolsep}{3pt}
\begin{tabular}{@{}rrrrr r l rr r r@{}}
\toprule
TP & $L$ & hidden & seq & mbs & par./rank & peak set by & base GB & \forge{} GB & mem.\ $\Delta$ & speedup \\
\midrule
2 & 24 & 2048 & 1024 & 4 & 670M & activations & 19.11 & 19.12 & $+0.0\%$ & $1.09\times$ \\
2 & 24 & 2560 & 512 & 1 & 1026M & \textbf{parameters} & 21.31 & \textbf{16.51} & $\mathbf{-22.5\%}$ & $1.53\times$ \\
2 & 24 & 4096 & 512 & 1 & 2548M & \textbf{parameters} & 52.36 & \textbf{39.65} & $\mathbf{-24.3\%}$ & $1.50\times$ \\
4 & 24 & 2048 & 1024 & 4 & 335M & activations & 11.66 & 11.66 & $-0.0\%$ & $1.06\times$ \\
4 & 24 & 2560 & 512 & 1 & 513M & \textbf{parameters} & 12.77 & \textbf{10.44} & $\mathbf{-18.2\%}$ & $1.16\times$ \\
4 & 24 & 4096 & 512 & 1 & 1274M & \textbf{parameters} & 26.20 & \textbf{20.06} & $\mathbf{-23.4\%}$ & $1.59\times$ \\
\bottomrule
\end{tabular}
\end{center}

\noindent The split is the result, not noise: where the forward crest sets the peak the saving is $0.0\%$; where parameter-side state sets it, $18$--$24\%$ at $1.16$--$1.59\times$ --- the same condition the context-parallelism probes measure directly (\S\ref{app:rtxcp}).

\paragraph{Both arms checked against a full-fp32 reference.} A reference mode trains an identical all-fp32 model from the same initial weights and data and measures each arm's drift from it:

\begin{center}\small
\begin{tabular}{@{}l r r l@{}}
\toprule
cell & mean $|$base $-$ fp32 ref$|$ & mean $|$\forge{} $-$ fp32 ref$|$ & verdict \\
\midrule
TP4, hidden 2048 & 4.32e-04 & 3.79e-04 & equivalent \\
TP4, hidden 2560 & 9.00e-04 & 1.04e-03 & equivalent \\
\bottomrule
\end{tabular}
\end{center}

\paragraph{Exactness.} Megatron A/B: max$|\Delta$loss$|$ $1.5$--$3.8\times10^{-4}$ across TP $\in\{2,4,8\}$ and three model sizes at identical initialization. Native TP module suite: the Column$\to$Row pair against a dense single-GPU reference at TP${=}2$ and $4$ --- loss identical across ranks (exact), RMS $\Delta W \approx 2.5\times10^{-3}$.

\paragraph{Caveats specific to tensor parallelism.} The shipped kernel defaults are the fastest configuration on this node (the TMA path is numerically identical and $31\%$ slower). These cells are pure TP with no DP dimension, so the fp32-master percentages read as the saving at DP${=}1$; sequence parallelism was off in every recorded cell.

\subsection{Context parallelism (Ulysses)}\label{app:rtxcp}

A self-contained attention LM under Ulysses context parallelism: all-to-alls exchange sequence sharding for head sharding, so attention is exact over the full sequence while the linear layers remain a data-parallel problem --- each rank holds a partial weight gradient summed across the CP group. The baseline is the identical model with \texttt{nn.Linear} $+$ AdamW and all-reduced gradients; same bf16-state recipe, shapes, and data.

\paragraph{Paired results (CP8; $d{=}3072$, 20 layers, 24 heads, batch 1).}

\begin{center}\small
\setlength{\tabcolsep}{4pt}
\begin{tabular}{@{}r rrr rrr@{}}
\toprule
sequence & base GB & \forge{} GB & $\Delta$ & base ms & \forge{} ms & speedup \\
\midrule
4{,}096 & 18.49 & \textbf{15.44} & $-$3.05 & 450 & 417 & $1.08\times$ \\
16{,}384 & 18.92 & 19.29 & $+$0.37 & 639 & 581 & $1.10\times$ \\
32{,}768 & 24.07 & 24.44 & $+$0.37 & 974 & 870 & $1.12\times$ \\
65{,}536 & 34.37 & 34.73 & $+$0.37 & 1837 & 1666 & $1.10\times$ \\
131{,}072 & 54.96 & 55.33 & $+$0.37 & 4644 & 4367 & $1.06\times$ \\
262{,}144 & \multicolumn{6}{c}{both arms OOM} \\
\bottomrule
\end{tabular}
\end{center}

\paragraph{Why the sign flips: measured probes at 16{,}384.} Both arms instrumented at fixed points in the step (GB):

\begin{center}\small
\setlength{\tabcolsep}{4pt}
\begin{tabular}{@{}l rrrrr@{}}
\toprule
arm & persistent & after forward & after backward & gradients live & peak \\
\midrule
baseline & 14.06 & 18.44 (crest $+$4.38) & 18.86 & 4.68 & 18.92 \\
\forge{} & 14.42 & 18.80 (crest $+$4.38) & \textbf{14.83} & \textbf{0.28} & 19.29 \\
\bottomrule
\end{tabular}
\end{center}

\noindent \forge{} removes the gradients it claims (4.68 to 0.28~GB at end of backward), but the peak sits at the forward activation crest, where no gradient exists in either arm: at 4{,}096 tokens gradients dominate and the deletion shows as $-$3.05~GB; from 16{,}384 the crest dominates and only the $+$0.37~GB pool remains, flat across four sequence lengths.

\paragraph{CP${=}4$ ladder, with and without activation checkpointing.} Both arms at every point, same bf16-state recipe; the $+$AC rows recompute each block in backward.

\begin{center}\small
\setlength{\tabcolsep}{4pt}
\begin{tabular}{@{}r l rrr@{}}
\toprule
sequence & AC & base GB/rank & \forge{} GB/rank & $\Delta$ \\
\midrule
4{,}096 & no & 18.52 & \textbf{16.65} & $-$1.87 \\
16{,}384 & no & 23.79 & 24.16 & $+$0.37 \\
32{,}768 & no & 33.80 & 34.17 & $+$0.37 \\
65{,}536 & no & 53.83 & 54.20 & $+$0.37 \\
65{,}536 & yes & 23.80 & 24.17 & $+$0.37 \\
131{,}072 & no & \multicolumn{3}{c}{both arms OOM} \\
131{,}072 & yes & 33.82 & 34.19 & $+$0.37 \\
262{,}144 & yes & 53.87 & 54.24 & $+$0.37 \\
\bottomrule
\end{tabular}
\end{center}

\noindent \emph{Activation checkpointing does not change the sign.} The delta is $+$0.37~GB at every sequence length, with and without checkpointing --- the identical pool offset. The probes say why:

\begin{center}\footnotesize
\setlength{\tabcolsep}{4pt}
\begin{tabular}{@{}r l l rrrrr@{}}
\toprule
sequence & AC & arm & persistent & after fwd & after bwd & grads live & peak \\
\midrule
65{,}536 & no & baseline & 14.90 & 49.93 & 20.84 & 4.96 & 53.83 \\
65{,}536 & yes & baseline & 14.90 & 19.89 & 20.84 & 4.96 & 23.80 \\
131{,}072 & yes & baseline & 16.03 & 26.01 & 23.31 & 5.34 & 33.82 \\
65{,}536 & no & \forge{} & 15.27 & 50.29 & 16.80 & \textbf{0.56} & 54.20 \\
65{,}536 & yes & \forge{} & 15.27 & 20.26 & 16.80 & \textbf{0.56} & 24.17 \\
131{,}072 & yes & \forge{} & 16.39 & 26.38 & 19.28 & \textbf{0.93} & 34.19 \\
\bottomrule
\end{tabular}
\end{center}

\noindent Gradients live fall $4.96\to0.56$~GB at 65k and $5.34\to0.93$ at 131k, but the peak never occurs at that moment: without checkpointing it is the forward crest, and with checkpointing it moves to a block-recompute transient both arms pay identically. The operative condition: \emph{the persistent gradient has to be larger than the largest shared transient.} Here the model is ${\sim}$2.5B, gradients are ${\sim}$5~GB, and \forge{} pays the 0.37~GB pool; at Qwen3-8B under data parallelism the same comparison goes the other way by 14.9~GB (\S\ref{app:rtxdp}) --- the two results bracket the condition.

\paragraph{Exactness.} CP1 gate: Ulysses CP${=}2$ and $4$ vs.\ dense single-GPU, loss gap $\leq 1.2\times10^{-4}$; the paired arms agree to $10^{-4}$ at every measured sequence.

\paragraph{Caveats specific to context parallelism.} The baseline reduces gradients with a per-parameter synchronous \texttt{all\_reduce}; the \forge{} arm runs the coordinator path (\S\ref{app:rtxdp}). The pool was not tuned for this regime: the $+$0.37~GB penalty is the pool ($3\times125$~MB buckets), and shrinking it would take the penalty toward zero.

\subsection{Expert parallelism (MoE)}\label{app:rtxep}

A mixture-of-experts LM ($d{=}2048$, 12 layers) with expert MLPs sharded across the EP group; routing is deterministic round-robin, which balances the all-to-alls and does not affect the mechanism under test. \emph{Pure EP (DP${=}1$):} each expert's gradient is rank-local and complete, so the real fused kernel executes. \emph{EP$\times$DP:} expert replicas hold partial gradients and the coordinator supplies the sum. Fourteen cells, seven configurations measured on both arms; the baseline is the identical model with \texttt{nn.Linear} experts $+$ AdamW.

\begin{center}\small
\setlength{\tabcolsep}{4pt}
\begin{tabular}{@{}l r rrr rrr@{}}
\toprule
config & $E$ & base GB & \forge{} GB & $\Delta$ & base ms & \forge{} ms & ratio \\
\midrule
EP8$\times$DP1 & 8 & 8.87 & 8.87 & $-$0.00 & 110 & 112 & $0.98\times$ \\
EP8$\times$DP1 & 16 & 11.12 & 11.12 & $-$0.00 & 118 & 117 & $1.01\times$ \\
EP8$\times$DP1 & 32 & 15.62 & 15.62 & $-$0.00 & 130 & 124 & $1.05\times$ \\
EP4$\times$DP1 & 16 & 15.62 & 15.62 & $-$0.00 & 118 & 113 & $1.04\times$ \\
EP4$\times$DP1 & 32 & 26.80 & \textbf{24.62} & $\mathbf{-2.18}$ & 140 & 125 & $1.12\times$ \\
EP4$\times$DP1 & 64 & 50.80 & \textbf{42.62} & $\mathbf{-8.18}$ & 185 & 146 & $\mathbf{1.26\times}$ \\
EP4$\times$DP2 & 8 & 11.12 & 11.49 & $+$0.37 & 272 & 563 & $0.48\times$ \\
\bottomrule
\end{tabular}
\end{center}

\noindent Where activations set the peak the fused update costs nothing (EP8, parity to 0.01~GB); as experts pack the rank the deleted gradient emerges, $-$0.00 $\to$ $-$2.18 $\to$ $-$8.18~GB ($-$16.1\%) from $E{=}16$ to $E{=}64$ at $1.04$--$1.26\times$ the baseline's speed. Both arms run the fused kernel with no gradient reduction at all, which makes these the cleanest timed comparisons in the programme. Adding a data-parallel dimension moves the granule down one rung by Proposition~\ref{prop:dist}(ii): the coordinator costs $+$0.37~GB and $0.48\times$ against a baseline whose per-parameter synchronous \texttt{all\_reduce} is not a production schedule either.

\paragraph{Exactness.} EP1 gate: pure-EP MoE at EP${=}2$ and $4$ vs.\ an all-local reference, loss gap $\leq 4.9\times10^{-5}$.

\paragraph{Caveats specific to expert parallelism.} The EP$\times$DP cell reduces gradients with a per-parameter synchronous \texttt{all\_reduce}. The router is round-robin rather than learned top-$k$, which changes traffic but not which gradients are complete on which rank --- the property under test; the published Qwen3-MoE architectures were not run.

\subsection{Pipeline parallelism}\label{app:pp}

A GPipe pipeline: all micro-batch forwards (stashing activations), then all backwards, with stage boundaries exchanging hidden states and stages split by layer count. Pipelining pays only when the pipeline stays full, which requires micro-batch gradient accumulation; under PP \forge{} therefore runs its exact-accumulation mode --- an fp32 gradient-shaped accumulator, stepped once at the final micro-batch, at 4~B per parameter where the baseline's bf16 buffer costs 2. The baseline is plain \texttt{nn.Linear} $+$ AdamW with gradients accumulating across micro-batches.

Across the sixteen cells, at $d{=}2048$ the penalty is a constant $+$0.55~GB, amortizing with micro-batch count ($+20\%$ at $m{=}2$ to $+6\%$ at $m{=}16$); against an fp32-master baseline the accounting favours \forge{} (16~B/param vs.\ 10). Per-cell peaks and step times are in the artifact.

\paragraph{The accumulator dtype is the whole difference.} The exact-accumulation mode holds the micro-batch accumulator in fp32 against the baseline's bf16 \texttt{.grad} --- strictly more accurate, and charged for it: $+$2.63~GB ($+11.0\%$) with fp32 accumulation, cut to $+$0.63~GB ($+2.6\%$, the residual bucket cost) at matched bf16 accumulation (PP4, $d{=}4096$, 8 micro-batches; losses 8.7228 / 8.7207 / 8.7222). Pipeline parallelism offers a choice rather than a penalty: pay 2~B/param for better-than-baseline accumulation, or match it exactly for $2.6\%$.

\paragraph{Why exact accumulation is the only correct configuration.} Under a stashing schedule the earlier micro-batches' autograd graphs still reference the pre-update weight, so an in-place update on any micro-batch before the last is structurally invalid; exact accumulation is not one option of two.

\paragraph{Caveats specific to pipeline parallelism.} \emph{Schedule:} both arms share a hand-written GPipe schedule and a per-parameter synchronous \texttt{all\_reduce} over the data-parallel group, so the comparison is like for like on both axes. \emph{No 1F1B schedule:} GPipe stashes every micro-batch's activations, which inflates activation memory for both arms equally.

\subsection{Compositions: TP $\times$ CP $\times$ DP}\label{app:rtxcomp}

Real Qwen3 models with three mechanisms wired together: megatron-style TP, Ulysses CP as an attention interface, and the \forge{} coordinator over the CP$\times$DP slice --- exactly the group holding partial gradients. Both arms share the TP and CP wiring and the bf16-state recipe; only the update mechanism differs. Twenty-two cells.

\paragraph{Results (Qwen3-8B unless noted).} Composition works as Proposition~\ref{prop:dist} splits it: the TP shards step locally on complete gradients while the coordinator completes the sum over exactly the CP$\times$DP slice that holds partials, and losses stay in family at every configuration including the three-way TP2$\times$CP2$\times$DP2. Across the twenty-two cells the arms sit at memory parity within the bucket pool, $+$0.10--0.42~GB, which is the expected result at these shapes because the peak is the activation crest. The constant-$BT$ law appears here as two independent measurements that agree: TP4$\times$CP2 at batch~1 / sequence 16k and at batch~2 / sequence 8k both process 16{,}384 tokens per rank and both land at 62.01 / 62.38~GB; at batch~2 / sequence 16k both arms exceed the card.

\paragraph{Caveats specific to compositions.} \emph{Step time} on this PCIe node is up to $2.6\times$ the baseline's, from two implementation causes rather than the mechanism: the tensor-parallel-sharded output head exceeds the gradient bucket and is row-chunked, and per-bucket optimizer kernels serialize against 2--4-rank reduction groups. Neither is present in the data-parallel path here, and the same compositions on NVLink run $1.12$--$1.56\times$ \emph{faster} than their baselines (Appendix~\ref{app:h200}), which is what identifies the cause as the interconnect and the bucket rather than the schedule. The baseline reduces gradients with a per-parameter synchronous \texttt{all\_reduce}; PP is a separate harness (\S\ref{app:pp}).

\subsection{RTX gate suite}\label{app:rtxgates}

Run on this node before any benchmark cell (all pass):

\begin{center}\footnotesize
\setlength{\tabcolsep}{4pt}
\begin{tabular}{@{}l l l@{}}
\toprule
gate & scope & result \\
\midrule
T0 & pipeline vs.\ same-kernel reference & bit-exact, $\max|\Delta W|=0$ \\
T1 & replica coherence, $N{=}2/4$, replicated and sharded states & drift $=0.0$ \\
T2 & \forge{}-DP over shards vs.\ single-GPU full batch & loss gap $\le 6\times10^{-7}$ \\
MW & fp32-master arms vs.\ torch mixed precision & loss gap $6\times10^{-7}$ \\
F1--F3 & FSDP-\forge{} vs.\ dense / gather coherence / fp32 master & $4.8\times10^{-7}$ / $0.0$ / $1.3\times10^{-6}$ \\
SR & stochastic-rounding microcheck (dead zone, unbiasedness) & RNE drift 0.000; SR $-$8.195e-4 vs.\ $-$8.0e-4 target \\
CP1 & Ulysses CP${=}2/4$ vs.\ dense single-GPU & loss gap $\le 1.2\times10^{-4}$ \\
EP1 & pure-EP MoE at EP${=}2/4$ vs.\ all-local reference & loss gap $\le 4.9\times10^{-5}$ \\
MG & megatron TP $\in\{2,4,8\}$ A/B, identical init & max$|\Delta$loss$|$ $1.5$--$3.8\times10^{-4}$ \\
TP & native TP module suite (Column$\to$Row vs.\ dense, TP${=}2/4$) & loss identical across ranks; RMS $\Delta W \approx$ 2.5e-3 \\
-- & single-GPU suite & 17/17 pass \\
\bottomrule
\end{tabular}
\end{center}

\subsection{Timing and OOM discipline}\label{app:provenance}

\paragraph{Step times.} Quoted only from idle-node cells (per-step spread at the 1\% level); shared-node cells contribute memory and exactness columns only, since peak memory is shape-determined.

\paragraph{OOM cells.} Every cell is gated on free device memory before launch and an OOM is retried once before it is recorded. The one cell that failed at setup rather than warmup, DDP $+$ mixed-precision AdamW at Qwen3-14B, $N{=}4$, is a boundary by arithmetic: 18~B/param $\times$ 14.77~B params is 266~GB per rank on a 96~GB card.

\section{Cross-Platform Capability}\label{app:platforms}

This appendix is arithmetic on measured footprints, not a measurement campaign. Peak training footprint is shape-deterministic and platform-invariant --- Appendix~\ref{app:grids1} measures the same grids on H200, H100 and B200 and peak agrees to within rounding wherever two platforms both run --- so the footprints transfer and can be set against each platform's budget at eight GPUs. The optimizer-side term is an exact byte count; the activation term is calibrated on those sweeps to $\pm$2--3 percentage points at sequence 4096, sequence parallelism on, recomputation off. Step time does not transfer, so the only timing statement carried across platforms is a measured one: the megatron-core A/B ratio is $1.00$--$1.03\times$ across the low-variance tensor-parallel cells on a quiet node (Appendix~\ref{app:rtx}).

\paragraph{Recipe basis, and what each part of it buys.} The baseline column is the standard mixed-precision layout at 16~B per managed parameter --- bf16 weight 2, fp32 master 4, fp32 moments 8, bf16 gradient buffer 2 --- and the \forge{} column is the bf16-state recipe at 6~B: bf16 weight 2, bf16 moments 4, no master, no gradient buffer. Of the 10~B between them, \emph{2 are the gradient the mechanism deletes and 8 are the state recipe its fidelity results license} (Appendices~\ref{app:precision}, \ref{app:fidelity}, \ref{app:convergence}), and the two should be read separately. At matched bf16 state on both sides the mechanism alone is 2~B of 8, the $25\%$ ceiling the single-GPU and cross-architecture ladders converge on (Appendices~\ref{app:grids1}, \ref{app:archgen}); what the tables below price is the end-to-end consequence of adopting both, which is the choice a practitioner actually makes. Recipe-matched, same-platform, measured comparisons are Appendices~\ref{app:h200}--\ref{app:rtx}.

\paragraph{Platforms.} Vendor specifications; RTX PRO 6000 Server Edition has no NVLink, so tensor parallelism above~2 is bandwidth-bound there.

\begin{center}\small
\setlength{\tabcolsep}{4pt}
\begin{tabular}{@{}l rrl r@{}}
\toprule
GPU & memory & bandwidth & interconnect & BF16 \\
\midrule
A100-40GB SXM & 40~GB & 1.56~TB/s & NVLink 3, 600~GB/s & 312~TF \\
A100-80GB SXM & 80~GB & 2.04~TB/s & NVLink 3, 600~GB/s & 312~TF \\
H200 SXM & 141~GB & 4.8~TB/s & NVLink 4, 900~GB/s & 990~TF \\
RTX PRO 6000 SE & 96~GB & 1.60~TB/s & PCIe Gen5 only & 500~TF \\
B200 SXM & 180~GB & 8.0~TB/s & NVLink 5, 1.8~TB/s & 2250~TF \\
B300 SXM & 288~GB & 8.0~TB/s & NVLink 5, 1.8~TB/s & 2250~TF \\
\bottomrule
\end{tabular}
\end{center}

\paragraph{Per-rank footprint at eight GPUs (platform-independent).} Micro-batch~1, sequence 4096; dense models under TP8, mixture-of-experts under their KV ceiling as TP1$\times$EP8. These per-rank bytes are the platform-invariant quantity; each platform then either holds them or does not:

\begin{center}\small
\begin{tabular}{@{}l l rrrr@{}}
\toprule
model & split & baseline & \forge{} & saved & \% \\
\midrule
Qwen3-0.6B & TP8 & 4.0 & 3.5 & 0.5 & 13 \\
Qwen3-1.7B & TP8 & 7.0 & 5.4 & 1.6 & 23 \\
Qwen3-4B & TP8 & 13.0 & 8.8 & 4.2 & 33 \\
Qwen3-8B & TP8 & 22.0 & 13.2 & 8.8 & 40 \\
Qwen3-14B & TP8 & 36.7 & 20.5 & 16.3 & 44 \\
Qwen3-32B & TP8 & 78.9 & 41.7 & 37.2 & 47 \\
Qwen3-30B-A3B & TP1$\times$EP8 & 107.4 & 62.4 & 45.1 & 42 \\
Qwen3-235B-A22B & TP1$\times$EP8 & 647.0 & 314.4 & 332.6 & 51 \\
\bottomrule
\end{tabular}
\end{center}

\noindent The MoE per-rank numbers are large because expert parallelism concentrates weights, not because anything grew: at TP1$\times$EP8 each rank holds the full attention/shared stack plus $16$ of the $128$ experts --- about 40B parameters per rank at Qwen3-235B-A22B, hence 647~GB baseline and 314~GB under \forge{}. No eight-GPU machine of any generation holds that, which is exactly what the dash row in the capability table below says; at that scale the operative statement is the smallest-cluster table at the end of this appendix, where the same arithmetic is what cuts the required cluster by $4\times$.

\paragraph{Largest micro-batch that fits.} Eight GPUs, sequence 4096, baseline / \textbf{\forge{}}; a dash means the configuration does not fit at any batch size:

\begin{center}\footnotesize
\setlength{\tabcolsep}{3.5pt}
\begin{tabular}{@{}l cccccc@{}}
\toprule
model & A100-40 & A100-80 & H200 & RTX PRO & B200 & B300 \\
\midrule
Qwen3-0.6B & 25 / \textbf{26} & 53 / \textbf{54} & 64 / \textbf{64} & 64 / \textbf{64} & 64 / \textbf{64} & 64 / \textbf{64} \\
Qwen3-1.7B & 14 / \textbf{15} & 31 / \textbf{31} & 56 / \textbf{57} & 37 / \textbf{38} & 64 / \textbf{64} & 64 / \textbf{64} \\
Qwen3-4B & 7 / \textbf{8} & 16 / \textbf{17} & 31 / \textbf{32} & 20 / \textbf{21} & 43 / \textbf{44} & 64 / \textbf{64} \\
Qwen3-8B & 4 / \textbf{5} & 11 / \textbf{13} & 22 / \textbf{24} & 14 / \textbf{15} & 31 / \textbf{33} & 49 / \textbf{50} \\
Qwen3-14B & 1 / \textbf{3} & 6 / \textbf{8} & 13 / \textbf{15} & 8 / \textbf{10} & 19 / \textbf{21} & 31 / \textbf{33} \\
Qwen3-32B & --- / --- & --- / \textbf{3} & 4 / \textbf{6} & 1 / \textbf{4} & 7 / \textbf{9} & 12 / \textbf{15} \\
Qwen3-30B-A3B & --- / --- & --- / \textbf{1} & 1 / \textbf{3} & --- / \textbf{1} & 3 / \textbf{5} & 6 / \textbf{8} \\
Qwen3-235B-A22B & --- / --- & --- / --- & --- / --- & --- / --- & --- / --- & --- / --- \\
\bottomrule
\end{tabular}
\end{center}

\paragraph{Smallest cluster for the mixture-of-experts models.} At MoE scale the meaningful quantity is how much hardware a run requires: the smallest GPU count that fits at micro-batch~1, sequence 4096, searching TP$\times$EP splits within each model's KV-head TP ceiling (4 for both MoE models). The MoE rows assume pure expert parallelism, where each expert lives on exactly one rank and its local gradient is complete --- the configuration measured at EP8 with no overhead (Appendix~\ref{app:rtx}); under expert replication those weights need the bucket coordinator instead.

\begin{center}\small
\setlength{\tabcolsep}{4pt}
\begin{tabular}{@{}l l rrl@{}}
\toprule
model & platform & baseline & \forge{} & ratio \\
\midrule
Qwen3-30B-A3B & A100-40GB & 128 (TP2$\times$EP64) & \textbf{32} (TP2$\times$EP16) & $4\times$ fewer \\
Qwen3-30B-A3B & A100-80GB & 32 (TP1$\times$EP32) & \textbf{8} (TP1$\times$EP8) & $4\times$ fewer \\
Qwen3-30B-A3B & H200 & 8 & \textbf{8} & --- \\
Qwen3-30B-A3B & RTX PRO 6000 & 16 (TP1$\times$EP16) & \textbf{8} (TP1$\times$EP8) & $2\times$ fewer \\
Qwen3-30B-A3B & B200 / B300 & 8 & \textbf{8} & --- \\
Qwen3-235B-A22B & A100-40GB & $>$512 & $>$512 & --- \\
Qwen3-235B-A22B & A100-80GB & $>$512 & \textbf{256} (TP4$\times$EP64) & --- \\
Qwen3-235B-A22B & H200 & 256 (TP4$\times$EP64) & \textbf{64} (TP2$\times$EP32) & $4\times$ fewer \\
Qwen3-235B-A22B & RTX PRO 6000 & 512 (TP4$\times$EP128) & \textbf{128} (TP4$\times$EP32) & $4\times$ fewer \\
Qwen3-235B-A22B & B200 & 128 (TP2$\times$EP64) & \textbf{32} (TP2$\times$EP16) & $4\times$ fewer \\
Qwen3-235B-A22B & B300 & 64 (TP2$\times$EP32) & \textbf{16} (TP1$\times$EP16) & $4\times$ fewer \\
\bottomrule
\end{tabular}
\end{center}

\noindent Activation recomputation raises every relative figure. The optimizer-side bytes removed by mechanism and recipe together translate directly into a smaller cluster, a factor of four at Qwen3-235B-A22B on four of the six platforms.

\section{Cross-Architecture Generalization: ViT, Mamba-2, and MLP-Mixer}\label{app:archgen}

This appendix tests the architecture-independence claim --- that the fused step reads only the identity $\nabla_{\mathbf{W}}\mathcal{L}=\dY^{\!\top}\mathbf{X}$ --- on the three families furthest from the dense decoder LLM: a pure vision transformer (ViT), a selective-scan state-space model (Mamba-2, no attention), and an all-MLP Mixer (no attention and almost nothing but linear layers; \S\ref{app:pmixer}). All cells run on one H200 (141~GB) under the bf16-state recipe on both arms (bf16 weights and states, no fp32 master; $8$~B/param, of which \forge{} removes the $2$~B gradient on a fused layer and int8 halves the two state tensors), at random initialization (peaks and step times are shape-deterministic; losses checked finite). Step times are medians of five \emph{round-robin} rounds with the within-arm spread recorded, so noise-dominated cells are discarded rather than quoted. The same kernel runs unmodified across all three families; only the module-matching front-end differs (below).

\subsection{Coverage Across Architectures}\label{app:pcov}

\forge{} manages a weight when it is a single-consumer linear layer \emph{and} the model reaches it through \texttt{Module.forward}. ViT satisfies both for essentially every parameter; Mamba-2's optimized \texttt{mamba\_ssm} path hands \texttt{out\_proj} directly to a fused scan kernel, so the achieved fraction plateaus near $0.66$ --- below, not above, a dense transformer's $0.92$.

\begin{center}\small
\begin{tabular}{@{}l l r r@{}}
\toprule
family & model & params & achieved fraction \\
\midrule
Mamba-2 & 130m & 168M & 0.599 \\
Mamba-2 & 780m & 857M & 0.645 \\
Mamba-2 & 1.3b & 1.45B & 0.650 \\
Mamba-2 & 2.7b & 2.83B & 0.657 \\
Mamba-2 & 7b & 6.96B & 0.661 \\
Mamba-2 & 13b & 13.3B & 0.664 \\
Mamba-2 & 20b & 20.8B & 0.665 \\
\addlinespace[2pt]
ViT & base--32b & 87M--33B & 0.990--1.000 \\
\bottomrule
\end{tabular}
\end{center}

\subsection{Peak Memory and Capability}\label{app:pmem}

\paragraph{Peak memory vs.\ model scale (batch 1; GB).} The saving grows with model size toward the $25\%$/$50\%$ ceilings as the fixed activation floor becomes negligible, and int8 trains sizes both other arms cannot fit. The ladder is a family of width- and depth-scaled ViT configurations anchored at ViT-G/14 (1.8B) and extending past the 21.7B of ViT-22B (Dehghani et al.\ 2023), the largest published ViT; the Mamba-2 ladder extends the released 130M--2.7B family the same way. Peak and step time are shape-deterministic, so no pretrained checkpoint is involved or needed.

\begin{figure}[tbp]
\centering
\includegraphics[width=\linewidth]{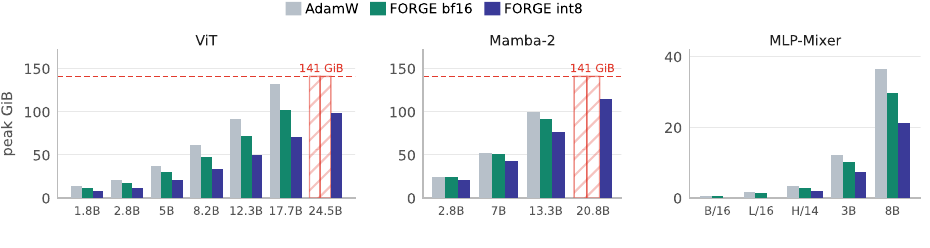}
\caption{Peak memory vs.\ model scale (batch 1, H200): AdamW vs.\ \forge{} bf16 vs.\ \forge{} int8 across the ViT, Mamba-2, and MLP-Mixer ladders; hatched red = out of memory. int8 trains ViT-25b (98.22~GB) and Mamba2-20b (114.64~GB), sizes both other arms cannot fit.}
\label{fig:supparch}
\end{figure}

\paragraph{Peak memory vs.\ batch (GB).} The bf16 saving is the transient gradient and is swamped once activations dominate (gone by batch~8); the int8 saving is persistent optimizer state and decays far more slowly. That difference is the operative one for choosing an arm: the transient saving prices the gradient, the int8 saving prices the states.

\begin{figure}[tbp]
\centering
\includegraphics[width=0.72\linewidth]{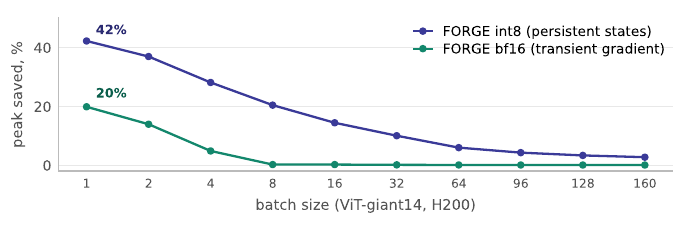}
\caption{Peak saved vs.\ batch on ViT-giant14: the bf16 saving is the transient gradient and is gone once activations dominate (by batch~8), while the int8 saving is persistent optimizer state and decays far more slowly. ViT-huge14 and the Mamba-2 ladders reproduce the shape and are tabulated in the artifact.}
\label{fig:suppbatchdecay}
\end{figure}

\subsection{Resident Optimizer State}\label{app:pstate}

Measured after warmup with gradients released, isolating state (GB; activation-independent). Under the bf16-state recipe both arms hold bf16 moments (4~B/param), so bf16-arm parity here is expected, not a null result --- \forge{}'s bf16-state saving is the transient gradient, which this metric deliberately excludes; int8 is where resident state moves, by $29$--$46\%$.

\begin{center}\small
\setlength{\tabcolsep}{5pt}
\begin{tabular}{@{}l rrr r@{}}
\toprule
model & AdamW & \forge{} bf16 & \forge{} int8 & int8 saving \\
\midrule
Mamba2-370m & 1.627 & 1.627 & 1.162 & 28.6\% \\
Mamba2-780m & 3.257 & 3.257 & 2.292 & 29.6\% \\
Mamba2-1.3b & 5.463 & 5.454 & 3.819 & 30.1\% \\
Mamba2-2.7b & 10.644 & 10.617 & 7.365 & 30.8\% \\
ViT-base16 & 0.393 & 0.407 & 0.238 & 39.6\% \\
ViT-large16 & 1.198 & 1.198 & 0.668 & 44.2\% \\
ViT-huge14 & 2.429 & 2.428 & 1.363 & 43.9\% \\
ViT-giant14 & 6.936 & 6.936 & 3.719 & 46.4\% \\
\bottomrule
\end{tabular}
\end{center}

\subsection{Step Time and Phase Split}\label{app:ptime}

Medians of five interleaved rounds (ms). The transient-buffer arm is at parity ($0.98$--$1.03\times$); the fully-fused arm is slower ($0.73$--$0.87\times$) because its Triton weight-gradient mainloop lands in the backward pass and outweighs the optimizer saving. The optimizer phase alone falls $1.8$--$136\times$ with parameter count ($7.5\to0.07$~ms at ViT-giant14), which is mechanism evidence rather than an end-to-end claim. The end-to-end result at these cells is memory, and the reason is the operating condition of the main paper's Section~3.4 rather than the architecture: these are batch 4--128 vision and state-space cells, where the backward sets the step, against the small-$BT$ language-model cells where the optimizer phase does.

\begin{center}\small
\begin{tabular}{@{}l rrrr@{}}
\toprule
cell & baseline & \forge{}-2k & \forge{}-bf16 & \forge{}-int8 \\
\midrule
mamba2-1.3b bs4 & 256.2 & 256.4 ($0.999\times$) & 306.9 ($0.835\times$) & 272.5 ($0.940\times$) \\
vit-giant14 bs32 & 226.7 & 231.3 ($0.980\times$) & 309.5 ($0.732\times$) & 265.9 ($0.853\times$) \\
vit-huge14 bs64 & 208.6 & 202.2 ($1.032\times$) & 260.0 ($0.802\times$) & 224.8 ($0.928\times$) \\
vit-large16 bs128 & 218.5 & 218.2 ($1.001\times$) & 263.4 ($0.830\times$) & 231.1 ($0.945\times$) \\
\bottomrule
\end{tabular}
\end{center}

\noindent Two further interleaved cells were measured and are reported for completeness: mamba2-780m bs4 (baseline 184.6; \forge{}-2k 184.7, $0.999\times$; \forge{}-bf16 213.3, $0.865\times$; \forge{}-int8 194.2, $0.951\times$) and vit-base16 bs128, whose within-arm spread (32.5\%) exceeds the 15\% gate and is therefore noise-dominated and not quoted.

\paragraph{Phase split.} Where the end-to-end result comes from: \forge{}'s optimizer phase nearly vanishes, and the fully-fused arm's cost reappears in the backward as the Triton weight-gradient mainloop --- on Mixer-8B, $21.68\to58.61$~ms of backward against $19.199\to0.078$~ms of optimizer (\S\ref{app:pmixer}). Phase-split cells are separately instrumented runs (per-phase event timing), so their totals differ from the interleaved medians above by a few percent and only within-run ratios are meaningful; the per-phase forward/backward/optimizer breakdown for every cell is in the artifact.

\paragraph{Optimizer phase vs.\ model size (ms).} The contraction scales with parameter count --- the baseline optimizer is launch- and bandwidth-bound over many tensors, while \forge{} folds the update into the weight-gradient epilogue. It is the expected win, and it does not carry to end to end; it is reported as mechanism evidence.

\begin{figure}[tbp]
\centering
\includegraphics[width=\linewidth]{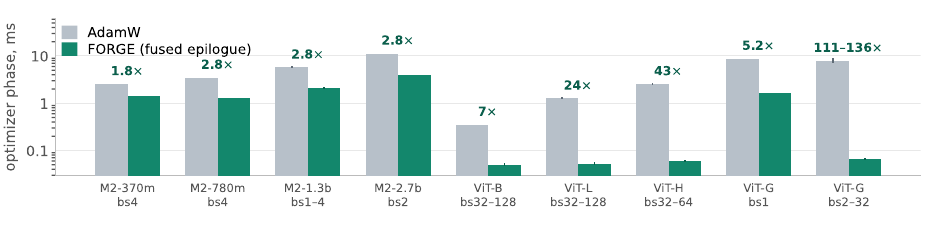}
\caption{Optimizer phase, AdamW vs.\ \forge{} (ms, log scale); whiskers span the measured range across batch sizes; labels give the contraction.}
\label{fig:suppoptph}
\end{figure}

\noindent The vit-giant14 bs1 outlier ($5.2\times$ rather than ${\sim}136\times$) is the fully-fused arm paying its Triton launch overhead where the step is launch-bound; from bs2 the epilogue amortizes and the contraction saturates.

\paragraph{Capability at batch 1.} ViT-25b and Mamba2-20b do not just allocate under \forge{}-int8, they step --- 334.4 and 981.3~ms respectively; at batch~1 the step is launch-bound, so the interleaved table above is the timing statement.

\subsection{Correctness and Bypassed-Module Handling}\label{app:pcorr}

Against a same-dtype (bf16) reference --- an fp32 reference is uninformative at this horizon, since the weights move less than the initial bf16 representation error --- the per-step deviation is consistent with the LLM path: on ViT $4.0\times10^{-2}$ of the update magnitude over 20 steps, on Mamba-2 $5.5\times10^{-3}$ at one step, matching a Qwen3-0.6B control ($5.2\times10^{-3}$). Multi-step Mamba-2 parity is not measurable on \emph{either} arm: \texttt{mamba\_ssm}'s backward uses atomics and is nondeterministic, and the recurrence amplifies that to six orders of magnitude of run-to-run spread with identical code, so the one-step deviation is the available statement.

\paragraph{Bypassed-module handling.} \forge{}'s contract is that a fused weight's only consumer is the fused epilogue, so the manager removes fused weights from the standard optimizer. Where a library bypasses \texttt{Module.forward} --- as \texttt{mamba\_ssm} does for 24 of 49 candidate modules --- autograd still produces a gradient that no path would apply, so the manager reverts any module the model never calls (\texttt{revert\_bypassed\_fusions}) back to the standard optimizer. This probe is empirical rather than architecture-specific, so it covers libraries not examined here; with it in place the Mamba-2 deviation sits at $0.0055\times$ the update magnitude, and Qwen3 and Llama have zero bypassed modules in both matching modes.

\subsection{MLP-Mixer at the Coverage Extreme}\label{app:pmixer}

\forge{}'s reach is bounded by the fraction of parameters in \texttt{nn.Linear} layers entered through \texttt{Module.forward}, and MLP-Mixer (Tolstikhin et al.\ 2021) is the extreme point of that axis: it replaces self-attention with a token-mixing MLP over the patch dimension and a channel-mixing MLP over features, both plain matrix multiplications, leaving only the patch-embedding convolution and the LayerNorms outside. It is therefore the clean test that \forge{} is a \emph{Linear-layer} method rather than an attention method --- there is no attention to exploit. One H200, PyTorch 2.12.1, CUDA 13.1, Triton 3.6, 2 warmup $+$ 5 measured steps per cell. \texttt{mixer-b16}/\texttt{-l16}/\texttt{-h14} follow the published specifications, landing within $+0.3$--$1.5\%$ of 59M/207M/431M on the 1000-way head and LayerNorms; \texttt{mixer-3b} and \texttt{mixer-8b} are width- and depth-scaled beyond the published sizes (hidden 2048/3072, 48/64 blocks) and labelled \emph{(scaled)}, as on the ViT and Mamba-2 ladders.

\begin{center}\small
\begin{tabular}{@{}l r r r r@{}}
\toprule
model & params & in \texttt{nn.Linear} & fraction & bypassed modules \\
\midrule
Mixer-B/16 & 59.88M & 59.20M & 0.9886 & 0 \\
Mixer-L/16 & 208.20M & 207.17M & 0.9951 & 0 \\
Mixer-H/14 & 432.35M & 431.20M & 0.9973 & 0 \\
Mixer-3B \emph{(scaled)} & 1.64B & 1.64B & 0.9987 & 0 \\
Mixer-8B \emph{(scaled)} & 4.89B & 4.89B & \textbf{0.9992} & 0 \\
\bottomrule
\end{tabular}
\end{center}

\noindent The highest reachable fraction in the study, against $0.990$--$1.000$ for ViT, $0.92$ for dense Qwen3 and $0.66$ \emph{achieved} for Mamba-2, with zero bypassed modules at every size, so nothing is lost between architectural and achieved reach. The low end of that range is an interface property and not an architectural one: a HuggingFace mixture-of-experts checkpoint as shipped reaches $0.04$, because its experts are stored as fused 3-D parameters rather than a list of \texttt{nn.Linear} and so never pass through \texttt{Module.forward} --- the same seam \texttt{mamba\_ssm} opens with its scan, at a different size.

\paragraph{Peak memory (batch 1).} The bf16-state recipe on both sides; the token count is fixed by patch geometry (196 for /16, 256 for /14), and the int8 arm is unavailable at /16 (blocked by the shape constraint below). The ladder is plotted in the right panel of Figure~\ref{fig:supparch}: 0.519/1.622/3.293/12.302/36.534~GB AdamW against 0.460/1.405/2.827/10.170/29.689 \forge{} bf16 ($-$11.4 to $-$18.7\%), with int8 at 2.078/7.310/21.189 from H/14 up ($-$36.9 to $-$42.0\%).

\noindent The saving grows monotonically with model size --- $11.4\to18.7\%$ for bf16 and $36.9\to42.0\%$ for int8 --- converging on the $25\%/50\%$ ceilings implied by the per-parameter budget as the fixed activation and workspace floor becomes negligible: the ViT ladder's trend reproduced on an architecture with no attention at all.

\paragraph{Resident optimizer state (GB; activation-independent).} Measured after warmup with gradients released. As in the ViT/Mamba table above, baseline--bf16 parity is by construction, not accident: under bf16-state both hold bf16 moments, and \forge{}'s bf16 saving is the transient gradient, which this metric deliberately excludes.

\begin{center}\small
\begin{tabular}{@{}l r r r r@{}}
\toprule
model & AdamW & \forge{} bf16 & \forge{} int8 & int8 saving \\
\midrule
Mixer-B/16 & 0.288 & 0.287 & --- & --- \\
Mixer-L/16 & 0.838 & 0.838 & --- & --- \\
Mixer-H/14 & 1.674 & 1.673 & 0.924 & $-$44.8\% \\
Mixer-3B \emph{(scaled)} & 6.172 & 6.172 & 3.312 & $-$46.3\% \\
Mixer-8B \emph{(scaled)} & 18.276 & 18.275 & 9.798 & $-$46.4\% \\
\bottomrule
\end{tabular}
\end{center}

\noindent int8 approaches the theoretical $-50\%$ (halving both state tensors), reaching $-46.4\%$ at 8B. The two \forge{} arms differ in kind, and the distinction is practical: bf16 removes a \emph{transient} buffer, visible only while the peak is parameter-dominated and washed out as activations grow; int8 removes \emph{persistent} state, so its saving is far more robust across operating points. For a memory-constrained deployment, int8 is the arm that pays.

\paragraph{Step time.} The optimizer phase nearly vanishes --- $19.199\to0.078$~ms at 8B ($246\times$) and $7.592\to0.062$~ms at 3B ($122\times$) --- the mechanism working exactly as designed: the update is folded into the weight-gradient epilogue and no longer exists as a separate pass over parameter memory. The contraction is mechanism evidence rather than an end-to-end speedup --- the epilogue's cost moves into the backward ($21.68\to58.61$~ms at 8B) and the net step is $0.70$--$1.30\times$ the baseline --- so on H200 the Mixer win is memory. \emph{Timing caveat:} cells are single runs with ${\sim}10\%$ within-arm spread in comparable interleaved measurements; the $122\times$/$246\times$ contractions sit far outside any noise band, and ratios below ${\sim}2\times$ read as within measurement resolution.

\paragraph{Correctness.} All 13 successful cells produced finite losses at every step, and baseline and \forge{} bf16 track as an algebraically equivalent update should: $7.566$ / $7.567$ at B/16, $7.462$ / $7.464$ at L/16, bitwise-equal reported values at H/14, 3B and 8B. The int8 arm separates by 2--3 nats, as quantized state does over this horizon. These are randomly initialized models on synthetic labels over a handful of steps, so the numbers establish finiteness and arm-to-arm agreement; the convergence evidence is Appendix~\ref{app:convergence}.

\paragraph{Scope condition on the int8 arm: $H \equiv 0 \pmod{64}$.} Mixer is also the architecture that exposes the quantization block size. The token-mixing MLP's \texttt{in\_features} is the token count, so patch-16 at $224{\times}224$ gives $196$, not a multiple of the 64-element block, and the int8 arm is unavailable at B/16 and L/16; patch-14 gives $256$ and quantizes without issue, which is why H/14, 3B and 8B all run. Padding the blocks or adding a tail path removes it. The bf16 arm is unaffected at every size.

\end{document}